\newcommand{\captionfonts}{\normalsize}
\long\def\@makecaption#1#2{%
  \vskip\abovecaptionskip
  \sbox\@tempboxa{{\captionfonts #1: #2}}%
  \ifdim \wd\@tempboxa >\hsize
    {\captionfonts #1: #2\par}
  \else
    \hbox to\hsize{\hfil\box\@tempboxa\hfil}%
  \fi
  \vskip\belowcaptionskip}
\newtheorem{theo}{Theorem}
\newtheorem{coro}{Corollary}
\newtheorem{lemma}{Lemma}
\newtheorem{assumption}{Assumption}
\newcommand{\cd}{\cdot}
\newcommand{\tx}{\tilde{x}}
\newcommand{\ty}{\tilde{y}}
\newcommand{\X}{{\mathcal{X}}}
\newcommand{\Y}{{\mathcal{Y}}}
\newcommand{\E}{{\bf{E}}}
\newcommand{\R}{\mathbb{R}}
\renewcommand{\H}{{\mathcal{H}}}
\newcommand{\tX}{{\tilde{X}}}
\renewcommand{\hm}{\hat{m}}
\newcommand{\bX}{\bar{X}}
\newcommand{\bY}{\bar{Y}}
\newcommand{\argmax}{\mathop{\rm arg~max}\limits}
\newcommand{\argmin}{\mathop{\rm arg~min}\limits}
\begin{document}
\hspace{13.9cm}1

\ \vspace{20mm}\\


{\LARGE Filtering with State-Observation Examples\\ via Kernel Monte Carlo Filter}

\ \\
{\bf \large Motonobu Kanagawa$^{\displaystyle 1, 2}$},\ {\bf \large Yu Nishiyama$^{\displaystyle 3}$},\ {\bf \large Arthur Gretton$^{\displaystyle 4}$},\\ {\bf \large Kenji Fukumizu$^{\displaystyle 1,\displaystyle 2}$}\\
{$^{\displaystyle 1}$SOKENDAI (The Graduate University for Advanced Studies), Tokyo.}\\
{$^{\displaystyle 2}$The Institute of Statistical Mathematics, Tokyo.}\\
{$^{\displaystyle 3}$The University of Electro-Communications, Tokyo.}\\
{$^{\displaystyle 4}$Gatsby Computational Neuroscience Unit, University College London.}\\
%

{\bf Keywords:} state-space models, filtering, kernel methods, kernel mean embeddings

\thispagestyle{empty}
\markboth{}{NC instructions}
\ \vspace{-0mm}\\
%
\begin{center} {\bf Abstract} \end{center}
This paper addresses the problem of filtering with a state-space model. 
Standard approaches for filtering assume that a probabilistic model for observations (i.e.\ the observation model) is given explicitly or at least parametrically. 
We consider a setting where this assumption is not satisfied; we assume that the knowledge of the observation model is only provided by examples of state-observation pairs.
This setting is important and appears when state variables are defined as quantities that are very different from the observations.
We propose Kernel Monte Carlo Filter, a novel filtering method that is focused on this setting.
Our approach is based on the framework of kernel mean embeddings, which enables nonparametric posterior inference using the state-observation examples.
The proposed method represents state distributions as weighted samples, propagates these samples by sampling, estimates the state posteriors by Kernel Bayes' Rule, and resamples by Kernel Herding. 
In particular, the sampling and resampling procedures are novel in being expressed using kernel mean embeddings, so we theoretically analyze their behaviors.
We reveal the following properties, which are similar to those of corresponding procedures in particle methods: (1) the performance of sampling can degrade if the effective sample size of a weighted sample is small; (2)  resampling improves the sampling performance by increasing the effective sample size.
We first demonstrate these theoretical findings by synthetic experiments. 
Then we show the effectiveness of the proposed filter by artificial and real data experiments, which include vision-based mobile robot localization.

\section{Introduction} \label{sec:intro}
Time-series data are ubiquitous in science and engineering. We often wish to extract useful information from such time-series data.
{\em State-space models} have been one of the most successful approaches for this purpose (see, e.g.,\ \cite{DurKoo12}).
Suppose that we have a sequence of observations $y_1,\dots,y_t,\dots,y_T$.
A state-space model assumes that for each observation $y_t$, there is a hidden state $x_t$ that generates $y_t$, and that these states $x_1, \dots, x_t,\dots,x_T$ follow a Markov process (see Figure \ref{fig:problem_setting}).
Therefore the state-space model is characterized by two components: (1) {\em observation model} $p(y_t|x_t)$,  the conditional distribution of an observation given a state, and (2) {\em transition model} $p(x_t|x_{t-1})$, the conditional distribution of a state given the previous one.

\begin{figure}[t]
\begin{center}
	\includegraphics[width=0.50\columnwidth, clip]{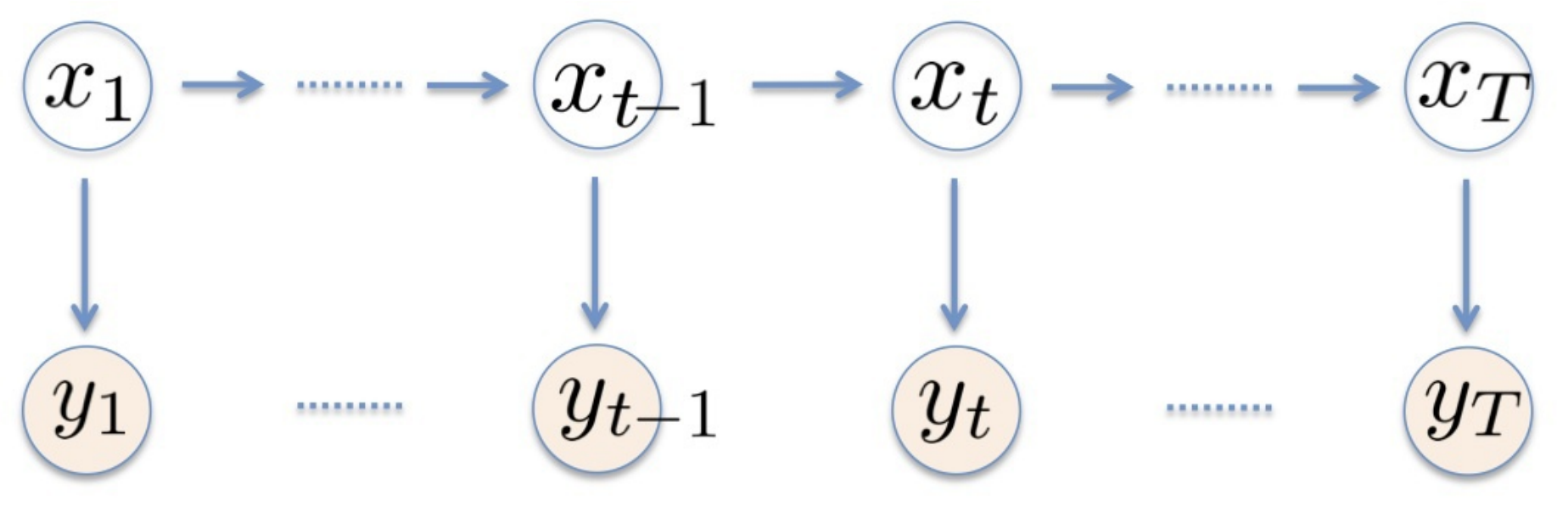}
\caption{Graphical representation of a state-space model: $y_1,\dots,y_T$ denote observations, and $x_1,\dots,x_T$ denote states. The states are hidden, and to be estimated from the observations.}
\label{fig:problem_setting}
\end{center}
\end{figure}

This paper addresses the problem of {\em filtering}, which has been a central topic in the literature on state-space models.
The task is to estimate a posterior distribution of the state for each time $t$,  based on observations up to that time:
\begin{equation} \label{eq:posterior_intro}
 p(x_t | y_1,\dots,y_t),\quad t=1,2,\dots, T.
\end{equation}
The estimation is to be done online (sequentially),  as each $y_t$ is received.
For example, a tracking problem can be formulated as filtering, where $x_t$ is the position of an object to be tracked, and $y_t$ is a noisy observation of $x_t$ \citep{RisAruGor04}.

As an inference problem, the starting point of filtering is that the observation model $p(y_t | x_t) $ and the transition model $p(x_t | x_{t-1})$ are {\em given} in some form.
The simplest form is a liner-Gaussian state-space model, which enables analytic computation of the posteriors; this is the principle of the classical Kalman filter \citep{Kal60}.
The filtering problem is more difficult if the observation and transition models involve nonlinear-transformation and non-Gaussian noise.
Standard solutions for such situations include Extended and Unscented Kalman filters \citep{AndMoo79,JulUhl97,JulUhl04} and particle filters \citep{GorSalSmi93,Doucet2001,DouJoh11}.
Particle filters in particular have wide applicability since they only require that (i)  (unnormalized) density values of the observation model are computable, and that (ii) sampling with the transition model is possible.
Thus particle methods are applicable to basically any nonlinear non-Gaussian state-space models, and have been used in various fields such as computer vision, robotics, computational biology, and so on (see, e.g.,\ \cite{Doucet2001}).

However, it can even be restrictive to assume that the observation model $p(y_t | x_t)$ is given as a probabilistic model. 
An important point is that in practice, we may define the states $x_1,\dots,x_T$ arbitrarily as quantities that we wish to estimate from available observations $y_1,\dots,y_T$.
Thus if these quantities are very different from the observations, the observation model may not admit a simple parametric form.
For example, in location estimation problems in robotics, states are locations in a map, while observations are sensor data, such as camera images and signal strength measurements of a wireless device \citep{Vlassis2001,WolBurBur05,Ferris2006}.
In brain computer interface applications, states are defined as positions of a device to be manipulated, while observations are brain signals \citep{Pistohl_etal2008,WanJiMilSch11}.
In these applications, it is hard to define the observation model as a probabilistic model in parametric form.

For such applications where the observation model is very complicated, information about the relation between states and observations is rather given as {\em examples} of state-observation pairs $\{ (X_i,Y_i )\}$; such examples are often available {\em before} conducting filtering in test phase.
For example, one can collect location-sensor examples for the location estimation problems, by making use of more expensive sensors than those for filtering \citep{QuiStaCoaThr10}.
The brain computer interface problems also allow us to obtain training samples for the relation between device positions and brain signals \citep{Schalk_etal2007}.
However, making use of such examples for learning the observation model is not straightforward.
If one relies on a parametric approach, it would require exhaustive efforts for designing a parametric model to fit the complicated (true) observation model.  
Nonparametric methods such as kernel density estimation \citep{Sil86}, on the other hand, suffer from the curse of dimensionality when applied to high-dimensional observations.
Moreover, observations may be suitable to be represented as {\em structured} (non-vectorial) data, as for the cases of image and text. 
Such situations are not straightforward for either approach, since they usually require that data is given as real vectors. 

\paragraph{Kernel Monte Carlo Filter.}
In this paper, we propose a filtering method that is focused on the above situations where the information of the observation model $p(y_t|x_t)$ is only given through the state-observation examples $\{ (X_i,Y_i) \}$.
The proposed method, which we call the {\em Kernel Monte Carlo Filter (KMCF)}, is applicable when the following are satisfied:
\begin{enumerate}
\item Positive definite kernels (reproducing kernels) are defined on the states and observations.
Roughly, a positive definite kernel is a similarity function that takes two data points as input, and outputs their similarity value.

\item Sampling with the transition model $p(x_t | x_{t-1})$ is possible. This is the same assumption as for standard particle filters: the probabilistic model can be arbitrarily nonlinear and non-Gaussian. 
\end{enumerate}

The last decades of research on kernel methods have yielded numerous kernels, not only for real vectors, but also for structured data of various types \citep{SchSmo02,HofSchSmo08}. Examples include kernels for images in computer vision \citep{Lazebnik2006}, graph structured data in bioinformatics \citep{SchTsuVer04}, and genomic sequences \citep{Sch10a,Sch10b}.
Therefore we can apply KMCF to such structured data by making use of the kernels developed in these fields. 
On the other hand, this paper assumes that the transition model is given explicitly: we do not discuss parameter learning (for the case of a parametric transition model), and assume that parameters are fixed.


KMCF is based on probability representations provided by the framework of {\em kernel mean embeddings}, which is a recent development in the fields of kernel methods \citep{SmoGreSonSch07,SriGreFukSchetal10,SonFukGre13}.
In this framework, any probability distribution is represented as a uniquely associated function in a reproducing kernel Hilbert space (RKHS), which is known as a {\em kernel mean}.
This representation enables us to estimate a distribution of interest, by alternatively estimating the corresponding kernel mean.
One significant feature of kernel mean embeddings is {\em Kernel Bayes' Rule} \citep{FukSonGre11,FukSonGre13}, by which KMCF estimates posteriors based on the state-observation examples.
Kernel Bayes' Rule has the following properties: 
(a) It is theoretically grounded and is proven to get more accurate as the number of the examples increases; 
(b) It requires neither parametric assumptions nor heuristic approximations for the observation model;
(c) Similarly to other kernel methods in machine learning, Kernel Bayes' Rule is empirically known to perform well for high-dimensional data, when compared to classical nonparametric methods.
KMCF inherits these favorable properties.

KMCF sequentially estimates the RKHS representation of the posterior (\ref{eq:posterior_intro}), in the form of weighted samples. This estimation consists of three steps of {\em prediction}, {\em correction} and {\em resampling}.
Suppose that we already obtained an estimate for the posterior of the previous time. 
In the prediction step, this previous estimate is propagated forward by sampling with the transition model, in the same manner as the sampling procedure of a particle filter. The propagated estimate is then used as a prior for the current state.
In the correction step, Kernel Bayes' Rule is applied to obtain a posterior estimate, using the prior and the state-observation examples $\{(X_i,Y_i)\}_{i=1}^n$.
Finally, in the resampling step, an approximate version of {\em Kernel Herding} \citep{CheWelSmo10} is applied, to obtain pseudo samples from the posterior estimate.
Kernel Herding is a greedy optimization method to generate pseudo samples from a given kernel mean, and searches for those samples from the entire space $\X$.
Our resampling algorithm modifies this, and searches for pseudo samples from a finite candidate set of the state samples $\{X_1,\dots,X_n \} \subset \X$.
The obtained pseudo samples are then used in the prediction step of the next iteration.

While the KMCF algorithm is inspired by particle filters, there are several important differences: 
(i) A weighted sample expression in KMCF is an estimator of the RKHS representation of a probability distribution, while that of a particle filter represents an empirical distribution. This difference can be seen in the following fact: weights of KMCF can take negative values, while weights of a particle filter are always positive. 
(ii) To estimate a posterior, KMCF uses the state-observation examples $\{(X_i,Y_i)\}_{i=1}^n$ and does not require the observation model itself, while a particle filter makes use of the observation model to update weights. 
In other words, KMCF involves nonparametric estimation of the observation model, while a particle filter does not.
(iii) KMCF achieves resampling based on Kernel Herding, while a particle filter uses a standard resampling procedure with an empirical distribution. We use Kernel Herding because the resampling procedure of particle methods is not appropriate for KMCF,  as the weights in KMCF may take negative values.

Since the theory of particle methods cannot therefore be used to justify our approach, we conduct the following theoretical analysis:
\begin{itemize}
\item We derive error bounds for the sampling procedure in the prediction step (Section \ref{sec:upper_bound}): this justifies the use of the sampling procedure with weighted sample expressions of kernel mean embeddings. The bounds are not trivial, since the weights of kernel mean embeddings can take negative values.

\item We discuss how resampling works with kernel mean embeddings (Section \ref{sec:theory_resampling}): it improves the estimation accuracy of the subsequent sampling procedure, by increasing the effective sample size of an empirical kernel mean.
This mechanism is essentially the same as that of a particle filter.

\item We provide novel convergence rates of Kernel Herding, when pseudo samples are searched from a finite candidate set (Section \ref{sec:rates_resampling}): this justifies our resampling algorithm.
This result may be of independent interest to the kernel community, as it describes how Kernel Herding is often used in practice.

\item We show the consistency of the overall filtering procedure of KMCF under certain smoothness assumptions (Section \ref{sec:consistency_KMCF}): KMCF provides consistent posterior estimates, as the number of state-observation examples $\{ (X_i,Y_i) \}_{i=1}^n$ increases.
\end{itemize}

The rest of the paper is organized as follows. 
In Section \ref{sec:related}, we review related works. 
Section \ref{sec:background} is devoted to preliminaries to make the paper self-contained; we review the theory of kernel mean embeddings.
Section \ref{sec:filter} presents Kernel Monte Carlo Filter, and
Section \ref{sec:theory} shows theoretical results.
In Section \ref{sec:experiment}, we demonstrate the effectiveness of KMCF by artificial and real data experiments.
The real experiment is on vision-based mobile robot localization, which is an example of the location estimation problems mentioned above.
Appendices include two methods for reducing computational costs of KMCF.

This paper expands on a conference paper by \cite{KanNisGreFuk14}.
The present paper differs from this earlier work in that it introduces and justifies the use of Kernel Herding for resampling.
The resampling step allows us to control the effective sample size of an empirical kernel mean, which is an important factor that determines the accuracy of the sampling procedure, as in particle methods.

\section{Related work}	\label{sec:related}

As explained, we consider the following setting: (i) the observation model $p(y_t | x_t)$ is not known explicitly or even parametrically. Instead, state-observation examples $\{ (X_i,Y_i) \}$ are available before test phase; (ii) sampling from the transition model $p(x_t | x_{t-1})$ is possible.
Note that standard particle filters cannot be applied to this setting directly, since they require that the observation model is given as a parametric model. 

As far as we know, there exist a few methods that can be applied to this setting directly \citep{Vlassis2001,Ferris2006}.
These methods learn the observation model from state-observation examples nonparametrically, and then use it to run a particle filter with a transition model.
\cite{Vlassis2001} proposed to apply conditional density estimation based on the $k$-nearest neighbors approach \citep{Sto77} for learning the observation model. 
A problem here is that conditional density estimation suffers from the curse of dimensionality if observations are high-dimensional \citep{Sil86}. 
\cite{Vlassis2001} avoided this problem by estimating the conditional density function of the state given observation, and used it as an alternative for the observation model.
This heuristic may introduce bias in estimation, however.
\cite{Ferris2006} proposed to use Gaussian Process regression for leaning the observation model.
This method will perform well if the Gaussian noise assumption is satisfied, but cannot be applied to structured observations.


\paragraph{Related settings.}
There exist related but different problem settings from ours.
One situation is that examples for state transitions are also given, and the transition model is to be learned nonparametrically from these examples.
For this setting, there are methods based on kernel mean embeddings \citep{song2009,FukSonGre11,FukSonGre13} and Gaussian Processes \citep{KoFox09,DeiHubHan09}.
The filtering method by \cite{FukSonGre11,FukSonGre13} is in particular closely related to KMCF, as it also uses Kernel Bayes' Rule.
A main difference from KMCF is that it computes forward probabilities by Kernel Sum Rule \citep{song2009,SonFukGre13}, which nonparametrically learns the transition model from the state transition examples.
While the setting is different from ours, we compare KMCF with this method in our experiments as a baseline.

Another related setting is that the observation model itself is given and sampling is possible, but computation of its values is expensive or even impossible. 
Therefore ordinary Bayes' rule cannot be used for filtering. 
To overcome this limitation, \cite{JasSinMarMcC12} and \cite{CalCze2014} proposed to apply approximate Bayesian computation (ABC) methods. For each iteration of filtering, these methods generate state-observation pairs from the observation model. Then they pick some pairs that have close observations to the test observation, and regard the states in these pairs as samples from a posterior. 
Note that these methods are not applicable to our setting, since we do not assume that the observation model is provided.
That said, our method may be applied to their setting, by generating state-observation examples from the observation model.
While such a comparison would be interesting, this paper focuses on comparison among the methods applicable to our setting.

\section{Kernel mean embeddings of distributions}
\label{sec:background}
Here we briefly review the framework of kernel mean embeddings. For details, we refer to the tutorial papers \citep{SmoGreSonSch07,SonFukGre13}. 

\subsection{Positive definite kernel and RKHS}
We begin by introducing positive definite kernels and reproducing kernel Hilbert spaces, details of which can be found in \cite{SchSmo02,Berlinet2004,SteChr2008}.

Let $\X$ be a set, and $k: \X \times \X \to \R$ be a positive definite (p.d.) kernel.\footnote{A symmetric kernel $k: \X \times \X \to \R$ is called {\em positive definite (p.d.)}, if for all $n \in \mathbb{N}$, $c_1,\dots,c_n \in \R$, and $X_1,\dots,X_n \in \X$, we have \[ \sum_{i=1}^n \sum_{j=1}^n c_i c_j k(X_i,X_j) \geq 0.\] }
Any positive definite kernel is uniquely associated with a Reproducing Kernel Hilbert Space (RKHS)   \citep{Aronszajn1950}.
Let $\H$ be the RKHS associated with $k$.
The RKHS $\H$ is a Hilbert space of functions on $\X$, which satisfies the following important properties:
\begin{enumerate}
\item ({\bf feature vector}): $k(\cd,x) \in \H$ for all $x \in \X$.
\item ({\bf reproducing property}): $f(x) = \left< f, k(\cd,x) \right>_\H$ for all $f \in \H$ and $x \in \X$,
\end{enumerate}
where $\left< \cd,\cd \right>_{\H}$ denotes the inner product equipped with $\H$, and $k(\cd,x)$ is a function with $x$ fixed. 
By the reproducing property, we have 
\[ k(x,x') = \left<k(\cd,x), k(\cd,x') \right>_\H, \quad  \forall x, x' \in \X. \]
Namely, $k(x,x')$ implicitly computes the inner product between the functions $k(\cd,x)$ and $k(\cd,x')$. From this property, $k(\cd,x)$ can be seen as an implicit representation of $x$ in $\H$. Therefore $k(\cd,x)$ is called the {\em feature vector} of $x$, and $\H$ the feature space.  It is also known that the subspace spanned by the feature vectors $\{ k(\cd, x) | x \in \X \}$ is dense in $\H$. This means that any function $f$ in $\H$ can be written as the limit of functions of the form $f_n := \sum_{i=1}^n c_i k(\cd,X_i)$, where $c_1,\dots,c_n \in \R$ and $X_1,\dots,X_n \in \X$.

For example, positive definite kernels on the Euclidian space $\X = \R^d$ include Gaussian kernel $k(x,x') = \exp(- \| x - x' \|_2^2 / 2\sigma^2)$ and Laplace kernel $k(x,x') = \exp( - \| x - x \|_1 / \sigma)$, where $\sigma > 0$ and $\| \cd \|_1$ denotes the $\ell_1$ norm.
Notably, kernel methods allow $\X$ to be a set of {\em structured data}, such as images, texts or graphs.
In fact, there exist various positive definite kernels developed for such structured data \citep{HofSchSmo08}. 
Note that the notion of positive definite kernels is {\em different} from smoothing kernels in kernel density estimation \citep{Sil86}: a smoothing kernel does not necessarily define an RKHS.

\subsection{Kernel means} \label{sec:back_kernel_mean} 
We use the kernel $k$ and the RKHS $\H$ to represent {\em probability distributions} on $\X$.
This is the framework of kernel mean embeddings \citep{SmoGreSonSch07}. 
Let $\X$ be a measurable space, and $k$ be measurable and bounded\footnote{$k$ is bounded on $\X$ if $\sup_{x \in \X} k(x,x) < \infty$.} on $\X$.
Let $P$ be an arbitrary probability distribution on $\X$.
Then the representation of $P$ in $\H$ is defined as the mean of the feature vector:
\begin{equation} \label{eq:kenrel_mean_def}
 m_P := \int k(\cd,x)dP(x) \in \H,
\end{equation}
which is called the {\bf kernel mean} of $P$.

If $k$ is {\em characteristic}, the kernel mean (\ref{eq:kenrel_mean_def}) preserves all the information about $P$; a positive definite kernel $k$ is defined to be characteristic, if the mapping $P \to m_P \in \H$ is one-to-one \citep{FukBacJor04,FukGreSunSch08,SriGreFukSchetal10}.
This means that the RKHS is rich enough to distinguish among all distributions. 
For example, the Gaussian and Laplace kernels are characteristic.
For conditions for kernels to be characteristic, see \cite{FukSriGreSch09,SriGreFukSchetal10}.
We assume henceforth that kernels are characteristic. 

An important property of the kernel mean (\ref{eq:kenrel_mean_def}) is the following: by the reproducing property, we have
\begin{equation} \label{eq:kernel_mean_inner}
 \left< m_P, f \right>_\H = \int f(x) dP(x) = \E_{X \sim P}[f(X)], \quad \forall f \in \H.
\end{equation}
That is, the expectation of any function in the RKHS can be given by the inner product between the kernel mean and that function.

\subsection{Estimation of kernel means}
Suppose that distribution $P$ is unknown, and that we wish to estimate $P$ from available samples.
This can be equivalently done by estimating its kernel mean $m_P$, since $m_P$ preserves all the information about $P$.

For example, let $X_1,\dots,X_n$ be an i.i.d.\ sample from $P$. Define an estimator of $m_P$ by the empirical mean:
\[ \hm_P := \frac{1}{n} \sum_{i=1}^n k(\cd,X_i). \]
Then this converges to $m_P$ at a rate $\| \hm_P - m_P \|_\H = O_p(n^{-1/2})$  \citep{SmoGreSonSch07}, where $O_p$ denotes the asymptotic order in probability, and $\| \cd \|_\H$ is the norm of the RKHS: $\| f \|_\H := \sqrt{\left<f,f\right>_\H}$ for all $f \in \H$.
Note that this rate is independent of the dimensionality of the space $\X$.

\paragraph{Kernel Bayes' Rule (KBR)} \label{sec:KBR}
Next we explain Kernel Bayes' Rule, which serves as a building block of our filtering algorithm.
To this end, let us introduce two measurable spaces $\X$ and $\Y$.
Let $p(x,y)$ be a joint probability on the product space $\X \times \Y$ that decomposes as $p(x,y)=p(y|x)p(x)$. 
Let $\pi(x)$ be a prior distribution on $\X$. 
Then the conditional probability $p(y|x)$ and the prior $\pi(x)$ define the posterior distribution by Bayes' rule;
\[p^\pi(x|y) \propto p(y|x) \pi(x).\]

The assumption here is that the conditional probability $p(y|x)$ is unknown. 
Instead, we are given an i.i.d.\ sample $(X_1,Y_1),\dots,(X_n,Y_n)$ from the joint probability $p(x,y)$.
We wish to estimate the posterior $p^\pi(x|y)$ using the sample.
KBR achieves this by estimating the kernel mean of $p^\pi(x|y)$.

KBR requires that kernels be defined on $\X$ and $\Y$. Let $k_\X$ and $k_\Y$ be kernels on $\X$ and $\Y$, respectively.
Define the kernel means of the prior $\pi(x)$ and the posterior $p^\pi(x|y)$:
\[ m_\pi := \int k_\X(\cd,x)\pi(x) dx,\quad m^\pi_{X|y} := \int k_\X(\cd,x) p^\pi(x|y) dx. \]
KBR also requires that $m_\pi$ be expressed as a weighted sample. Let $\hm_\pi := \sum_{j=1}^\ell \gamma_j k_\X(\cd,U_j)$ be a sample expression of $m_\pi$, where $\ell \in \mathbb{N}$, $\gamma_1,\dots,\gamma_\ell \in \R$ and $U_1,\dots,U_\ell \in \X$.
For example, suppose $U_1,\dots,U_\ell$ are i.i.d.\ drawn from $\pi(x)$. Then $\gamma_j = 1/\ell$ suffices.

Given the joint sample $\{ (X_i,Y_i) \}_{i=1}^n$ and the empirical prior mean $\hm_\pi$, KBR estimates the kernel posterior mean $m^\pi_{X|y}$ as a weighted sum of the feature vectors:
\begin{equation} \label{eq:KBR_estimator}
 \hm^\pi_{X|y} := \sum_{i=1}^n w_i k_\X(\cd,X_i),
\end{equation}
where the weights $w := (w_1, \dots, w_n )^T \in \R^n$ are given by Algorithm \ref{al:KBR_simple}. Here ${\rm diag}(v)$ for $v \in \R^n$ denotes a diagonal matrix with diagonal entries $v$.
It takes as input (i) vectors $ {\bf k}_Y = (k_\Y(y,Y_1),\dots,k_\Y(y,Y_n))^T$, ${\bf m}_\pi = (\hm_\pi(X_1),\dots, \hm_\pi(X_n) )^T \in \R^n$,  where $\hm_\pi(X_i) = \sum_{j=1}^\ell \gamma_j k_\X(X_i,U_j)$;
(ii) kernel matrices $G_X = (k_\X(X_i,X_j)), G_Y = (k_\Y(Y_i,Y_j)) \in \R^{n \times n}$; and 
(iii) regularization constants $\varepsilon,\delta>0$.
The weight vector $w := (w_{1},\dots,w_{n})^T \in \R^n$ is obtained by matrix computations involving two regularized matrix inversions.
Note that these weights can be negative.

\cite{FukSonGre13} showed that KBR is a consistent estimator of the kernel posterior mean under certain smoothness assumptions: the estimate (\ref{eq:KBR_estimator}) converges to $m^\pi_{X|y}$, as the sample size goes to infinity $n \to \infty$ and $\hm_\pi$ converges to $m_\pi$ (with $\varepsilon,\delta \to 0$ in appropriate speed).
For details, see \cite{FukSonGre13,SonFukGre13}.

\begin{algorithm}[t]
\caption{Kernel Bayes' Rule}
\label{al:KBR_simple}
\begin{algorithmic}[1]
\STATE
{\bf Input:} ${\bf k}_Y, {\bf m}_\pi \in \R^n$, $G_X,G_Y \in \R^{n \times n}$, $\varepsilon,\delta > 0$.
\STATE {\bf Output:}  $w := (w_{1},\dots,w_{n})^T \in \R^n$.
\\ \hrulefill
\STATE  $\Lambda \leftarrow  {\rm diag} ( (G_X + n \varepsilon I_n)^{-1} {\bf m}_\pi ) \in \R^{n \times n}$.
\STATE $w \leftarrow \Lambda G_Y ( (\Lambda G_Y)^2 + \delta I_n )^{-1} \Lambda {\bf k}_Y \in \R^n$.

\end{algorithmic}
\end{algorithm}

\subsection{Decoding from empirical kernel means} \label{sec:back_decoding}
In general, as shown above, a kernel mean $m_P$ is estimated as a weighted sum of feature vectors;
\begin{equation}\label{eq:weighted}
\hm_P = \sum_{i=1}^n w_i k(\cd,X_i),
\end{equation}
with samples $X_1,\dots,X_n \in \X$ and (possibly negative) weights $w_1,\dots,w_n \in \R$.
Suppose $\hm_P$ is close to $m_P$, i.e.,\ $\| \hm_P - m_P \|_\H$ is small.
Then $\hm_P$ is supposed to have accurate information about $P$, as $m_P$ preserves all the information of $P$.

How can we decode the information of $P$ from $\hm_P$? 
The empirical kernel mean (\ref{eq:weighted}) has the following property, which is due to the reproducing property of the kernel:
\begin{equation} \label{eq:weighted_inner}
\left< \hm_P, f \right>_\H =  \sum_{i=1}^n w_i f(X_i),\quad \forall f \in \H.
\end{equation}
Namely, the weighted average of any function in the RKHS is equal to the inner product between the empirical kernel mean and that function. This is analogous to the property (\ref{eq:kernel_mean_inner}) of the pupation kernel mean $m_P$. Let $f$ be any function in $\H$. From these properties (\ref{eq:kernel_mean_inner}) (\ref{eq:weighted_inner}), we have 
\[ \left| \E_{X \sim P}[f(X) ] - \sum_{i=1}^n w_i f(X_i) \right| = \left| \left< m_P - \hm_P, f \right>_\H \right| \leq \| m_P - \hm_P \|_\H \| f \|_\H, \]
where we used the Cauchy-Schwartz inequality. Therefore the left hand side will be close to $0$, if the error $\| m_P - \hm_P \|_\H$ is small. This shows that the expectation of $f$ can be estimated by the weighted average $\sum_{i=1}^n w_i f(X_i)$. Note that here $f$ is a function in the RKHS, but the same can also be shown for functions outside the RKHS under certain assumptions \citep{KanFuk14}.
In this way, the estimator of the form (\ref{eq:weighted}) provides estimators of moments, probability masses on sets and the density function (if this exists). 
This will be explained in the context of state-space models in Section \ref{sec:decode}.

\subsection{Kernel Herding} \label{sec:kernel_herding}
Here we explain Kernel Herding \citep{CheWelSmo10}, which is another building block of the proposed filter. 
Suppose the kernel mean $m_P$ is known. We wish to generate samples $x_1,x_2,\dots, x_\ell \in \X$ such that the empirical mean $\check{m}_P :=  \frac{1}{\ell} \sum_{i=1}^\ell k(\cd,x_i)$ is close to $m_P$, i.e.,\ $\| m_P - \check{m}_P \|_\H$ is small. This should be done only using $m_P$.
Kernel Herding achieves this by 
greedy optimization using 
the following update equations:
\begin{eqnarray} 
&& x_1 = \argmax_{x \in \X}\ m_P(x) \label{eq:herding_update1}, \\
&& x_\ell = \argmax_{x \in \X}\ m_P(x) - \frac{1}{\ell} \sum_{i=1}^{\ell-1} k(x,x_i),\quad (\ell \geq 2) \label{eq:herding_update2}
\end{eqnarray}
where $m_P(x)$ denotes the evaluation of $m_P$ at $x$ (recall that $m_P$ is a function in $\H$).

An intuitive interpretation of this procedure can be given if there is a constant $R > 0$ such that $ k(x,x) = R$ for all $x \in \X$ (e.g.,\ $R=1$ if $k$ is Gaussian).
Suppose that $x_1,\dots,x_{\ell-1}$ are already calculated. 
In this case, it can be shown that $x_\ell$ in (\ref{eq:herding_update2}) is the minimizer of
\begin{eqnarray} 
 \mathcal{E}_\ell &:=& \left\| m_P - \frac{1}{\ell} \sum_{i=1}^\ell k(\cd,x_i) \right\|_\H.
 \label{eq:herding_error}
\end{eqnarray}
Thus, Kernel Herding performs greedy minimization of the distance between $m_P$ and the empirical kernel mean $\check{m}_P = \frac{1}{\ell} \sum_{i=1}^\ell k(\cd,x_i)$.

It can be shown that the error $\mathcal{E}_\ell$ of (\ref{eq:herding_error}) decreases at a rate at least $O(\ell^{-1/2})$ under the assumption that $k$ is bounded \citep{BacJulObo12}.
In other words, the herding samples $x_1,\dots,x_\ell$ provide a convergent approximation of $m_P$. 
In this sense, Kernel Herding can be seen as a (pseudo) sampling method.
Note that $m_P$ itself can be an empirical kernel mean of the form (\ref{eq:weighted}).
These properties are important for our resampling algorithm developed in Section \ref{sec:resampling}.

It should be noted that $\mathcal{E}_\ell$ decreases at a faster rate $O(\ell^{-1})$ under a certain assumption \citep{CheWelSmo10}: this is much faster than the rate of $\ell$ i.i.d.\ samples $O(\ell^{-1/2})$.  
Unfortunately, this assumption only holds when $\H$ is finite dimensional \citep{BacJulObo12}, and therefore the fast rate of $O(\ell^{-1})$ has not been guaranteed for infinite dimensional cases.
Nevertheless, this fast rate motivates the use of Kernel Herding in the data reduction method in Appendix \ref{sec:subsampling} (we will use Kernel Herding for two different purposes).

\section{Kernel Monte Carlo Filter}	\label{sec:filter}
In this section, we present our Kernel Monte Carlo Filter (KMCF).
First, we define notation and review the problem setting in Section \ref{sec:notation}.
We then describe the algorithm of KMCF in Section \ref{sec:algorithm}.
We discuss implementation issues such as hyper-parameter selection and computational cost in Section \ref{sec:overview_complexity}.
We explain how to decode the information on the posteriors from the estimated kernel means in Section \ref{sec:decode}.

\subsection{Notation and problem setup} \label{sec:notation}
\begin{table}[t]
\caption{Notation}
\begin{center}
\begin{tabular}{|l|l|}
\hline
$\X$ & State space \\ \hline
$\Y$ & Observation space \\ \hline
$x_t \in \X$ & State at time $t$ \\ \hline
$y_t \in \Y$ & Observation at time $t$ \\ \hline
$p( y_t |x_t)$ & Observation model \\ \hline
$p( x_t | x_{t-1})$ & Transition model \\ \hline
$\{ (X_i,Y_i) \}_{i=1}^n$ & State-observation examples \\ \hline
$k_\X$ & Positive definite kernel on $\X$ \\ \hline
$k_\Y$ & Positive definite kernel on $\Y$ \\ \hline
$\H_\X$ & RKHS associated with $k_\X$ \\ \hline
$\H_\Y$ & RKHS associated with $k_\Y$ \\ \hline
\end{tabular}
\end{center}
\label{tb:notation}
\end{table}%
Here we formally define the setup explained in Section \ref{sec:intro}. The notation is summarized in Table \ref{tb:notation}.

We consider a state-space model (see Figure \ref{fig:problem_setting}).
Let $\X$ and $\Y$ be measurable spaces, which serve as a state space and an observation space, respectively.
Let $x_1,\dots,x_t,\dots, x_T \in \X$ be a sequence of hidden states, which follow a Markov process. Let $p(x_t | x_{t-1})$ denote a transition model that defines this Markov process.
Let $y_1,\dots,y_t,\dots,y_T \in \Y$ be a sequence of observations.
Each observation $y_t$ is assumed to be generated from an observation model $p(y_t | x_t)$ conditioned on the corresponding state $x_t$.
We use the abbreviation $y_{1:t} := y_1,\dots,y_t$.

We consider a filtering problem of estimating the posterior distribution $p(x_t | y_{1:t})$ for each time $t = 1,\dots,T$. The estimation is to be done online, as each $y_t$ is given.
Specifically, we consider the following setting (see also Section \ref{sec:intro}):
\begin{enumerate}
\item  The observation model $p(y_t|x_t)$ is not known explicitly, or even parametrically.
Instead, we are given examples of state-observation pairs $ \{ (X_i,Y_i) \}_{i=1}^n \subset \X \times \Y$ prior to the test phase.  The observation model is also assumed time-homogeneous.

\item Sampling from the transition model $p(x_t|x_{t-1})$ is possible. Its probabilistic model can be an arbitrary nonlinear non-Gaussian distribution, as for standard particle filters.
It can further depend on time. For example, control input can be included in the transition model as $p(x_t | x_{t-1}) := p(x_t | x_{t-1},u_t)$, where $u_t$ denotes control input provided by a user at time $t$.
\end{enumerate}

Let $k_\X: \X \times \X \to \R$ and $k_\Y: \Y \times \Y \to \R$ be positive definite kernels on $\X$ and $\Y$, respectively. Denote by $\H_\X$ and $\H_\Y$ their respective RKHSs.
We address the above filtering problem by estimating the {\bf kernel means of the posteriors}:
\begin{equation} \label{eq:posterior_embedding}
m_{x_t | y_{1:t}} := \int k_\X(\cd,x_t) p(x_t | y_{1:t}) dx_t \in \H_\X \quad (t=1,\dots,T).
\end{equation}
These preserve all the information of the corresponding posteriors, if the kernels are characteristic (see  Section \ref{sec:back_kernel_mean}). Therefore the resulting estimates of these kernel means provide us the information of the posteriors, as explained in Section \ref{sec:decode}

\subsection{Algorithm} \label{sec:algorithm}
KMCF iterates three steps of {\em prediction}, {\em correction} and {\em resampling} for each time $t$.
Suppose that we have just finished the iteration at time $t-1$. Then, as shown later, the resampling step yields the following estimator of (\ref{eq:posterior_embedding}) at time $t-1$:
\begin{equation} \label{eq:previous} 
\check{m}_{x_{t-1}|y_{1:t-1}} := \frac{1}{n} \sum_{i=1}^n k_\X(\cd,\bX_{t-1,i}),
\end{equation}
where $\bX_{t-1,1},\dots,\bX_{t-1,n} \in \X$.
Below we show one iteration of KMCF that estimates the kernel mean (\ref{eq:posterior_embedding}) at time $t$ (see also Figure \ref{fig:prediction_correction}).

\begin{figure}[t]
\begin{center}
	\includegraphics[width=0.95\columnwidth, clip]{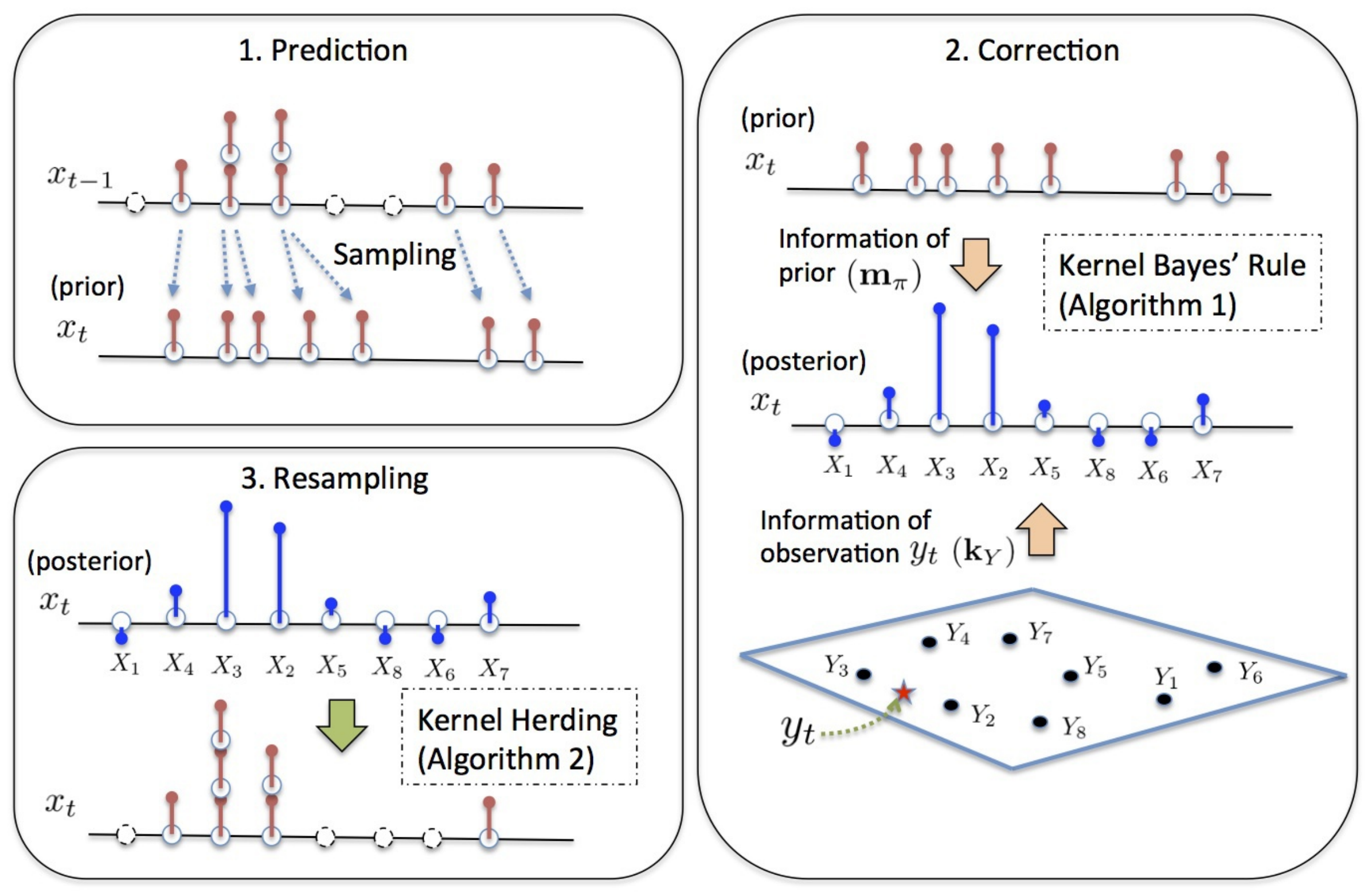}
\caption{One iteration of KMCF. Here $X_1,\dots,X_8$ and $Y_1,\dots,Y_8$ denote states and observations, respectively, in the state-observation examples $\{ (X_i, Y_i) \}_{i=1}^n$ (suppose $n=8$).
{\bf 1.\ Prediction step:} The kernel mean of the prior (\ref{eq:prior_embedding}) is estimated by sampling with the transition model $p(x_t | x_{t-1})$. 
{\bf 2.\ Correction step:}. The kernel mean of the posterior (\ref{eq:posterior_embedding}) is estimated by applying Kernel Bayes' Rule (Algorithm \ref{al:KBR_simple}).
The estimation makes use of the information of the prior (expressed as ${\bf m}_\pi := (\hm_{x_t | y_{1:t-1}}(X_i)) \in \R^8$) as well as that of a new observation $y_t$ (expressed as ${\bf k}_Y := (k_\Y(y_t,Y_i)) \in \R^8$).
The resulting estimate (\ref{eq:KBR}) is expressed as a weighted sample $\{ (w_{t,i}, X_i) \}_{i=1}^n$.
Note that the weights may be negative.
{\bf 3.\ Resampling step:} Samples associated with small weights are eliminated, and those with large weights are replicated by applying Kernel Herding (Algorithm \ref{al:resampling}). The resulting samples provide an empirical kernel mean (\ref{eq:resampled_estimate}), which will be used in the next iteration.}
\label{fig:prediction_correction}
\end{center}
\end{figure}

\paragraph{1.\ Prediction step}
The prediction step is as follows. 
We generate a sample from the transition model for each $\bX_{t-1,i}$ in (\ref{eq:previous});
\begin{equation} \label{eq:prediction_sampling}
 X_{t,i} \sim p(x_t | x_{t-1} = \bX_{t-1,i}), \quad (i = 1,\dots,n).
\end{equation}
We then specify a new empirical kernel mean;
\begin{equation}\label{eq:prior} 
	\hm_{x_t | y_{1:t-1}} := \frac{1}{n} \sum_{i=1}^n k_\X(\cd,X_{t,i}).
\end{equation}
This is an estimator of the following kernel mean of the prior;
\begin{equation} \label{eq:prior_embedding}
 m_{x_t | y_{1:t-1}} := \int k_\X(\cd,x_t) p(x_t | y_{1:t-1}) dx_t \in \H_\X,
\end{equation} 
where 
\[ p(x_t | y_{1:t-1}) = \int p(x_t | x_{t-1}) p(x_{t-1} | y_{1:t-1} ) dx_{t-1} \]
is the prior distribution of the current state $x_t$.
Thus (\ref{eq:prior}) serves as a prior for the subsequent posterior estimation.

In Section \ref{sec:theory}, we theoretically analyze this sampling procedure in detail, and provide justification of (\ref{eq:prior}) as an estimator of the kernel mean (\ref{eq:prior_embedding}).
We emphasize here that such an analysis is necessary, even though the sampling procedure is similar to that of a particle filter:
the theory of particle methods does not provide a theoretical justification of (\ref{eq:prior}) as a kernel mean estimator, since it deals with probabilities as empirical distributions.

\paragraph{2.\ Correction step}
This step estimates the kernel mean (\ref{eq:posterior_embedding}) of the posterior by using Kernel Bayes' Rule (Algorithm \ref{al:KBR_simple}) in Section \ref{sec:KBR}.
This makes use of the new observation $y_t$, the state-observation examples $\{ (X_i,Y_i) \}_{i=1}^n$ and the estimate (\ref{eq:prior}) of the prior.

The input of Algorithm \ref{al:KBR_simple} consists of (i) vectors
\begin{eqnarray*}
{\bf k}_Y &=& (k_\Y(y_t,Y_1),\dots,k_\Y(y_t,Y_n))^T \in \R^n \label{eq:kY} \\
{\bf m}_\pi &=& (\hm_{x_t | y_{1:t-1}} (X_1),\dots, \hm_{x_t | y_{1:t-1}} (X_n) )^T \nonumber \\
&=& \left( \frac{1}{n} \sum_{i=1}^n  k_\X(X_q, X_{t,i}) \right)_{q=1}^n \in \R^n \label{eq:mpi},
\end{eqnarray*} 
which are interpreted as expressions of $y_t$ and $\hm_{x_t | y_{1:t-1}}$ using the sample $\{ (X_i,Y_i) \}_{i=1}^n$,
(ii) kernel matrices $G_X = (k_\X(X_i,X_j))$, $G_Y = (k_\Y(Y_i,Y_j)) \in \R^{n \times n}$, and  (iii) regularization constants $\varepsilon,\delta>0$.
These constants $\varepsilon,\delta$ as well as kernels $k_\X, k_\Y$ are hyper-parameters of KMCF; we will discuss how to choose these parameters later.

Algorithm \ref{al:KBR_simple} outputs a weight vector $w := (w_{1},\dots,w_{n}) \in \R^n$.
Normalizing these weights\footnote{For this normalization procedure, see discussion in Section \ref{sec:overview_complexity}.} $w_t := w / \sum_{i=1}^n w_{i}$, we obtain an estimator of (\ref{eq:posterior_embedding}) as
\begin{equation} \label{eq:KBR}
 \hm_{x_t|y_{1:t}} = \sum_{i=1}^n w_{t,i} k_\X(\cd,X_i).
\end{equation}

The apparent difference from a particle filter is that the posterior (kernel mean) estimator (\ref{eq:KBR}) is expressed in terms of the samples $X_1,\dots,X_n$ in the training sample $\{ (X_i,Y_i) \}_{i=1}^n$, not with the samples from the prior  (\ref{eq:prior}). This requires that the training samples $X_1,\dots,X_n$ cover the support of posterior $p(x_t | y_{1:t})$ sufficiently well. If this does not hold, we cannot expect good performance for the posterior estimate. Note that this is also true for any methods that deal with the setting of this paper; poverty of training samples in a certain region means that we do not have any information about the observation model $p(y_t | x_t)$ in that region.

\paragraph{3. Resampling step} \label{sec:resampling}
This step applies the update equations  (\ref{eq:herding_update1}) (\ref{eq:herding_update2}) of Kernel Herding in Section \ref{sec:kernel_herding} to the estimate (\ref{eq:KBR}).
This is to obtain samples $\bX_{t,1},\dots,\bX_{t,n}$ such that 
\begin{equation} \label{eq:resampled_estimate} 
\check{m}_{x_{t} | y_{1:t}} := \frac{1}{n} \sum_{i=1}^n k_\X(\cd,\bX_{t,i})
\end{equation}
is close to (\ref{eq:KBR}) in the RKHS. Our theoretical analysis in Section \ref{sec:theory} shows that such a procedure can reduce the error of the prediction step at time $t+1$.

\begin{algorithm}[t]
\caption{Resampling with Kernel Herding}
\label{al:resampling}
\begin{algorithmic}[1]
\STATE
{\bf Input:} $\{ (w_{t,i}, X_i) \}_{i=1}^n$.
\STATE {\bf Output:} $\bX_{t,1},\dots,\bX_{t,n} \in \{ X_i \}_{i=1}^n$. 
\STATE {\bf Requirement:} $k_\X: \X \times \X \to \R$.
\\ \hrulefill
	\STATE $ \bX_{t,1} \leftarrow \argmax_{x \in \{ X_1,\dots, X_n \}} \sum_{i=1}^n w_{t,i} k_\X(x,X_i)$.
	\FOR{$p=2$ to $n$}
		\STATE $\bX_{t,p} \leftarrow \argmax_{x \in \{ X_1,\dots, X_n \}} \sum_{i=1}^n w_{t,i} k_\X(x,X_i) - \frac{1}{p} \sum_{j=1}^{p-1} k_\X(x,\bX_{t,j})$
	\ENDFOR \\
\end{algorithmic}
\end{algorithm}

The procedure is summarized in Algorithm \ref{al:resampling}.
Specifically, we generate each $\bX_{t,i}$ by searching the solution of the optimization problem in (\ref{eq:herding_update1}) (\ref{eq:herding_update2}) from a finite set of samples $\{X_1,\dots,X_n \}$ in (\ref{eq:KBR}).
We allow repetitions in $\bX_{t,1},\dots,\bX_{t,n}$.
We can expect that the resulting (\ref{eq:resampled_estimate}) is close to (\ref{eq:KBR}) in the RKHS if the samples $X_1,\dots,X_n$ cover the support of the posterior $p(x_t | y_{1:t})$ sufficiently. 
This is verified by the theoretical analysis of Section \ref{sec:rates_resampling}.

Here searching for the solutions from a finite set reduces the computational costs of Kernel Herding. 
It is possible to search from the entire space $\X$, if we have sufficient time or if the sample size $n$ is small enough; it depends on applications and available computational resources.
We also note that the size of the resampling samples is not necessarily $n$; this depends on how accurately these samples approximate (\ref{eq:KBR}).
Thus a smaller number of samples may be sufficient.
In this case we can reduce the computational costs of resampling, as discussed in Section \ref{sec:theory_resampling}.

The aim of our resampling step is similar to that of the resampling step of a particle filter (see, e.g.,\ \cite{DouJoh11}). 
Intuitively, the aim is to eliminate samples with very small weights, and replicate those with large weights (see Figures \ref{fig:prediction_correction} and \ref{fig:resampling}).
In particle methods, this is realized by generating samples from the empirical distribution defined by a weighted sample (therefore this procedure is called ``resampling").
Our resampling step is a realization of such a procedure in terms of the kernel mean embedding: we generate samples $\bX_{t,1},\dots,\bX_{t,n}$ from the empirical kernel mean (\ref{eq:KBR}).

Note that the resampling algorithm of particle methods is not appropriate for use with kernel mean embeddings. This is because it assumes that weights are positive, but our weights in (\ref{eq:KBR}) can be negative, as (\ref{eq:KBR}) is a kernel mean estimator.
One may apply the resampling algorithm of particle methods by first truncating the samples with negative weights. 
However, there is no guarantee that samples obtained by this heuristic produce a good approximation of (\ref{eq:KBR}) as a kernel mean, as shown by experiments in Section \ref{sec:exp_resampling}.
In this sense, the use of Kernel Herding is more natural since it generates samples that approximate a kernel mean.

\begin{algorithm}[t]
\caption{Kernel Monte Carlo Filter}
\label{al:KBRPF}
\begin{algorithmic}[1]
\STATE {\bf Input:} $y_1,\dots,y_T \in \Y$.
\STATE {\bf Output:} $w_1,\dots,w_T \in \R^n$.
\STATE
{\bf Requirement:} $k_\X$, $k_\Y$, $\varepsilon, \delta$, $\{ (X_i, Y_i) \}_{i=1}^n$, $p(x_t | x_{t-1})$, $p_{\rm init}$.
\\ \hrulefill
	\STATE $G_X \leftarrow (k_\X(X_i,X_j)) \in \R^{n \times n}$.
	\STATE $G_Y \leftarrow (k_\Y(Y_i,Y_j)) \in \R^{n \times n}$.

	\FOR{ $t = 1$ to $T$ }
		\IF{ $t=1$ }
			\STATE Sampling: $X_{1,1},\dots,X_{1,n} \sim  p_{\rm init}$\ i.i.d.		
		\ELSE
			\STATE $\bX_{t-1,1}, \dots, \bX_{t-1,n} \leftarrow $ Algorithm \ref{al:resampling}$(w_{t-1}, \{ X_i \}_{i=1}^n )$.			
			\STATE  Sampling: $X_{t,i} \sim p(x_t | x_{t-1} = \bX_{t-1,i})\ (i=1,\dots,n)$. 
		\ENDIF
		\STATE ${\bf m}_\pi \leftarrow ( \frac{1}{n} \sum_{i=1}^n k_\X(X_q, X_{t,i}) )_{q=1}^n \in \R^n$.
		\STATE ${\bf k}_Y \leftarrow (k_\Y(Y_q,y_t))_{q=1}^n \in \R^n$. 
		\STATE $w_t \leftarrow $ Algorithm \ref{al:KBR_simple}$( {\bf k}_Y, {\bf m}_\pi, G_X, G_Y, \varepsilon, \delta )$.
		\STATE $w_t \leftarrow w_{t} / \sum_{i=1}^n w_{t,i}$.
	\ENDFOR
\end{algorithmic}
\end{algorithm}

\paragraph{Overall algorithm.}
We summarize the overall procedure of KMCF in Algorithm \ref{al:KBRPF}, where $p_{\rm init}$ denotes a prior distribution for the initial state $x_1$.
For each time $t$, KMCF takes as input an observation $y_t$, and outputs a weight vector $w_t = (w_{t,1},\dots,w_{t,n})^T \in \R^n$. Combined with the samples $X_1,\dots,X_n$ in the state-observation examples $\{ (X_i,Y_i) \}_{i=1}^n$, these weights provide an estimator (\ref{eq:KBR}) of the kernel mean of posterior (\ref{eq:posterior_embedding}).

We first compute kernel matrices $G_X, G_Y$ (Line 4-5), which are used in Algorithm \ref{al:KBR_simple} of Kernel Bayes' Rule (Line 15).
For $t=1$, we generate an i.i.d.\ sample $X_{1,1},\dots,X_{1,n}$ from the initial distribution $p_{\rm init}$ (Line 8), which provides an estimator of the prior corresponding to (\ref{eq:prior}).
Line 10 is the resampling step at time $t-1$, and Line 11 is the prediction step at time $t$.
Line 13-16 corresponds to the correction step.

\subsection{Discussion} \label{sec:overview_complexity}
The estimation accuracy of KMCF can depend on several factors in practice. Below we discuss these issues.
\paragraph{Training samples.}
We first note that training samples $ \{ (X_i,Y_i) \}_{i=1}^n$ should provide the information concerning the observation model $p(y_t | x_t)$.
For example, $ \{ (X_i,Y_i) \}_{i=1}^n$ may be an i.i.d.\ sample from a joint distribution $p(x,y)$ on $\X \times \Y$, which decomposes as $p(x,y) = p(y|x) p(x)$. Here $p(y|x)$ is the observation model and $p(x)$ is some distribution on $\X$.  The support of $p(x)$ should cover the region where  states $x_1,\dots,x_T$ may pass in the test phase, as discussed in Section \ref{sec:algorithm}.
For example, this is satisfied when the state space $\X$ is compact, and the support of $p(x)$ is the entire $\X$. 

Note that training samples $\{ (X_i,Y_i) \}_{i=1}^n$ can also be non-i.i.d in practice. For example, we may deterministically select $X_1,\dots,X_n$ so that they cover the region of interest. In location estimation problems in robotics, for instance, we may collect location-sensor examples $\{ (X_i,Y_i) \}_{i=1}^n$ so that locations $X_1,\dots,X_n$ cover the region where location estimation is to be conducted \citep{QuiStaCoaThr10}.

\paragraph{Hyper-parameters.}
As in other kernel methods in general, the performance of KMCF depends on the choice of its hyper-parameters, which are the kernels $k_\X$ and $k_\Y$ (or parameters in the kernels, e.g.,\ the bandwidth of the Gaussian kernel) and the regularization constants $\delta, \varepsilon > 0$.
We need to define these hyper-parameters based on the joint sample $\{ (X_i,Y_i) \}_{i=1}^n$, before running the algorithm on the test data $y_1,\dots,y_T$.
This can be done by cross validation. 
Suppose that $\{ (X_i,Y_i) \}_{i=1}^n$ is given as a sequence from the state-space model.
We can then apply two-fold cross validation, by dividing the sequence into two subsequences. 
If $\{ (X_i,Y_i) \}_{i=1}^n$ is not a sequence, we can rely on the cross validation procedure for Kernel Bayes' Rule (see Section 4.2 of \cite{FukSonGre13}).

\paragraph{Normalization of weights.}
We found in our preliminary experiments that normalization of the weights (Line 16, Algorithm \ref{al:KBRPF}) is beneficial to the filtering performance. This may be justified by the following discussion about a kernel mean estimator in general.
Let us consider a consistent kernel mean estimator $\hm_P := \sum_{i=1}^n w_i k(\cd,X_i)$ such that $\lim_{n \to \infty} \| \hm_P - m_P \|_\H = 0$. Then we can show that the sum of the weights converges to $1$: $\lim_{n \to \infty} \sum_{i=1}^n w_i = 1$ under certain assumptions \citep{KanFuk14}. This could be explained as follows. Recall that the weighted average $\sum_{i=1}^n w_i f(X_i)$ of a function $f$ is an estimator of the expectation $\int f(x)dP(x)$. Let $f$ be a function that takes the value $1$ for any input: $f(x) = 1,\ \forall x \in \X$. Then we have $\sum_{i=1}^n w_i f(X_i) = \sum_{i=1}^n w_i$ and $\int f(x) dP(x) = 1$. Therefore $\sum_{i=1}^n w_i$ is as an estimator of $1$. In other words, if the error $\| \hm_P - m_P \|_\H$ is small, then the sum of the weights $\sum_{i=1}^n w_i$ should be close to $1$.
Conversely, if the sum of the weights is far from $1$, it suggests that the estimate $\hm_P$ is not accurate. Based on this theoretical observation, we suppose that normalization of the weights (this makes the sum equal to $1$) results in a better estimate.

\paragraph{Time complexity.}
For each time $t$, the naive implementation of Algorithm \ref{al:KBRPF} requires a time complexity of $O(n^3)$ for the size $n$ of the joint sample $\{ (X_i, Y_i) \}_{i=1}^n$.
This comes from Algorithm \ref{al:KBR_simple} in Line 15 (Kernel Bayes' Rule) and Algorithm \ref{al:resampling} in Line 10 (resampling).
The complexity $O(n^3)$ of Algorithm \ref{al:KBR_simple} is due to the matrix inversions. 
Note that one of the inversions $(G_X + n \varepsilon I_n)^{-1}$ can be computed before the test phase, as it does not involve the test data.
Algorithm \ref{al:resampling} also has complexity of $O(n^3)$.
In Section \ref{sec:theory_resampling}, we will explain how this cost can be reduced to $O(n^2 \ell)$ by generating only $\ell < n$ samples by resampling.


\paragraph{Speeding up methods.}
In Appendix \ref{sec:speed_up}, we describe two methods for reducing the computational costs of KMCF, both of which only need to be applied prior to the test phase.
(i) Low rank approximation of kernel matrices $G_X, G_Y$, which reduces the complexity to $O(n r^2)$, where $r$ the rank of low rank matrices:
Low rank approximation works well in practice, since eigenvalues of a kernel matrix often decay very rapidly. Indeed this has been theoretically shown for some cases; see \cite{Wid63,Wid64} and discussions in \cite{BacJor02}. 
(ii) A data reduction method based on Kernel Herding, which efficiently selects joint subsamples from the training set $\{ (X_i, Y_i) \}_{i=1}^n$: Algorithm \ref{al:KBRPF} is then applied based only on those subsamples.
The resulting complexity is thus $O(r^3)$, where $r$ is the number of subsamples. 
This method is motivated by the fast convergence rate of Kernel Herding \citep{CheWelSmo10}.

Both methods require the number $r$ to be chosen, which is either the rank for low rank approximation, or the number of subsamples in data reduction. This determines the tradeoff between the accuracy and computational time.
In practice, there are two ways of selecting the number $r$. (a) By regarding $r$ as a hyper parameter of KMCF, we can select it by cross validation. (b) We can choose $r$ by comparing the resulting approximation error; such error is measured in a matrix norm for low rank approximation, and in an RKHS norm for the subsampling method.
For details, see Appendix \ref{sec:speed_up}.

\paragraph{Transfer leaning setting.}
We assumed that the observation model in the test phase is the same as for the training samples. However, this might not hold in some situations.
For example, in the vision-based localization problem, the illumination conditions for the test and training phases might be different (e.g., the test is done at night, while the training samples are collected in the morning). 
Without taking into account such a significant change in the observation model, KMCF would not perform well in practice.

This problem could be addressed by exploiting the framework of {\em transfer learning} \citep{PanYan10}. This framework aims at situations where the probability distribution that generates test data is different from that of training samples. The main assumption is that there exist a small number of examples from the test distribution. Transfer learning then provides a way of combining such test examples and abundant training samples, thereby improving the test performance.
The application of transfer learning in our setting remains a topic for future research.




\subsection{Estimation of posterior statistics} \label{sec:decode}
By Algorithm \ref{al:KBRPF}, we obtain the estimates of the kernel means of posteriors (\ref{eq:posterior_embedding}) as
\begin{equation} \label{eq:posterior_decoding}
 \hm_{x_t | y_{1:t}}  = \sum_{i=1}^n w_{t,i} k_\X(\cd,X_i)  \quad (t=1,\dots,T).
\end{equation}
These contain the information on the posteriors $p(x_t | y_{1:t})$ (see Sections \ref{sec:back_kernel_mean} and \ref{sec:back_decoding}). 
We now show how to estimate statistics of the posteriors using these estimates (\ref{eq:posterior_decoding}).
For ease of presentation, we consider the case $\X = \R^d$.
Theoretical arguments to justify these operations are provided by \cite{KanFuk14}.

\paragraph{Mean and covariance.}
Consider the posterior mean $\int x_t p(x_t | y_{1:t} ) dx_t \in \R^d$ and the posterior (uncentered) covariance $\int x_t x_t^T p(x_t | y_{1:t}) dx_t \in \R^{d \times d}$. 
These quantities can be estimated as
\[ \sum_{i=1}^n w_{t,i} X_i \ \ ({\rm mean}). \quad  \quad \sum_{i=1}^n w_{t,i} X_i X_i^T \ \  ({\rm covariance}).  \]

\paragraph{Probability mass.}
Let $A \subset \X$ be a measurable set with smooth boundary. 
Define the indicator function $I_A(x)$ by $I_A(x) = 1$ for $x \in A$ and $I_A(x) = 0$ otherwise. 
Consider the probability mass $\int I_A(x) p(x_t | y_{1:t}) dx_t$.
This can be estimated as $\sum_{i=1}^n w_{t,i} I_A(X_i)$.

\paragraph{Density.}
Suppose $p(x_t | y_{1:t})$ has a density function.
Let $J(x)$ be a smoothing kernel satisfying $\int J(x) dx = 1$ and $J(x) \geq 0$.
Let $h > 0$ and define $J_h(x) := \frac{1}{h^d} J\left( \frac{x}{h} \right)$.
Then the density of $p(x_t | y_{1:t})$ can be estimated as
\begin{equation} \label{eq:densityestimate}
 \hat{p}(x_t|y_{1:t}) = \sum_{i=1}^n w_{t,i} J_h(x_t - X_i), 
\end{equation}
with an appropriate choice of $h$.

\paragraph{Mode.}
The mode may be obtained by finding a point that maximizes (\ref{eq:densityestimate}).
However, this requires a careful choice of $h$.
Instead, we may use  $X_{i_{\rm max}}$ with $i_{\rm max} := \arg \max_{i} w_{t,i}$ as a mode estimate: this is the point in $\{X_1,\dots,X_n \}$ that is associated with the maximum weight in $w_{t,1}, \dots, w_{t,n}$. 
This point can be interpreted as the point that maximizes (\ref{eq:densityestimate}) in the limit of $h \to 0$.

\paragraph{Other methods.}
Other ways of using (\ref{eq:posterior_decoding}) include the pre-image computation and fitting of Gaussian mixtures. See, e.g.,\ \cite{song2009,FukSonGre13,MccOcaRam13}.

\section{Theoretical analysis} \label{sec:theory}
In this section, we analyze the sampling procedure of the prediction step in Section \ref{sec:algorithm}.
Specifically, we derive an upper-bound on the error of the estimator (\ref{eq:prior}).
We also discuss in detail how the resampling step in Section \ref{sec:resampling} works as a pre-processing step of the prediction step.

To make our analysis clear, we slightly generalize the setting of the prediction step, and discuss the sampling and resampling procedures in this setting.
\subsection{Error bound for the prediction step} \label{sec:upper_bound}
Let $\X$ be a measurable space, and $P$ be a probability distribution on $\X$.
Let $p(\cd|x)$ be a conditional distribution on $\X$ conditioned on $x \in \X$.
Let $Q$ be a marginal distribution on $\X$ defined by $Q(B) = \int p(B|x)dP(x)$ for all measurable $B \subset \X$.
In the filtering setting of Section \ref{sec:filter}, the space $\X$ corresponds to the state space, and the distributions $P$, $p(\cd|x)$, and $Q$ correspond to the posterior $p(x_{t-1} | y_{1:t-1})$ at time $t-1$, the transition model $p(x_t | x_{t-1})$, and the prior $p(x_{t} | y_{1:t-1})$ at time $t$, respectively.

Let $k_\X$ be a positive definite kernel on $\X$, and $\H_\X$ be the RKHS associated with $k_\X$.
Let $m_P = \int k_\X(\cd,x)dP(x)$ and $m_Q = \int k_\X(\cd,x)dQ(x)$ be the kernel means of $P$ and $Q$, respectively.
Suppose that we are given an empirical estimate of $m_P$ as
\begin{equation} \label{eq:embed_X}
\hm_P := \sum_{i=1}^n w_i k_\X(\cd,X_i),
\end{equation}
where $w_1,\dots,w_n \in \R$ and $X_1,\dots,X_n \in \X$. 
Considering this weighted sample form enables us to explain the mechanism of the resampling step.

The prediction step can then be cast as the following procedure: 
for each sample $X_i$, we generate a new sample $X'_i$ with the conditional distribution $X'_i \sim p(\cd|X_i)$.
Then we estimate $m_Q$ by
\begin{equation}
\label{eq:KMC}
\hm_Q := \sum_{i=1}^n w_i k_\X(\cd,X'_i),
\end{equation}
which corresponds to the estimate (\ref{eq:prior}) of the prior kernel mean at time $t$.

The following theorem provides an upper-bound on the error of (\ref{eq:KMC}), and reveals properties of (\ref{eq:embed_X}) that affect the error of the estimator (\ref{eq:KMC}).
The proof is given in Appendix \ref{sec:appendix_proof}. 

\begin{theo} \label{theo:finite_sample_bound}
Let $\hm_P$ be a fixed estimate of $m_P$ given by (\ref{eq:embed_X}).
Define a function $\theta$ on $\X \times \X$ by $\theta(x_1,x_2) = \int \int k_\X(x'_1,x'_2) dp(x'_1|x_1)dp(x'_2|x_2), \forall x_1,x_2 \in \X \times \X$, and assume that $\theta$ is included in the tensor RKHS $\H_\X \otimes \H_\X$.\footnote{The tensor RKHS $\H_\X \otimes \H_\X$ is the RKHS of a product kernel $k_{\X \times \X}$ on $\X \times \X$ defined as $k_{\X \times \X} ((x_a,x_b), (x_c,x_d) ) = k_\X(x_a,x_c) k_\X(x_b,x_d), \forall (x_a,x_b), (x_c,x_d) \in \X \times \X$.
This space $\H_\X \otimes \H_\X$ consists of smooth functions on $\X \times \X$, if the kernel $k_\X$ is smooth (e.g.,\ if $k_\X$ is Gaussian; see Sec.\ 4 of \cite{SteChr2008}).
In this case, we can interpret this assumption as requiring that  $\theta$ be smooth as a function on $\X \times \X$.

The function $\theta$ can be written as the inner product between the kernel means of the conditional distributions: $\theta(x_1,x_2) = \left< m_{p(\cd | x_1)},  m_{p(\cd | x_2)} \right>_{\H_\X}$, where $m_{p(\cd | x)} := \int k_\X(\cd, x') dp(x' | x)$.
Therefore the assumption may be further seen as requiring that the map $x \to m_{p(\cd | x)}$ be smooth.
Note that while similar assumptions are common in the literature on kernel mean embeddings (e.g., Theorem 5 of \cite{FukSonGre13}), we may relax this assumption by using approximate arguments in learning theory (e.g.,\ Theorem 2.2 and 2.3 of \cite{ElbSte13}). 
This analysis remains a topic for future research.
}
The estimator $\hm_Q$ (\ref{eq:KMC}) then satisfies
\begin{eqnarray}
 && \E_{X'_1,\dots,X'_n}[\| \hm_{Q} - m_{Q} \|_{\H_\X}^2 ] \nonumber  \\
 &&\leq \sum_{i=1}^n w_i^2  ( \E_{X'_i}[k_\X(X'_i,X'_i)] - \E_{X'_i, \tX'_i}[k_\X(X'_i,\tX'_i)] ) \label{eq:theorem_first} \\
 && \quad + \| \hm_P - m_P \|^2_{\H_\X} \| \theta \|_{\H_\X \otimes \H_\X} \label{eq:theorem_second}, 
\end{eqnarray}
where $X'_i \sim p(\cd| X_i)$ and $\tX'_i$ is an independent copy of $X'_i$.
\end{theo}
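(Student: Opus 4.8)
The plan is to carry out a bias--variance decomposition of the error in $\H_\X$, using as the intermediate quantity the conditional expectation of the estimator after the transition sampling. Concretely, I would first introduce $\bar{m}_Q := \E_{X'_1,\dots,X'_n}[\hm_Q] = \sum_{i=1}^n w_i\, m_{p(\cd|X_i)}$, where $m_{p(\cd|x)} = \int k_\X(\cd,x')\,dp(x'|x)$ is the conditional kernel mean from the footnote. Since $\hm_P$ is fixed (so the $X_i$ and $w_i$ are deterministic) and each $X'_i$ is drawn independently from its own conditional $p(\cd|X_i)$, the element $\bar{m}_Q$ is deterministic and $\E[\hm_Q - \bar{m}_Q] = 0$. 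Expanding $\|\hm_Q - m_Q\|_{\H_\X}^2 = \|\hm_Q - \bar{m}_Q\|^2 + 2\langle \hm_Q - \bar{m}_Q,\, \bar{m}_Q - m_Q\rangle + \|\bar{m}_Q - m_Q\|^2$ and taking expectation, the cross term vanishes because $\bar{m}_Q - m_Q$ is deterministic, leaving $\E[\|\hm_Q - m_Q\|^2] = \E[\|\hm_Q - \bar{m}_Q\|^2] + \|\bar{m}_Q - m_Q\|^2$. It then remains to bound the variance and bias pieces separately, matching (\ref{eq:theorem_first}) and (\ref{eq:theorem_second}) respectively.

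For the variance term, I would write $\hm_Q - \bar{m}_Q = \sum_{i=1}^n w_i\big(k_\X(\cd,X'_i) - m_{p(\cd|X_i)}\big)$ and exploit the mutual independence of the $X'_i$ to annihilate all cross terms in expectation, reducing it to $\sum_{i=1}^n w_i^2\, \E_{X'_i}\big[\|k_\X(\cd,X'_i) - m_{p(\cd|X_i)}\|_{\H_\X}^2\big]$. Expanding each summand via the reproducing property, using $\E_{X'_i}[\|k_\X(\cd,X'_i)\|^2] = \E_{X'_i}[k_\X(X'_i,X'_i)]$ and $\|m_{p(\cd|X_i)}\|^2 = \langle m_{p(\cd|X_i)}, \E_{X'_i}[k_\X(\cd,X'_i)]\rangle = \E_{X'_i,\tX'_i}[k_\X(X'_i,\tX'_i)]$, the expectation of the middle inner-product term cancels one copy of this last quantity and yields exactly $\E_{X'_i}[k_\X(X'_i,X'_i)] - \E_{X'_i,\tX'_i}[k_\X(X'_i,\tX'_i)]$, which is the first term (\ref{eq:theorem_first}).

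For the bias term, the plan is to introduce the signed measure $\mu := \sum_{i=1}^n w_i \delta_{X_i} - P$, so that $\hm_P - m_P = \int k_\X(\cd,x)\,d\mu(x)$ and $\bar{m}_Q - m_Q = \int m_{p(\cd|x)}\,d\mu(x)$. Taking the squared norm and moving the integrals inside the inner product gives $\|\bar{m}_Q - m_Q\|^2 = \iint \langle m_{p(\cd|x)}, m_{p(\cd|x')}\rangle_{\H_\X}\, d\mu(x)\,d\mu(x') = \iint \theta(x,x')\, d\mu(x)\,d\mu(x')$. This is where the hypothesis $\theta \in \H_\X \otimes \H_\X$ is essential: by the reproducing property in the tensor RKHS, $\theta(x,x') = \langle \theta,\, k_\X(\cd,x)\otimes k_\X(\cd,x')\rangle_{\H_\X \otimes \H_\X}$, so the double integral collapses to $\langle \theta,\, (\hm_P - m_P)\otimes(\hm_P - m_P)\rangle_{\H_\X \otimes \H_\X}$. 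Applying Cauchy--Schwarz in $\H_\X \otimes \H_\X$ together with the identity $\|f\otimes f\|_{\H_\X \otimes \H_\X} = \|f\|_{\H_\X}^2$ then produces $\|\bar{m}_Q - m_Q\|^2 \le \|\theta\|_{\H_\X \otimes \H_\X}\,\|\hm_P - m_P\|_{\H_\X}^2$, which is the second term (\ref{eq:theorem_second}); summing the two bounds finishes the proof.

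The main obstacle is the bias step: converting $\iint \theta\, d\mu\, d\mu$ into an inner product against $(\hm_P - m_P)^{\otimes 2}$ is precisely what the smoothness assumption $\theta \in \H_\X \otimes \H_\X$ buys, and without it the bound need not hold. A secondary technical point, which I would state but not belabor, is justifying the interchange of the signed-measure integration with the Hilbert-space inner product, i.e.\ the Bochner integrability of $x \mapsto m_{p(\cd|x)}$ against $\mu$; this follows routinely from the boundedness of $k_\X$. The variance computation, by contrast, is a standard independence-plus-reproducing-property argument and should present no difficulty.
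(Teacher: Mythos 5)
Your proof is correct, and it arrives at exactly the same intermediate identity as the paper---namely, that the expected squared error equals the variance-type sum (\ref{eq:theorem_first}) plus $\left< (\hm_P - m_P)\otimes(\hm_P - m_P),\, \theta \right>_{\H_\X \otimes \H_\X}$, after which Cauchy--Schwarz gives the bound---but the route you take is organized genuinely differently. The paper expands $\E\| \hm_Q - m_Q\|_{\H_\X}^2$ by brute force into double sums over $i,j$, separates out the diagonal $i=j$ terms, rewrites every remaining expectation in terms of $\theta$, and then verifies identities such as $\sum_{i,j} w_i w_j \theta(X_i,X_j) = \left< \hm_P \otimes \hm_P, \theta \right>_{\H_\X \otimes \H_\X}$ by expanding $\theta = \sum_{s,t}\alpha_{s,t}\,\phi_s \otimes \phi_t$ in an orthonormal basis of the tensor RKHS. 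You instead condition on the randomness first: the explicit bias--variance split through $\bar{m}_Q = \sum_i w_i\, m_{p(\cd|X_i)}$ makes all cross terms vanish by independence at the outset, and you collapse the bias term to $\left< \theta, (\hm_P - m_P)\otimes(\hm_P - m_P)\right>_{\H_\X \otimes \H_\X}$ directly via the reproducing property of the product kernel, avoiding the series manipulation entirely. The two arguments are mathematically equivalent, but yours buys two things: the statistical structure (variance of independent draws plus squared bias) is explicit rather than emerging from regrouped algebra, and the tensor reproducing property replaces the basis expansion, which makes the role of the assumption $\theta \in \H_\X \otimes \H_\X$ more transparent. The only technical point requiring care---which you correctly flag---is the Bochner integrability needed to interchange signed-measure integration with inner products; the paper's basis argument implicitly needs the analogous interchange of expectation with an infinite sum, so neither write-up is more demanding than the other on that score.
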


From Theorem \ref{theo:finite_sample_bound}, we can make the following observations.
First, the second term (\ref{eq:theorem_second})  of the upper-bound shows that the error of the estimator (\ref{eq:KMC}) is likely to be large if the given estimate (\ref{eq:embed_X}) has large error $\| \hm_P - m_P \|^2_{\H_\X}$, which is reasonable to expect.

Second, the first term (\ref{eq:theorem_first}) shows that the error of (\ref{eq:KMC}) can be large if the distribution of $X'_i$ (i.e.\ $p(\cd | X_i)$) has large variance.
For example, suppose $X'_i = f(X_i) + \varepsilon_i$, where $f: \X \to \X$ is some mapping and $\varepsilon_i$ is a random variable with mean $0$. Let $k_\X$ be the Gaussian kernel: $k_\X(x,x') = \exp(- \| x - x' \| / 2\alpha)$ for some $\alpha >0$.
Then $\E_{X'_i}[k_\X(X'_i,X'_i)] - \E_{X'_i, \tX'_i}[k_\X(X'_i,\tX'_i)]$ increases from $0$ to $1$, as the variance of $\varepsilon_i$ (i.e.\ the variance of $X'_i$) increases from $0$ to infinity.
Therefore in this case (\ref{eq:theorem_first}) is upper-bounded at worst by $\sum_{i=1}^n w_i^2$.
Note that $\E_{X'_i}[k_\X(X'_i,X'_i)] - \E_{X'_i, \tX'_i}[k_\X(X'_i,\tX'_i)]$ is always non-negative.\footnote{To show this, it is sufficient to prove that $\int \int k_\X(x,\tx) dP(x) dP(\tx) \leq \int k_\X(x,x) dP(x)$ for any probability $P$. This can be shown as follows. $\int \int k_\X(x,\tx) dP(x) dP(\tx) = \int \int \left< k_\X(\cd,x), k_\X(\cd,\tx) \right>_{\H_\X} dP(x) dP(\tx) \leq  \int \int \sqrt{ k_\X(x,x) } \sqrt{ k_\X(\tx,\tx)} dP(x) dP(\tx) \leq  \int k_\X(x,x) dP(x)$. Here we used the reproducing property, the Cauchy-Schwartz inequality and Jensen's inequality}

\paragraph{Effective sample size.}
Now let us assume that the kernel $k_\X$ is bounded, i.e.,\ there is a constant $C > 0$ such that $\sup_{x \in \X} k_\X(x,x) < C$. 
Then the inequality of Theorem \ref{theo:finite_sample_bound} can be further bounded as 
\begin{equation} \label{eq:bound_with_boundedness_assumption}
   \E_{X'_1,\dots,X'_n}[\| \hm_{Q} - m_{Q} \|_{\H_\X}^2 ]  \leq 2C \sum_{i=1}^n w_i^2 + \| \hm_P - m_P \|^2_{\H_\X} \| \theta \|_{\H_\X \otimes \H_\X}. 
\end{equation}
This bound shows that  two quantities are important in the estimate $(\ref{eq:embed_X})$: (i) the sum of squared weights $\sum_{i=1}^n w_i^2$, and (ii) the error $\| \hm_P - m_P \|^2_{\H_\X}$.
In other words, the error of (\ref{eq:KMC}) can be large if the quantity $\sum_{i=1}^n w_i^2$ is large, regardless of the accuracy of $(\ref{eq:embed_X})$ as an estimator of $m_P$.
In fact, the estimator of the form $(\ref{eq:embed_X})$ can have large $\sum_{i=1}^n w_i^2$ even when $\| \hm_P - m_P \|^2_{\H_\X}$ is small, as shown in Section \ref{sec:exp_resampling}.

The inverse of the sum of the squared weights $1/\sum_{i=1}^n w_i^2$ can be interpreted as the effective sample size (ESS) of the empirical kernel mean (\ref{eq:embed_X}).
To explain this, suppose that the weights are normalized, i.e.,\ $\sum_{i=1}^n w_i = 1$. Then ESS takes its maximum $n$ when the weights are uniform, $w_1 = \cdots w_n = 1/n$. On the other hand, it becomes small when only a few samples have large weights (see the left figure in Figure \ref{fig:resampling}). Therefore the bound (\ref{eq:bound_with_boundedness_assumption}) can be interpreted as follows: to make (\ref{eq:KMC}) a good estimator of $m_Q$, we need to have (\ref{eq:embed_X}) such that the ESS is large and the error $\| \hm_P - m_P \|_{\H}$ is small. 
Here we borrowed the notion of ESS from the literature on particle methods, in which ESS has also been played  an important role; see, e.g.,\ Sec.\ 2.5.3 of \cite{Liu01} and Sec.\ 3.5 of \cite{DouJoh11}.

\subsection{Role of resampling} \label{sec:theory_resampling}
\begin{figure}[t]
\begin{center}
	\includegraphics[width=0.95\columnwidth, clip]{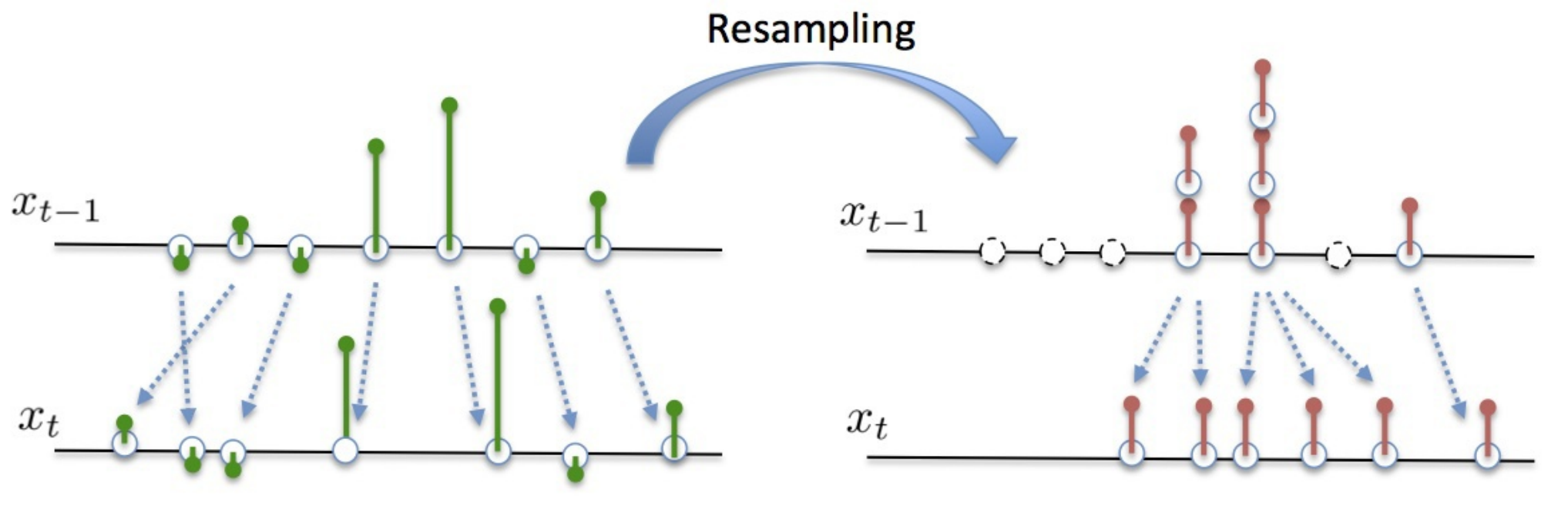}
\caption{An illustration of the sampling procedure with (right) and without (left) the resampling algorithm. The left figure corresponds to the kernel mean estimators (\ref{eq:embed_X}) (\ref{eq:KMC}) in Section \ref{sec:upper_bound}, and the right one corresponds to those (\ref{eq:m_P_resample}) (\ref{eq:m_Q_resample}) in Section \ref{sec:theory_resampling}  }
\label{fig:resampling}
\end{center}
\end{figure}

Based on these arguments, we explain how the resampling step in Section \ref{sec:resampling} works as a preprocessing step for the sampling procedure. 
Consider $\hm_P$ in (\ref{eq:embed_X}) as an estimate (\ref{eq:KBR}) given by the correction step at time $t-1$.
Then we can think of $\hm_Q$ (\ref{eq:KMC}) as an estimator of the kernel mean (\ref{eq:prior_embedding}) of the prior, {\em without} the resampling step.

The resampling step is application of Kernel Herding to $\hm_P$ to obtain samples $\bX_1,\dots,\bX_n$, which provide a new estimate of $m_P$ with uniform weights;
\begin{equation} \label{eq:m_P_resample}
\check{m}_P = \frac{1}{n} \sum_{i=1}^n k_\X(\cd, \bX_i).
\end{equation}
The subsequent prediction step is to generate a sample $\bX'_i \sim p(\cd | \bX_i)$ for each $\bX_i$ $(i=1,\dots,n)$, and estimate $m_Q$ as 
\begin{equation} \label{eq:m_Q_resample}
 \check{m}_Q = \frac{1}{n} \sum_{i=1}^n k_\X(\cd,\bX'_i).
\end{equation}
Theorem \ref{theo:finite_sample_bound} gives the following bound for this estimator that corresponds to (\ref{eq:bound_with_boundedness_assumption}):
\begin{equation} \label{eq:bound_resampling}
  \E_{\bX'_1,\dots,\bX'_n}[\| \check{m}_{Q} - m_{Q} \|_{\H_\X}^2 ] \leq  \frac{2C}{n} + \| \check{m}_P - m_P \|^2_\H \| \theta \|_{\H_\X \otimes \H_\X}. 
\end{equation}

A comparison of the upper-bounds of (\ref{eq:bound_with_boundedness_assumption}) and (\ref{eq:bound_resampling}) implies that the resampling step is beneficial when (i) $\sum_{i=1}^n w_i^2$ is large (i.e.,\ the ESS is small), and (ii) $\| \check{m}_P - \hm_P \|_{\H_\X}$ is small.
The condition on $\| \check{m}_P - \hm_P \|_{\H_\X}$ means that the loss by Kernel Herding (in terms of the RKHS distance) is small.
This implies $\| \hm_P - m_P \|_{\H_\X} \approx \| \check{m}_P - m_P \|_{\H_\X}$, so the second term of (\ref{eq:bound_resampling}) is close to that of (\ref{eq:bound_with_boundedness_assumption}).
On the other hand,  the first term of (\ref{eq:bound_resampling}) will be much smaller than that of (\ref{eq:bound_with_boundedness_assumption}), if $\sum_{i=1}^n w_i^2 \gg 1/n$.
In other words, the resampling step improves the accuracy of the sampling procedure, by increasing the ESS of the kernel mean estimate $\hm_P$. This is illustrated in Figure \ref{fig:resampling}.

The above observations lead to the following procedures:
\paragraph{When to apply resampling.}
If $\sum_{i=1}^n w_i^2$ is not large, the gain by the resampling step will be small.
Therefore the resampling algorithm should be applied when $\sum_{i=1}^n w_i^2$ is above a certain threshold, say $2/n$. The same strategy has been commonly used in particle methods (see, e.g.,\ \cite{DouJoh11}).

Also, the bound (\ref{eq:theorem_first}) of Theorem \ref{theo:finite_sample_bound} shows that resampling is not beneficial if the variance of the conditional distribution $p(\cd | x)$ is very small (i.e.,\ if state transition is nearly deterministic). In this case, the error of the sampling procedure may increase due to the loss $\| \check{m}_P - \hm_P \|_{\H_\X}$ caused by Kernel Herding.

\paragraph{Reduction of computational cost.}
Algorithm \ref{al:resampling} generates $n$ samples $\bX_1,\dots,\bX_n$ with time complexity $O(n^3)$.
Suppose that the first $\ell$ samples $\bX_1,\dots,\bX_\ell$, where $\ell < n$, already approximate $\hm_P$ well:
$\| \frac{1}{\ell} \sum_{i=1}^\ell k_\X(\cd, \bX_i) -\hm_P \|_{\H_\X}$ is small.
We do not then need to generate the rest of samples $\bX_{\ell+1}, \dots,\bX_{n}$: we can make $n$ samples by copying the $\ell$ samples $n/\ell$ times (suppose $n$ can be divided by $\ell$ for simplicity, say $n = 2\ell$).
Let $\bX_1,\dots,\bX_n$ denote these $n$ samples. Then $\frac{1}{\ell} \sum_{i=1}^\ell k_\X(\cd, \bX_i) =  \frac{1}{n} \sum_{i=1}^n k_\X(\cd, \bX_i)$ by definition, so $\| \frac{1}{n} \sum_{i=1}^n k_\X(\cd, \bX_i) -\hm_P \|_{\H_\X}$ is also small.
This reduces the time complexity of Algorithm \ref{al:resampling} to $O(n^2 \ell)$.

One might think that it is unnecessary to copy $n/\ell$ times to make $n$ samples. 
This is not true, however.
Suppose that we just use the first $\ell$ samples to define $\check{m}_P = \frac{1}{\ell} \sum_{i=1}^\ell k_\X(\cd,\bX_i)$.
Then the first term of (\ref{eq:bound_resampling}) becomes $2C/\ell$, which is larger than $2C/n$ of $n$ samples.
This difference involves sampling with the conditional distribution:  $\bX'_i \sim p(\cd | \bX_i)$.
If we just use the $\ell$ samples, sampling is done $\ell$ times. If we use the copied $n$ samples, sampling is done $n$ times.
Thus the benefit of making $n$ samples comes from sampling with the conditional distribution many times.
This matches the bound of Theorem \ref{theo:finite_sample_bound}, where the first term involves the variance of the conditional distribution.

\subsection{Convergence rates for resampling} \label{sec:rates_resampling}

\begin{algorithm}[t]
\caption{Generalized version of Algorithm \ref{al:resampling}}
\label{al:resampling_gen}
\begin{algorithmic}[1]
\STATE
{\bf Input:} $\hm_P \in \H_\X$, $\{Z_1,\dots,Z_N \} \subset \X$, $\ell \in \mathbb{N}$.
\STATE {\bf Output:} $\bX_{1},\dots,\bX_{\ell} \in \{Z_1,\dots,Z_N \}$. 
\\ \hrulefill
	\STATE $ \bX_{1} \leftarrow \argmax_{x \in \{ Z_1,\dots, Z_N \}}  \hm_P(x)$.
	\FOR{$p=2$ to $\ell$}
		\STATE $\bX_{p} \leftarrow \argmax_{x \in \{ Z_1,\dots, Z_N \}}  \hm_P(x) - \frac{1}{p} \sum_{j=1}^{p-1} k_\X(x,\bX_{j})$
	\ENDFOR \\
\end{algorithmic}
\end{algorithm}

Our resampling algorithm (Algorithm \ref{al:resampling}) is an approximate version of Kernel Herding in Section \ref{sec:kernel_herding}: Algorithm \ref{al:resampling} searches for the solutions of the update equations (\ref{eq:herding_update1}) (\ref{eq:herding_update2}) from a finite set $\{X_1,\dots,X_n\} \subset \X$, not from the entire space $\X$. 
Therefore existing theoretical guarantees for Kernel Herding \citep{CheWelSmo10,BacJulObo12} do not apply to Algorithm \ref{al:resampling}.
Here we provide a theoretical justification. 

\paragraph{Generalized version.}
We consider a slightly generalized version shown in Algorithm \ref{al:resampling_gen}:
It takes as input (i) a kernel mean estimator $\hm_P$ of a kernel mean $m_P$, (ii) candidate samples $Z_1,\dots,Z_N$, and (iii) the number $\ell$ of resampling; It then outputs resampling samples $\bX_1,\dots,\bX_\ell \in \{Z_1,\dots,Z_N\}$, which form a new estimator $\check{m}_P := \frac{1}{\ell} \sum_{i=1}^\ell k_\X(\cd,\bX_i)$. Here $N$ is the number of the candidate samples.

Algorithm \ref{al:resampling_gen} searches for the solutions of the update equations (\ref{eq:herding_update1}) (\ref{eq:herding_update2}) from the candidate set $\{Z_1,\dots,Z_N\}$.
Note that here these samples $Z_1,\dots,Z_N$ can be different from those expressing the estimator $\hm_P$. 
If they are the same, i.e., if the estimator is expressed as $\hm_P = \sum_{i=1}^n w_{t,i} k(\cd,X_i)$ with $n= N$ and $X_i = Z_i\ (i=1,\dots,n)$, then Algorithm \ref{al:resampling_gen} reduces to Algorithm \ref{al:resampling}.
In fact, Theorem \ref{theo:resampling} below allows $\hm_P$ to be any element in the RKHS.

\paragraph{Convergence rates in terms of $N$ and $\ell$.} 
Algorithm \ref{al:resampling_gen} gives the new estimator $\check{m}_P$ of the kernel mean $m_P$. The error of this new estimator $\| \check{m}_P - m_P \|_{\H_\X}$ should be close to that of the given estimator, $\| \hm_P - m_P \|_{\H_\X}$.
Theorem \ref{theo:resampling} below guarantees this. 
In particular, it provides convergence rates of $\| \check{m}_P - m_P \|_{\H_\X}$ approaching $\| \hm_P - m_P \|_{\H_\X}$, as $N$ and $\ell$ go to infinity.
This theorem follows from Theorem \ref{theo:resampling_app} in Appendix \ref{sec:proof_resampling}, which holds under weaker assumptions.

\begin{theo} \label{theo:resampling}
Let $m_P$ be the kernel mean of a distribution $P$, and $\hm_P$ be any element in the RKHS $\H_\X$.
Let $Z_1, \dots, Z_N$ be an i.i.d.\ sample from a distribution with density $q$. 
Assume that $P$ has a density function $p$ such that $\sup_{x \in \X} p(x)/q(x) < \infty$.
Let $\bX_1,\dots,\bX_\ell$ be samples given by Algorithm \ref{al:resampling_gen} applied to $\hm_P$ with candidate samples $\{Z_1,\dots,Z_N\}$.
Then for $\check{m}_P := \frac{1}{\ell} \sum_{i=1}^\ell k(\cd,\bX_i)$ we have
\begin{equation} \label{eq:theo_resampling}
 \| \check{m}_P - m_P \|_{\H_\X}^2 = \left( \| \hm_P - m_P \|_{\H_\X} + O_p(N^{-1/2}) \right)^2  + O\left( \frac{\ln \ell}{\ell} \right).\quad (N,\ell \to \infty) \end{equation}
\end{theo}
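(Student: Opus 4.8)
The plan is to treat Algorithm \ref{al:resampling_gen} as greedy (Frank--Wolfe-type) minimization over a data-dependent polytope and to exhibit one point of that polytope that is provably close to $m_P$. Write $\mathcal{M}_N := \mathrm{conv}\{k_\X(\cd,Z_1),\dots,k_\X(\cd,Z_N)\} \subset \H_\X$ for the convex hull of the candidate feature vectors; note $\check{m}_P = \frac1\ell\sum_{i\le\ell} k_\X(\cd,\bX_i) \in \mathcal{M}_N$. The first ingredient is a reference point in $\mathcal{M}_N$ near $m_P$. Since $Z_1,\dots,Z_N$ are i.i.d.\ from $q$ and $m_P = \int k_\X(\cd,x)p(x)\,dx = \E_{Z\sim q}[\tfrac{p(Z)}{q(Z)}k_\X(\cd,Z)]$, I would form the self-normalized importance estimate $\tilde m := \sum_{j=1}^N \tilde w_j\, k_\X(\cd,Z_j)$ with $\tilde w_j \propto p(Z_j)/q(Z_j)$ and $\sum_j \tilde w_j = 1$, so that $\tilde m \in \mathcal{M}_N$. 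The assumption $\sup_x p(x)/q(x) < \infty$ makes the importance ratio bounded, hence the estimator has finite variance, and a standard bias--variance computation in $\H_\X$ (using boundedness of $k_\X$) gives $\|\tilde m - m_P\|_{\H_\X} = O_p(N^{-1/2})$. This is the only place the density-ratio assumption enters, and it is what produces the $O_p(N^{-1/2})$ term.

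The second ingredient is the convergence of the herding iterates. I would observe that the update in Algorithm \ref{al:resampling_gen} is the greedy/conditional-gradient step for minimizing $J(g):=\tfrac12\|g-\hm_P\|_{\H_\X}^2$ over $\mathcal{M}_N$: the maximizer of $\hm_P(x)-\tfrac1p\sum_{j<p}k_\X(x,\bX_j)$ over $x\in\{Z_1,\dots,Z_N\}$ is the vertex of $\mathcal{M}_N$ best aligned with $\hm_P-\tfrac{p-1}{p}\check{m}^{(p-1)}$, the herding form of the negative-gradient direction, and the running average $\check{m}^{(p)}=\frac1p\sum_{i\le p}k_\X(\cd,\bX_i)$ realizes the step size $1/p$. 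Because $k_\X$ is bounded, $\mathcal{M}_N$ has bounded diameter (curvature), so the Frank--Wolfe descent lemma with step size $1/p$, applied against a feasible comparison point, yields the objective-gap bound $\|\check{m}_P-\hm_P\|_{\H_\X}^2 \le \min_{g\in\mathcal{M}_N}\|g-\hm_P\|_{\H_\X}^2 + O(\ln\ell/\ell)$. The $\ln\ell$ factor is the price of the non--line-search step size $1/p$ together with non-realizability ($\hm_P\notin\mathcal{M}_N$ in general); deriving it reduces to solving a recursion of the form $p^2 a_p \le (p-1)^2 a_{p-1} + O(1)$ for a rescaled gap $a_p$.

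It then remains to transfer these facts to a bound on $\|\check{m}_P - m_P\|_{\H_\X}$. Using $\tilde m\in\mathcal{M}_N$ I bound $\min_{g\in\mathcal{M}_N}\|g-\hm_P\|_{\H_\X}\le\|\tilde m-\hm_P\|_{\H_\X}\le\|\hm_P-m_P\|_{\H_\X}+O_p(N^{-1/2})$. Writing $g^\ast:=\Pi_{\mathcal{M}_N}(\hm_P)$ for the metric projection and using its non-expansiveness gives $\|g^\ast-m_P\|_{\H_\X}\le\|g^\ast-\tilde m\|_{\H_\X}+\|\tilde m-m_P\|_{\H_\X}\le\|\hm_P-\tilde m\|_{\H_\X}+O_p(N^{-1/2})\le\|\hm_P-m_P\|_{\H_\X}+O_p(N^{-1/2})$, which crucially keeps the coefficient of $\|\hm_P-m_P\|_{\H_\X}$ equal to one. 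Since $J$ is $1$-strongly convex, the objective gap turns into the iterate bound $\|\check{m}_P-g^\ast\|_{\H_\X}^2=O(\ln\ell/\ell)$. Expanding $\check{m}_P-m_P=(\check{m}_P-g^\ast)+(g^\ast-m_P)$ then reproduces the first squared term $(\|\hm_P-m_P\|_{\H_\X}+O_p(N^{-1/2}))^2$ and the herding term $O(\ln\ell/\ell)$.

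The main obstacle is the cross term $\langle \check{m}_P-g^\ast,\,g^\ast-m_P\rangle_{\H_\X}$ in this expansion. A crude Cauchy--Schwarz estimate only gives $O(\sqrt{\ln\ell/\ell})$, which is weaker than the claimed additive $O(\ln\ell/\ell)$ and would also spoil the unit coefficient inside the square; moreover the projection optimality condition $\langle \hm_P-g^\ast,\,g-g^\ast\rangle_{\H_\X}\le 0$ for $g\in\mathcal{M}_N$ controls the sign against $\check{m}_P$ but not against $m_P$. Obtaining the sharp rate therefore forces one to avoid passing through norms at the end: I would instead run a direct recursion on $\|\check{m}^{(p)}-m_P\|_{\H_\X}^2$ and use the per-step optimality of the greedy vertex selection to absorb the cross term at each iteration (so that its contribution accumulates at the $O(\ln\ell/\ell)$ and $O_p(N^{-1/2})$ scales rather than dominating). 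I expect this bookkeeping, rather than either the importance-sampling estimate or the Frank--Wolfe rate in isolation, to be the crux, and it is presumably where the weaker-assumption companion result Theorem \ref{theo:resampling_app} does the real work.
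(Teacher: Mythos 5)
Your proposal follows the paper's own route in all essentials: Kernel Herding over the finite candidate set is read as Frank--Wolfe with step size $1/p$ applied to $J(g)=\tfrac12\|g-\hm_P\|_{\H_\X}^2$ over $\mathcal{M}_N=\mathrm{conv}\{k_\X(\cd,Z_j)\}_{j=1}^N$; the Freund--Grigas bound supplies the $O(\ln\ell/\ell)$ optimization term (the paper's (\ref{eq:resample_inter_proof})); and the feasible witness for $\inf_{g\in\mathcal{M}_N}\|g-\hm_P\|_{\H_\X}\le\|\hm_P-m_P\|_{\H_\X}+O_p(N^{-1/2})$ is the self-normalized importance-sampling point $\frac{1}{S_N}\sum_j\frac{p(Z_j)}{q(Z_j)}k_\X(\cd,Z_j)$, $S_N=\sum_j p(Z_j)/q(Z_j)$. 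The paper proves the $O_p(N^{-1/2})$ claim exactly as you indicate, except that it splits it into the $1/N$-normalized estimator (bounded in expectation) plus the normalization correction, which it controls by the CLT and the delta method; your one-line ``bias--variance with bounded ratio'' is that same computation, and it is the only place the density-ratio assumption enters in the paper as well.

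Where you diverge is the final transfer from $\hm_P$-centered to $m_P$-centered error, and here your diagnosis is sharper than the paper's own write-up: the proof of Theorem \ref{theo:resampling_app} stops at (\ref{eq:proof_upper}), which bounds $\|\hm_P-\check{m}_P\|_{\H_\X}^2$ --- the distance of the herding output to its own target --- and then asserts the theorem, whose left-hand side is $\|m_P-\check{m}_P\|_{\H_\X}^2$. The cross term you identify is never addressed there; a blunt triangle inequality would give coefficient $2$ on $\|\hm_P-m_P\|_{\H_\X}$ plus an $O(\sqrt{\ln\ell/\ell})$ remainder. So Theorem \ref{theo:resampling_app} does not ``do the real work'' on this point: it is the same argument under weaker (non-i.i.d.) assumptions on the candidates, with the same silent last step. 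Your projection/strong-convexity patch is in fact the cleanest correct repair: non-expansiveness of the projection keeps the unit coefficient, $1$-strong convexity of $J$ turns the Frank--Wolfe gap into $\|\check{m}_P-\Pi_{\mathcal{M}_N}(\hm_P)\|_{\H_\X}^2=O(\ln\ell/\ell)$, and the resulting bound $\|\check{m}_P-m_P\|_{\H_\X}\le\|\hm_P-m_P\|_{\H_\X}+O_p(N^{-1/2})+O(\sqrt{\ln\ell/\ell})$ is all that Corollaries \ref{coro:resampling} and \ref{coro:resamp_pred} actually need (they are insensitive to the cross term up to logarithmic factors). I would advise against the direct recursion you sketch: because the greedy vertex is selected against $\hm_P$ rather than $m_P$, a per-step expansion of $\|\check{m}^{(p)}-m_P\|_{\H_\X}^2$ leaves a residual of order $\gamma_p\,\|\hm_P-m_P\|_{\H_\X}\,\mathrm{Diam}(\mathcal{M}_N)$, which accumulates to a term \emph{linear} in $\|\hm_P-m_P\|_{\H_\X}$ and is therefore worse than the quadratic term when the input error is small. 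In short, your proof is the paper's proof done more carefully; the literal form of (\ref{eq:theo_resampling}) --- unit coefficient inside the square with a purely additive $O(\ln\ell/\ell)$ --- is not established by the paper either, and your coefficient-one bound with the $O(\sqrt{\ln\ell/\ell})$ remainder is the statement to carry forward.
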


Our proof in Appendix \ref{sec:proof_resampling} relies on the fact that Kernel Herding can be seen as the Frank-Wolfe optimization method \citep{BacJulObo12}. Indeed, the error $O(\ln \ell / \ell)$ in (\ref{eq:theo_resampling}) comes from the optimization error of the Frank-Wolfe method after $\ell$ iterations \cite[Bound 3.2]{FreGri14}. On the other hand, the error $O_p(N^{-1/2})$ is due to the approximation of the solution space by a finite set $\{Z_1,\dots,Z_N\}$.
These errors will be small if $N$ and $\ell$ are large enough and the error of the given estimator $\| \hm_P - m_P \|_{\H_\X}$ is relatively large. This is formally stated in Corollary \ref{coro:resampling} below.

Theorem \ref{theo:resampling} assumes that the candidate samples are i.i.d.\ with a  density $q$. The assumption $\sup_{x \in \X} p(x)/q(x) < \infty$ requires that the support of $q$ contains that of $p$. This is a formal characterization of the explanation in Section \ref{sec:resampling} that the samples $X_1,\dots,X_N$ should cover the support of $P$ sufficiently.
Note that the statement of Theorem \ref{theo:resampling} also holds for non i.i.d.\ candidate samples, as shown in Theorem \ref{theo:resampling_app} of Appendix \ref{sec:proof_resampling}.

\paragraph{Convergence rates as $\hm_P$ goes to $m_P$.}
Theorem \ref{theo:resampling} provides convergence rates when the estimator $\hm_P$ is fixed. In Corollary \ref{coro:resampling} below, we let $\hm_P$ approach $m_P$, and provide convergence rates for $\check{m}_P$ of Algorithm \ref{al:resampling_gen} approaching $m_P$. This corollary directly follows from Theorem \ref{theo:resampling}, since the constant terms in $O_p(N^{-1/2})$ and $O(\ln \ell / \ell)$ in (\ref{eq:theo_resampling}) do not depend on $\hm_P$, which can be seen from the proof in Section \ref{sec:proof_resampling}.

\begin{coro} \label{coro:resampling}
Assume that $P$ and $Z_1,\dots,Z_N$ satisfy the conditions in Theorem \ref{theo:resampling} for all $N$.
Let $\hm_P^{(n)}$ be an estimator of $m_P$ such that $\| \hm_P^{(n)} - m_P \|_{\H_\X} = O_p(n^{-b})$ as $n \to \infty$ for some constant $b > 0$.\footnote{Here the estimator $\hm_P^{(n)}$ and the candidate samples $Z_1,\dots,Z_N$ can be dependent.}
Let $N = \ell = \lceil n^{2b} \rceil$.
Let $\bX_1^{(n)},\dots,\bX_\ell^{(n)}$ be samples given by Algorithm \ref{al:resampling_gen} applied to $\hm_P^{(n)}$ with candidate samples $\{Z_1,\dots,Z_N\}$.
Then for $\check{m}_P^{(n)} := \frac{1}{\ell} \sum_{i=1}^\ell k_\X(\cd,\bX_i^{(n)})$, we have  
\begin{eqnarray}
&& \| \check{m}_P^{(n)} - m_P \|_{\H_\X} = O_p(n^{-b}) \quad  (n \to \infty).
\end{eqnarray} 
\end{coro}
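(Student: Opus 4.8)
The plan is to derive Corollary \ref{coro:resampling} as a direct consequence of Theorem \ref{theo:resampling} by substituting the prescribed schedule $N = \ell = \lceil n^{2b} \rceil$ into the error bound (\ref{eq:theo_resampling}) and tracking the asymptotic orders carefully. First I would observe that the hypothesis $\| \hm_P^{(n)} - m_P \|_{\H_\X} = O_p(n^{-b})$ plays the role of the generic estimator $\hm_P$ in Theorem \ref{theo:resampling}, and that the theorem applies for each fixed $n$ because the constants hidden in the $O_p(N^{-1/2})$ and $O(\ln \ell / \ell)$ terms do not depend on the particular estimator (as noted in the corollary's setup and verifiable from the proof in Section \ref{sec:proof_resampling}). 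This uniformity is the crucial structural fact that licenses passing $n \to \infty$ inside the bound rather than only for a fixed estimator.

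Next I would evaluate each of the three order terms under the substitution $N = \ell = \lceil n^{2b} \rceil$. The approximation error becomes $O_p(N^{-1/2}) = O_p(n^{-b})$, matching the order of the given estimator exactly; the Frank-Wolfe optimization error becomes $O(\ln \ell / \ell) = O\!\left( n^{-2b} \ln n \right)$, which is $o(n^{-2b})$ up to the logarithmic factor; and the given-estimator term is $O_p(n^{-b})$ by assumption. Then I would combine these using (\ref{eq:theo_resampling}): the squared error is
\begin{equation*}
\| \check{m}_P^{(n)} - m_P \|_{\H_\X}^2 = \left( O_p(n^{-b}) + O_p(n^{-b}) \right)^2 + O\!\left( n^{-2b} \ln n \right) = O_p(n^{-2b}) + O\!\left( n^{-2b} \ln n \right).
\end{equation*}
Taking square roots gives $\| \check{m}_P^{(n)} - m_P \|_{\H_\X} = O_p(n^{-b} \sqrt{\ln n})$ if one is naive about the logarithm.

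The main obstacle I anticipate is precisely this stray $\sqrt{\ln n}$ factor coming from the Frank-Wolfe term $O(\ln \ell / \ell)$, since the corollary claims the clean rate $O_p(n^{-b})$ without any logarithmic correction. I would resolve this by noting that the optimization error and the given-estimator error are added \emph{before} squaring in the first group but the Frank-Wolfe term sits outside; more carefully, I would absorb the logarithmic term by choosing $N = \ell = \lceil n^{2b} \rceil$ and observing that $n^{-2b} \ln n$ is dominated by any term of strictly larger polynomial order — but here both genuine terms are exactly $n^{-2b}$, so the logarithm does genuinely appear unless one argues it away. The resolution is that $\ln \ell / \ell = \ln(\lceil n^{2b} \rceil)/\lceil n^{2b}\rceil$ and, since the dominant contribution to the squared error is the $O_p(n^{-2b})$ from the first grouped term, the $O(n^{-2b}\ln n)$ summand merely multiplies the overall constant by a slowly growing factor; one then verifies that after taking the square root the statement $O_p(n^{-b})$ holds provided the logarithmic factor is understood to be subsumed, or else one slightly oversamples by setting $N = \ell = \lceil n^{2b} \ln n \rceil$ to kill it cleanly. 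I expect the intended reading is that the polynomial rate $n^{-b}$ is what matters and the schedule is chosen so that neither auxiliary error dominates, so the bookkeeping of the logarithm is the only genuinely delicate point.
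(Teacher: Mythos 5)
Your route is exactly the paper's: the published proof of Corollary \ref{coro:resampling} consists of precisely the remark you make first, namely that one substitutes $N = \ell = \lceil n^{2b} \rceil$ into (\ref{eq:theo_resampling}) and invokes the fact that the constants hidden in $O_p(N^{-1/2})$ and $O(\ln \ell / \ell)$ do not depend on $\hm_P$, as is seen from the proof in Appendix \ref{sec:proof_resampling} (there they are built from $C = \sup_x \sqrt{k_\X(x,x)}$, $\|p/q\|_\infty$, $A$ and $\sigma$ only). So in terms of approach you and the paper coincide, and you are right that this uniformity is the one structural fact that licenses the substitution for a sequence of estimators.

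The logarithmic factor you flag is a genuine issue, and the paper simply glosses over it: under the stated schedule the Frank--Wolfe term is $O(n^{-2b} \ln n)$, which is \emph{not} $O(n^{-2b})$, so a literal application of Theorem \ref{theo:resampling} yields $\| \check{m}_P^{(n)} - m_P \|_{\H_\X} = O_p\left(n^{-b} \sqrt{\ln n}\right)$ rather than the claimed $O_p(n^{-b})$. Be aware, however, that your intermediate attempt to dismiss it --- that the $O(n^{-2b}\ln n)$ summand ``merely multiplies the overall constant by a slowly growing factor'' --- is not a valid argument: a factor growing with $n$ is not a constant, and $n^{-b}\sqrt{\ln n}$ is not $O_p(n^{-b})$. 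Your two closing alternatives are the correct repairs: either state the conclusion up to logarithmic factors, or enlarge the schedule slightly, e.g.\ $N = \ell = \lceil n^{2b} \ln n \rceil$ or $\lceil n^{2b+\epsilon} \rceil$ with any $\epsilon > 0$, which makes $O(\ln \ell/\ell) = O(n^{-2b})$ (respectively $o(n^{-2b})$) while keeping $O_p(N^{-1/2}) = O_p(n^{-b})$, so the claimed rate follows cleanly. With that repair your argument is complete, and on this point it is more careful than the paper itself.
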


Corollary \ref{coro:resampling} assumes that the estimator $\hm_P^{(n)}$ converges to $m_P$ at a rate $O_p(n^{-b})$ for some constant $b > 0$. Then the resulting estimator $ \check{m}_P^{(n)}$ by Algorithm \ref{al:resampling_gen} also converges to $m_P$ at the same rate $O(n^{-b})$, if we set $N = \ell = \lceil n^{2b} \rceil$. This implies that if we use sufficiently large $N$ and $\ell$, the errors $O_p(N^{-1/2})$ and $O(\ln \ell / \ell)$ in (\ref{eq:theo_resampling}) can be negligible, as stated earlier.
Note that $N = \ell = \lceil n^{2b} \rceil$ implies that $N$ and $\ell$ can be smaller than $n$, since typically we have $b \leq 1/2$ ($b=1/2$ corresponds to the convergence rates of parametric models). This provides a support for the discussion in Section \ref{sec:theory_resampling} (reduction of computational cost).

\paragraph{Convergence rates of sampling after resampling.} 
We can derive convergence rates of the estimator $\check{m}_Q$ (\ref{eq:m_Q_resample}) in Section \ref{sec:theory_resampling}. 
Here we consider the following construction of $\check{m}_Q$ as discussed in Section \ref{sec:theory_resampling} (reduction of computational cost): 
(i) First apply Algorithm \ref{al:resampling_gen} to $\hm_P^{(n)}$, and obtain resampling samples $\bX_1^{(n)},\dots,\bX_\ell^{(n)} \in \{Z_1,\dots,Z_N\}$; 
(ii) Copy these samples $\lceil n / \ell \rceil$ times, and let $\bX_1^{(n)},\dots,\bX_{ \ell \lceil n / \ell \rceil }^{(n)}$ be the resulting $\ell \times \lceil n / \ell \rceil$ samples;
(iii) Sample with the conditional distribution $\bX_i^{'(n)} \sim p(\cd | \bX_i)\ (i=1,\dots, \ell\lceil n / \ell \rceil )$, and define 
\begin{equation} \label{eq:mq_rate}
 \check{m}_Q^{(n)} := \frac{1}{ \ell \lceil n / \ell \rceil } \sum_{i=1}^{ \ell \lceil n / \ell \rceil } k_\X(\cd,\bX_i^{'(n)}).
\end{equation}

The following corollary is a consequence of Corollary \ref{coro:resampling}, Theorem \ref{theo:finite_sample_bound} and the bound (\ref{eq:bound_resampling}). Note that Theorem \ref{theo:finite_sample_bound} obtains convergence in expectation, which implies convergence in probability.

\begin{coro} \label{coro:resamp_pred}
Let $\theta$ be the function defined in Theorem \ref{theo:finite_sample_bound} and assume $\theta \in \H_\X \otimes \H_\X$.
Assume that $P$ and $Z_1,\dots,Z_N$ satisfy the conditions in Theorem \ref{theo:resampling} for all $N$.
Let $\hm_P^{(n)}$ be an estimator of $m_P$ such that $\| \hm_P^{(n)} - m_P \|_{\H_\X} = O_p(n^{-b})$ as $n \to \infty$ for some constant $b > 0$.
Let $N = \ell = \lceil n^{2b} \rceil$.
Then for the estimator $ \check{m}_Q^{(n)}$ defined as (\ref{eq:mq_rate}), we have
\begin{eqnarray*}
&& \| \check{m}_Q^{(n)} - m_Q \|_{\H_\X} = O_p(n^{- \min(b, 1/2)})  \quad  (n \to \infty).
\end{eqnarray*} 
\end{coro}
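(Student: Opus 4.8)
The plan is to chain together the two sources of error---resampling and conditional sampling---using results already established, and then to upgrade the resulting bound in expectation to a statement about convergence in probability. Concretely, I would first control the resampling error through Corollary \ref{coro:resampling}, then feed the resulting uniform-weight kernel mean into the prediction-step bound of Theorem \ref{theo:finite_sample_bound}, and finally combine the two terms and close with a conditional Markov argument. This matches the dependencies flagged in the statement: Corollary \ref{coro:resampling}, Theorem \ref{theo:finite_sample_bound}, and the bound (\ref{eq:bound_resampling}).

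First I would apply Corollary \ref{coro:resampling} with $N = \ell = \lceil n^{2b} \rceil$ to the resampled points $\bX_1^{(n)}, \dots, \bX_\ell^{(n)}$, which yields $\| \check{m}_P^{(n)} - m_P \|_{\H_\X} = O_p(n^{-b})$ for $\check{m}_P^{(n)} := \frac{1}{\ell} \sum_{i=1}^\ell k_\X(\cd, \bX_i^{(n)})$. The key elementary observation is that copying these $\ell$ samples $\lceil n/\ell \rceil$ times does not alter the empirical kernel mean: writing $M := \ell \lceil n/\ell \rceil$, the $M$ copied points $\bX_1^{(n)}, \dots, \bX_M^{(n)}$ satisfy $\frac{1}{M} \sum_{i=1}^M k_\X(\cd, \bX_i^{(n)}) = \check{m}_P^{(n)}$, so the enlarged sample inherits the $O_p(n^{-b})$ error while providing $M \ge n$ points for the subsequent conditional sampling.

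Next I would invoke Theorem \ref{theo:finite_sample_bound}, treating the (random) $\check{m}_P^{(n)}$ as its fixed input $\hm_P$ and using uniform weights $w_i = 1/M$ over the $M$ copied points. Conditioning on the $\sigma$-algebra $\mathcal{F}_n$ generated by $\{Z_j\}$ together with the resampling step, the boundedness form (\ref{eq:bound_with_boundedness_assumption}) of the theorem (here $\sum_i w_i^2 = 1/M$) gives
\begin{equation*}
\E\!\left[ \| \check{m}_Q^{(n)} - m_Q \|_{\H_\X}^2 \mid \mathcal{F}_n \right] \le \frac{2C}{M} + \| \check{m}_P^{(n)} - m_P \|_{\H_\X}^2 \, \| \theta \|_{\H_\X \otimes \H_\X}.
\end{equation*}
Since $M \ge n$, the first term is $O(n^{-1})$, while the second term is $O_p(n^{-2b})$ by the previous step, so the right-hand side is $O_p\big(n^{-2\min(b,1/2)}\big)$. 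This is exactly where the exponent $\min(b,1/2)$ arises: copying up to $M \ge n$ points is what drives the Monte Carlo term down to $O(n^{-1})$ and hence contributes the rate $n^{-1/2}$, whereas the resampling/estimation term contributes $n^{-b}$.

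The remaining step, and the only real subtlety, is passing from this conditional bound in expectation to an $O_p$ statement for the norm itself. Because Theorem \ref{theo:finite_sample_bound} is stated for a \emph{fixed} input, the display above is conditional on $\mathcal{F}_n$ and its right-hand side $B_n := 2C/M + \| \check{m}_P^{(n)} - m_P \|_{\H_\X}^2 \| \theta \|_{\H_\X \otimes \H_\X}$ is random. Conditional Markov's inequality gives $P(\| \check{m}_Q^{(n)} - m_Q \|_{\H_\X}^2 > \lambda B_n \mid \mathcal{F}_n) \le 1/\lambda$ for every $\lambda > 0$ and every realization; taking total expectation yields $P(\| \check{m}_Q^{(n)} - m_Q \|_{\H_\X}^2 > \lambda B_n) \le 1/\lambda$, so $\| \check{m}_Q^{(n)} - m_Q \|_{\H_\X}^2 = O_p(B_n) = O_p(n^{-2\min(b,1/2)})$, and taking square roots gives the claim. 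The main thing to get right is precisely this handling of the randomness of $\check{m}_P^{(n)}$: the expectation bound cannot be applied unconditionally, so one must condition, bound, and then convert with Markov---which is the remark attached to the corollary statement.
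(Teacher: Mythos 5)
Your proposal is correct and follows essentially the same route as the paper: Corollary \ref{coro:resampling} for the resampling error with $N=\ell=\lceil n^{2b}\rceil$, then Theorem \ref{theo:finite_sample_bound} via the bound (\ref{eq:bound_resampling}) with uniform weights $1/(\ell\lceil n/\ell\rceil)$ over the copied samples, yielding the two terms $O(n^{-1})$ and $O_p(n^{-2b})$ whose combination gives the exponent $\min(b,1/2)$. Your explicit conditioning-plus-Markov step is simply a careful spelling-out of the paper's remark that the expectation bound of Theorem \ref{theo:finite_sample_bound} implies convergence in probability.
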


Suppose $b \leq 1/2$, which holds with basically any nonparametric estimators.
Then Corollary \ref{coro:resamp_pred} shows that the estimator $\hm_Q^{(n)}$ achieves the same convergence rate as the input estimator $\hm_P^{(n)}$. Note that without resampling, the rate becomes $O_p( \sqrt{\sum_{i=1}^n (w_i^{(n)})^2} + n^{-b} )$, where the weights are given by the input estimator $\hm_P^{(n)} := \sum_{i=1}^n w_i^{(n)} k_\X(\cd,X_i^{(n)})$ (see the bound (\ref{eq:bound_with_boundedness_assumption})).
Thanks to resampling, the sum of the weights in the case of Corollary \ref{coro:resamp_pred} becomes $1/( \ell \lceil n / \ell \rceil) \leq 1/\sqrt{n}$, which is usually smaller than $\sqrt{\sum_{i=1}^n (w_i^{(n)})^2}$ and is faster than or equal to $O_p(n^{-b})$. This shows the merit of resampling in terms of convergence rates; see also the discussions in Section \ref{sec:theory_resampling}.

\subsection{Consistency of the overall procedure} \label{sec:consistency_KMCF}
Here we show the consistency of the overall procedure in KMCF. 
This is based on Corollary \ref{coro:resamp_pred}, which shows the consistency of the resampling step followed by the prediction step, and on Theorem 5 of \cite{FukSonGre13}, which guarantees the consistency of Kernel Bayes' Rule in the correction step.
Thus we consider three steps in the following order: (i) resampling; (ii) prediction; (iii) correction. 
More specifically, we show consistency of the estimator (\ref{eq:KBR}) of the posterior kernel mean at time $t$, given that the one at time $t-1$ is consistent. 

To state our assumptions, we will need the following functions $\theta_{\rm pos}: \Y \times \Y \to \R$, $\theta_{\rm obs}: \X \times \X \to \R$, and $\theta_{\rm tr}: \X \times \X \to \R$:
\begin{eqnarray} 
\theta_{\rm pos} (y, \ty) &:=& \int \int k_\X(x_t, \tx_t) dp(x_t | y_{1:t-1}, y_t = y) dp(\tx_t | y_{1:t-1}, y_t = \ty) \label{eq:pos_func}, \\
\theta_{\rm obs} (x, \tx) &:=& \int \int k_\Y(y_t, \ty_t) dp(y_t | x_t = x) dp( \ty_t | x_t = \tx)  \label{eq:obs_func}, \\
\theta_{\rm tra} (x, \tx) &:=& \int \int k_\X(x_t, \tx_t) dp(x_t | x_{t-1} = x) dp(\tx_t | x_{t-1} = \tx ) \label{eq:tra_func}.
\end{eqnarray}
These functions contain the information concerning the distributions involved.
In (\ref{eq:pos_func}), the distribution $p(x_t | y_{1:t-1}, y_t = y)$ denotes the posterior of the state at time $t$, given that the observation at time $t$ is $y_t = y$. Similarly $p(\tx_t | y_{1:t-1}, y_t = \ty)$ is the posterior at time $t$, given that the observation is $y_t = \ty_t$. 
In (\ref{eq:obs_func}), the distributions $p(y_t|x_t = x)$ and $p(\ty_t | x_t = \tx)$ denote the observation model when the state is $x_t = x$ or $x_t = \tx$, respectively.
In (\ref{eq:tra_func}), the distributions $p(x_t | x_{t-1} = x)$ and $p(\tx_t | x_{t-1} = \tx)$ denote the transition model with the previous state given by $x_{t-1} = x$ or $x_{t-1} = \tx$, respectively.

For simplicity of presentation, we consider here ``$N = \ell = n$" for the resampling step. 
Below denote by $\mathcal{F} \otimes \mathcal{G}$ the tensor product space of two RKHSs $\mathcal{F}$ and $\mathcal{G}$. 

\begin{coro} \label{coro:filtering_consistency}
Let $(X_1,Y_1), \dots, (X_n,Y_n)$ be an i.i.d.\ sample with a joint density $p(x,y) := p(y|x)q(x)$, where $p(y|x)$ is the observation model. 
Assume that the posterior $p(x_t|y_{1:t})$ has a density $p$, and that $\sup_{x \in \X} p(x)/q(x) < \infty$.
Assume that the functions defined by (\ref{eq:pos_func}), (\ref{eq:obs_func}) and (\ref{eq:tra_func}) satisfy $\theta_{\rm pos} \in \H_\Y \otimes \H_\Y$, $\theta_{\rm obs} \in \H_\X \otimes \H_\X$ and $\theta_{\rm tra} \in \H_\X \otimes \H_\X$, respectively.
Suppose that  $\| \hm_{x_{t-1} | y_{1:t-1}} - m_{x_{t-1} | y_{1:t-1}} \|_{\H_\X} \to 0$ as $n \to \infty$ in probability.  
Then for any sufficiently slow decay of regularization constants $\varepsilon_n$ and $\delta_n$ of Algorithm \ref{al:KBR_simple}, we have 
\[ \| \hm_{x_t | y_{1:t}} - m_{x_t | y_{1:t}} \|_{\H_\X} \to 0 \quad (n \to \infty) \]
in probability. 
\end{coro}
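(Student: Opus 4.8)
The plan is to decompose one KMCF iteration into the three stages from which it is built—resampling, prediction, correction—and to chain together the consistency guarantees already established for each. I would track the RKHS error between each intermediate estimate and its target kernel mean, starting from the hypothesis $\| \hm_{x_{t-1}|y_{1:t-1}} - m_{x_{t-1}|y_{1:t-1}} \|_{\H_\X} \to 0$ in probability and arriving at the claim for time $t$. Throughout, the two technical inputs are (a) Theorem \ref{theo:resampling} together with the prediction bound of Theorem \ref{theo:finite_sample_bound}, which jointly govern the resampling and prediction stages, and (b) Theorem 5 of \cite{FukSonGre13}, which governs the correction stage.

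First I would treat resampling and prediction together by specializing the abstract setting of Sections \ref{sec:upper_bound} and \ref{sec:rates_resampling}. The identification is $P = p(x_{t-1}|y_{1:t-1})$, with the conditional $p(\cd|x)$ equal to the transition model $p(x_t|x_{t-1})$, the marginal $Q = p(x_t|y_{1:t-1})$ equal to the prior, and the function $\theta$ of Theorem \ref{theo:finite_sample_bound} equal to $\theta_{\rm tra}$; the hypothesis $\theta_{\rm tra} \in \H_\X \otimes \H_\X$ therefore supplies exactly the smoothness required. The candidate set for Algorithm \ref{al:resampling_gen} is the training state sample $\{X_i\}$ drawn i.i.d.\ from $q$, and the density-ratio assumption $\sup_x p(x)/q(x) < \infty$ ensures it covers the support of the resampled distribution, as Theorem \ref{theo:resampling} demands. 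Taking $N = \ell = n$, Theorem \ref{theo:resampling} gives $\| \check{m}_{x_{t-1}|y_{1:t-1}} - m_{x_{t-1}|y_{1:t-1}} \|_{\H_\X} \to 0$ in probability, since the extra errors $O_p(N^{-1/2})$ and $O(\ln \ell / \ell)$ vanish and the leading term inherits the hypothesized convergence. Feeding this into the prediction bound (\ref{eq:bound_resampling}), whose right-hand side is $2C/n + \| \check{m}_P - m_P \|_{\H_\X}^2 \| \theta_{\rm tra} \|_{\H_\X \otimes \H_\X}$, then yields $\| \hm_{x_t|y_{1:t-1}} - m_{x_t|y_{1:t-1}} \|_{\H_\X} \to 0$ in probability. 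This mirrors Corollary \ref{coro:resamp_pred}, now in the $N=\ell=n$ regime and requiring only convergence in probability of the input rather than a fixed rate.

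Next I would handle the correction step by invoking Theorem 5 of \cite{FukSonGre13}. Here the i.i.d.\ joint sample $(X_i,Y_i)$ plays the role of the training data, the estimated prior mean $\hm_{x_t|y_{1:t-1}}$ just shown to be consistent plays the role of the input embedding $\hm_\pi$, and the output (\ref{eq:KBR}) is the posterior embedding estimate $\hm_{x_t|y_{1:t}}$. The regularity conditions of that theorem are matched here by the assumptions $\theta_{\rm obs} \in \H_\X \otimes \H_\X$ and $\theta_{\rm pos} \in \H_\Y \otimes \H_\Y$ on the observation model and the posterior map, while the remaining hypotheses—$n \to \infty$, $\hm_\pi \to m_\pi$, and sufficiently slow decay of $\varepsilon_n,\delta_n$—are all in force. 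Its conclusion is exactly $\| \hm_{x_t|y_{1:t}} - m_{x_t|y_{1:t}} \|_{\H_\X} \to 0$ in probability, completing the iteration.

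The hard part will be the correction step rather than the prediction step, for two reasons. First, Theorem 5 of \cite{FukSonGre13} must be applied with a prior embedding that is itself a random, data-dependent estimate converging to $m_{x_t|y_{1:t-1}}$, rather than the true prior mean; I would verify that its statement genuinely accommodates a consistent estimated prior and that the mode of convergence I obtain above matches its hypothesis. Second, when passing convergence in probability through the bound (\ref{eq:bound_resampling}) I must be careful that its left-hand side is an expectation over the prediction sampling \emph{conditioned} on the resampled points; I would combine a conditional Markov inequality with the boundedness of $\| \check{m}_P - m_P \|_{\H_\X}^2$ (guaranteed by the bounded kernel) to move from in-probability convergence of $\| \check{m}_P - m_P \|_{\H_\X}$ to that of $\| \hm_{x_t|y_{1:t-1}} - m_{x_t|y_{1:t-1}} \|_{\H_\X}$. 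Reconciling the abstract smoothness hypotheses of the cited Kernel Bayes' Rule theorem with the concrete functions $\theta_{\rm pos}$ and $\theta_{\rm obs}$ is the remaining bookkeeping.
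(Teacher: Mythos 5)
Your proposal is correct and takes essentially the same route as the paper: the paper proves this corollary by chaining the resampling--prediction analysis (Corollary \ref{coro:resamp_pred}, built on Theorem \ref{theo:resampling} and Theorem \ref{theo:finite_sample_bound} with $\theta = \theta_{\rm tra}$) with Theorem 5 of \cite{FukSonGre13} for the correction step under $\theta_{\rm obs} \in \H_\X \otimes \H_\X$, $\theta_{\rm pos} \in \H_\Y \otimes \H_\Y$ and slowly decaying $\varepsilon_n, \delta_n$. Your one departure---re-deriving the resampling/prediction stage directly from Theorem \ref{theo:resampling} with $N=\ell=n$ and the bound (\ref{eq:bound_resampling}), so that the argument runs under mere convergence in probability rather than a rate $O_p(n^{-b})$---is a minor and sound sharpening of the paper's direct citation of Corollary \ref{coro:resamp_pred}, whose formal hypotheses assume such a rate.
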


Corollary \ref{coro:filtering_consistency} follows from Theorem 5 of \cite{FukSonGre13} and Corollary \ref{coro:resamp_pred}.
The assumptions $\theta_{\rm pos} \in \H_\Y \otimes \H_\Y$ and $\theta_{\rm obs} \in \H_\X \otimes \H_\X$ are due to Theorem 5 of \cite{FukSonGre13} for the correction step, while the assumption $\theta_{\rm tra} \in \H_\X \otimes \H_\X$ is due to Theorem \ref{theo:finite_sample_bound} for the prediction step, from which Corollary \ref{coro:resamp_pred} follows.
As we discussed in footnote 4 of Section \ref{sec:upper_bound}, these essentially assume that the functions $\theta_{\rm pos}$, $\theta_{\rm obs}$ and $\theta_{\rm tra}$ are smooth. Theorem 5 of \cite{FukSonGre13} also requires that the regularization constants $\varepsilon_n, \delta_n$ of Kernel Bayes' Rule should decay sufficiently slowly, as the sample size goes to infinity ($\varepsilon_n, \delta_n \to 0$ as $n \to \infty$). For details, see Sections 5.2 and 6.2 in \cite{FukSonGre13}.

It would be more interesting to investigate the convergence rates of the overall procedure. However, this requires a refined theoretical analysis of Kernel Bayes' Rule, which is beyond the scope of this paper. This is because currently there is no theoretical result on convergence rates of Kernel Bayes' Rule as an estimator of a posterior kernel mean (existing convergence results are for the expectation of function values; see Theorems 6 and 7 in \cite{FukSonGre13}). This remains a topic for future research. 

\section{Experiments}
\label{sec:experiment}
This section is devoted to experiments. 
In Section \ref{sec:exp_resampling}, we conduct basic experiments on the prediction and resampling steps, before going on to the filtering problem.
Here we consider the problem described in Section \ref{sec:theory}.
In Section \ref{sec:exp_synthetic}, the proposed KMCF (Algorithm \ref{al:KBRPF}) is applied to synthetic state-space models. Comparisons are made with existing methods applicable to the setting of the paper (see also Section \ref{sec:related}).
In Section \ref{sec:exp_robot}, we apply KMCF to the real problem of vision-based robot localization.

In the following, $\mathbb{N}(\mu,\sigma^2)$ denotes the Gaussian distribution with mean $\mu \in \R$ and variance $\sigma^2 > 0$.
\subsection{Sampling and resampling procedures} \label{sec:exp_resampling}
The purpose here is to see how the prediction and resampling steps work empirically.
To this end, we consider the problem described in Section \ref{sec:theory} with $\X = \R$ (see Section \ref{sec:upper_bound} for details).
Specifications of the problem are described below.

We will need to evaluate the errors $\| m_P - \hm_P \|_{\H_\X}$ and $\| m_Q - \hm_Q \|_{\H_\X}$, so we need to know the true kernel means $m_P$ and $m_Q$. 
To this end, we define the distributions and the kernel to be Gaussian: this allows us to obtain analytic expressions for $m_P$ and $m_Q$.

\paragraph{Distributions and kernel.}  
More specifically, we define the marginal $P$ and the conditional distribution $p(\cd | x)$ to be Gaussian: $P = \mathbb{N}(0,\sigma_P^2)$ and $p(\cd | x) =  \mathbb{N}(x, \sigma^2_{\rm cond})$. 
Then the resulting $Q = \int p(\cd|x)dP(x)$ also becomes Gaussian: $Q=\mathbb{N}(0,\sigma_P^2 + \sigma_{\rm cond}^2)$.
We define $k_\X$ to be the Gaussian kernel: $k_\X(x,x') = \exp( -(x-x')^2 / 2\gamma^2)$. 
We set $\sigma_P = \sigma_{\rm cond} = \gamma = 0.1$. 

\paragraph{Kernel means.} Due to the convolution theorem of Gaussian functions, the kernel means $m_P = \int k_\X(\cd,x)dP(x)$ and $m_Q = \int k_\X(\cd,x)dQ(x)$ can be analytically computed: $m_P(x) = \sqrt{\frac{\gamma^2}{\sigma^2 + \gamma^2}} \exp(-\frac{x^2}{2(\gamma^2 + \sigma_P^2)})$, $m_Q(x) = \sqrt{\frac{\gamma^2}{(\sigma^2 + \sigma_{\rm cond}^2 + \gamma^2)}} \exp(- \frac{x^2}{2(\sigma_P^2 + \sigma_{\rm cond}^2 + \gamma^2)})$.

\paragraph{Empirical estimates.}
We artificially defined an estimate $\hm_P = \sum_{i=1}^n w_i k_\X(\cd,X_i)$ as follows. 
First, we generated $n=100$ samples $X_1,\dots,X_{100}$ from a uniform distribution on $[-A,A]$ with some $A > 0$ (specified below).
We computed the weights $w_1,\dots,w_n$ by solving an optimization problem 
\[ \min_{w \in \R^n} \| \sum_{i=1}^n w_i k_\X(\cd,X_i) - m_P \|_\H^2 + \lambda \| w \|^2,\]
and then applied normalization so that $\sum_{i=1}^n w_i = 1$.
Here $\lambda > 0$ is a regularization constant, which allows us to control the tradeoff between the error $\| \hm_P - m_P \|_{\H_\X}^2$ and the quantity $ \sum_{i=1}^n w_i^2 = \| w \|^2$. 
If $\lambda$ is very small, the resulting $\hm_P$ becomes very accurate, i.e.,\ $\| \hm_P - m_P \|_{\H_\X}^2$ is small, but has large $\sum_{i=1}^n w_i^2$.
If $\lambda$ is large, the error $\| \hm_P - m_P \|_{\H_\X}^2$ may not be very small, but $\sum_{i=1}^n w_i^2$ becomes small.
This enables us to see how the error $\| \hm_Q - m_Q \|_{\H_\X}^2$ changes as we vary these quantities.

\paragraph{Comparison.}
Given $\hm_P = \sum_{i=1}^n w_i k_\X(\cd,X_i)$, we wish to estimate the kernel mean $m_Q$.
We compare three estimators:
\begin{itemize}
\item woRes: Estimate $m_Q$ without resampling. Generate samples $X'_i \sim p(\cd | X_i)$ to produce the estimate $\hm_Q = \sum_{i=1}^n w_i k_\X(\cd,X'_i)$. This corresponds to the estimator discussed in Section \ref{sec:upper_bound}.
\item Res-KH: First apply the resampling algorithm of Algorithm \ref{al:resampling} to $\hm_P$, yielding $\bX_1,\dots,\bX_n$. Then generate $\bX'_i \sim p(\cd | \bX_i)$ for each $\bX_i$, giving the estimate $\hm_Q = \frac{1}{n} \sum_{i=1}^n k(\cd,\bX'_i)$.
This is the estimator discussed in Section \ref{sec:theory_resampling}.
\item Res-Trunc: Instead of Algorithm \ref{al:resampling}, first truncate negative weights in $w_1,\dots,w_n$ to be $0$, and apply normalization to make the sum of the weights to be $1$. Then apply the multinomial resampling algorithm of particle methods, and estimate $\hm_Q$ as Res-KH.
\end{itemize}

\paragraph{Demonstration.}  \label{sec:demonstration_exp}
Before starting quantitative comparisons, we demonstrate how the above estimators work. 
Figure \ref{fig:demo1} shows demonstration results with $A=1$.
First, note that for $\hm_P = \sum_{i=1}^n w_i k(\cd,X_i)$, samples associated with large weights are located around the mean of $P$, as the standard deviation of $P$ is relatively small $\sigma_P = 0.1$. 
Note also that some of the weights are negative.
In this example, the error of $\hm_P$ is very small $\| m_P - \hm_P \|_{\H_\X}^2 = 8.49e-10$, while that of the estimate $\hm_Q$ given by woRes is  $\| \hm_Q - m_Q \|_{\H_\X}^2 = 0.125$.
This shows that even if $\| m_P - \hm_P \|_{\H_\X}^2$ is very small, the resulting $\| \hm_Q - m_Q \|_{\H_\X}^2$ may not be small, as implied by Theorem \ref{theo:finite_sample_bound} and the bound (\ref{eq:bound_with_boundedness_assumption}).

We can observe the following. First, Algorithm \ref{al:resampling} successfully discarded samples associated with very small weights. 
Almost all the generated samples $\bX_1,\dots,\bX_n$ are located in $[ -2\sigma_P, 2\sigma_P]$, where $\sigma_P$ is the standard deviation of $P$.
The error is $\| \check{m}_P - m_P \|_{\H_\X}^2 = 4.74e-5$, which is greater than $\| m_P - \hm_P \|_{\H_\X}^2$. This is due to the additional error caused by the resampling algorithm.
Note that the resulting estimate $\check{m}_Q$ is of the error $\| \check{m}_Q - m_Q \|_{\H_\X}^2 = 0.00827$. This is much smaller than the estimate $\hm_Q$ by woRes, showing the merit of the resampling algorithm.

Res-Trunc first truncated the negative weights in $w_1,\dots,w_n$. 
Let us see the region where the density of $P$ is very small, i.e.\ the region outside $[-2 \sigma_P, 2 \sigma_P]$.
We can observe that the absolute values of weights are very small in this region.
Note that there exist positive and negative weights. These weights maintain balance such that the amounts of positive and negative values are almost the same.
Therefore the truncation of the negative weights breaks this balance. As a result, the amount of the positive weights surpasses the amount needed to represent the density of $P$.
This can be seen from the histogram for Res-Trunc: some of the samples $\bX_1,\dots,\bX_n$ generated by Res-Trunc are located in the region where the density of $P$ is very small.
Thus the resulting error $\| \check{m}_P - m_P \|_{\H_\X}^2 = 0.0538$ is much larger than that of Res-KH.
This demonstrates why the resampling algorithm of particle methods is not appropriate for kernel mean embeddings, as discussed in Section \ref{sec:resampling}.

\begin{figure}[t]
\begin{center}
\raisebox{10mm}{\includegraphics[width=0.9\columnwidth]{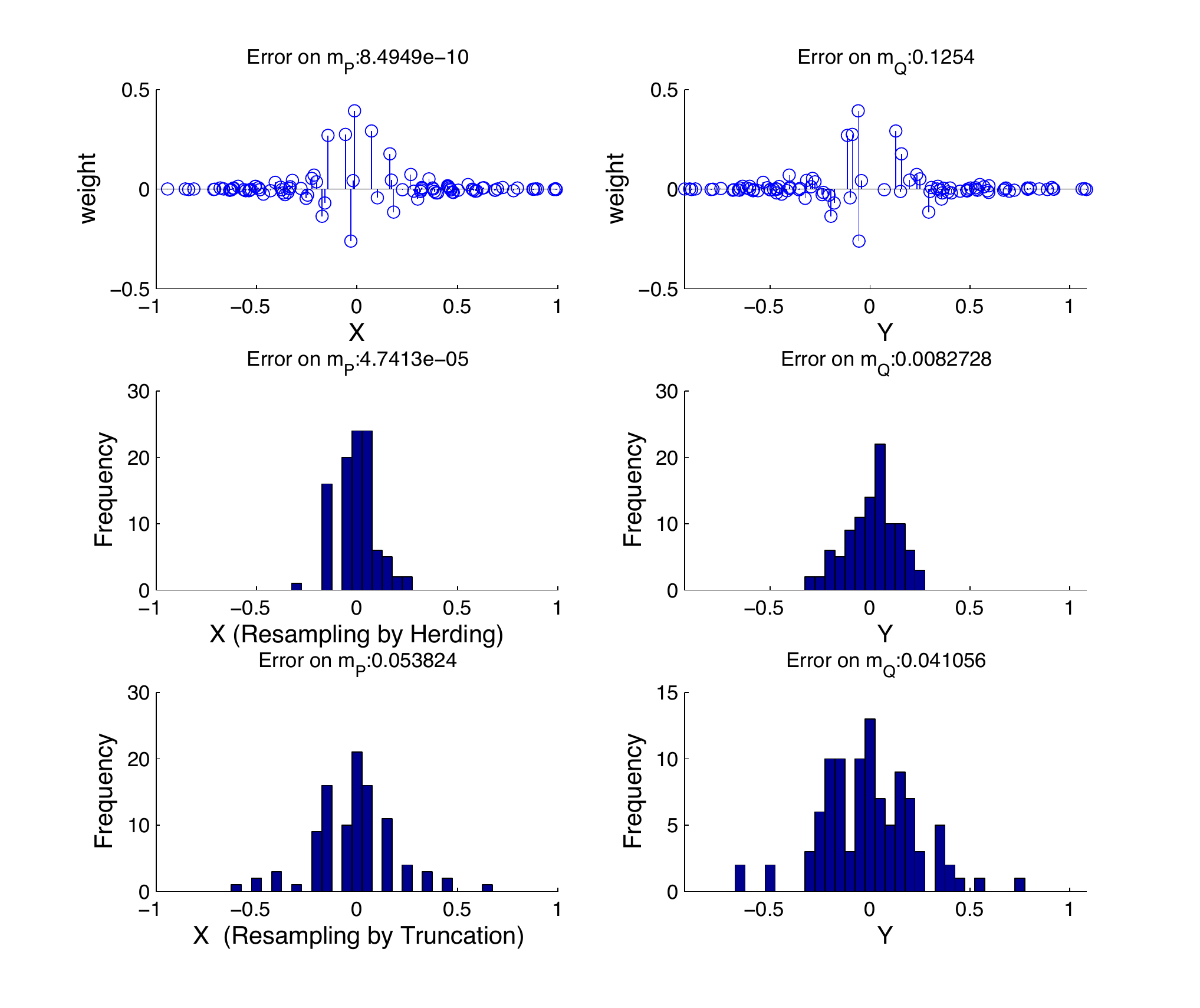}}
\caption{Results of the experiments from Section \ref{sec:demonstration_exp}. 
Top left and right: sample-weight pairs of $\hm_P = \sum_{i=1}^n w_i k_\X(\cd,X_i)$ and $\hm_Q = \sum_{i=1}^n w_i k(\cd,X'_i)$.
Middle left and right: histogram of samples $\bX_1,\dots,\bX_n$ generated by Algorithm \ref{al:resampling}, and that of samples $\bX'_1,\dots,\bX'_n$ from the conditional distribution.
Bottom left and right: histogram of samples generated with multinomial resampling after truncating negative weights, and that of samples from the conditional distribution.}
\label{fig:demo1}
\end{center}
\end{figure}

\paragraph{Effects of the sum of squared weights.}
The purpose here is to see how the error $\| \hm_Q - m_Q \|_{\H_\X}^2$ changes as we vary the quantity $\sum_{i=1}^n w_i^2$  (recall that the bound (\ref{eq:bound_with_boundedness_assumption}) indicates that $\| \hm_Q - m_Q \|_{\H_\X}^2$ increases as $\sum_{i=1}^n w_i^2$ increases).
To this end, we made $\hm_P = \sum_{i=1}^n w_i k_\X(\cd,X_i)$ for several values of the regularization constant $\lambda$ as described above.
For each $\lambda$, we constructed $\hm_P$, and estimated $m_Q$ using each of the three estimators above.
We repeated this $20$ times for each $\lambda$, and averaged the values of $\| \hm_P - m_P \|_{\H_\X}^2$, $\sum_{i=1}^n w_i^2$ and the errors $\| \hm_Q - m_Q \|_{\H_\X}^2$ by the three estimators.
Figure \ref{fig:varyingSSW} shows these results, where the both axes are in the log scale.
Here we used $A = 5$ for the support of the uniform distribution.\footnote{This enables us to maintain the values for $\| \hm_P - m_P \|_{\H_\X}^2$ in almost the same amount, while changing the values for $\sum_{i=1}^n w_i^2$.}
The results are summarized as follows:
\begin{itemize}
\item The error of woRes (blue) increases proportionally to the amount of $\sum_{i=1}^n w_i^2$. This matches the bound (\ref{eq:bound_with_boundedness_assumption}).
\item The error of Res-KH are not affected by $\sum_{i=1}^n w_i^2$. Rather, it changes in parallel with the error of $\hm_P$. This is explained by the discussions in Section \ref{sec:theory_resampling} on how our resampling algorithm improves the accuracy of the sampling procedure.
\item Res-Trunc is worse than Res-KH, especially for large $\sum_{i=1}^n w_i^2$.
This is also explained with the bound (\ref{eq:bound_resampling}).  Here $\check{m}_P$ is the one given by Res-Trunc, so the error $\|\check{m}_P - m_P \|_{\H_\X}$ can be large due to the truncation of negative weights, as shown in the demonstration results.
This makes the resulting error $\| \check{m}_Q - m_Q \|_{\H_\X}$ large.

\end{itemize}
Note that $m_P$ and $m_Q$ are different kernel means, so it can happen that the errors $\| m_Q - \check{m}_Q \|_{\H_\X}$ by Res-KH are less than $\| m_p - \hm_P \|_{\H_\X}$, as in Figure \ref{fig:varyingSSW}.

\begin{figure}[t]
\begin{center}
\raisebox{10mm}{\includegraphics[width=0.7\columnwidth]{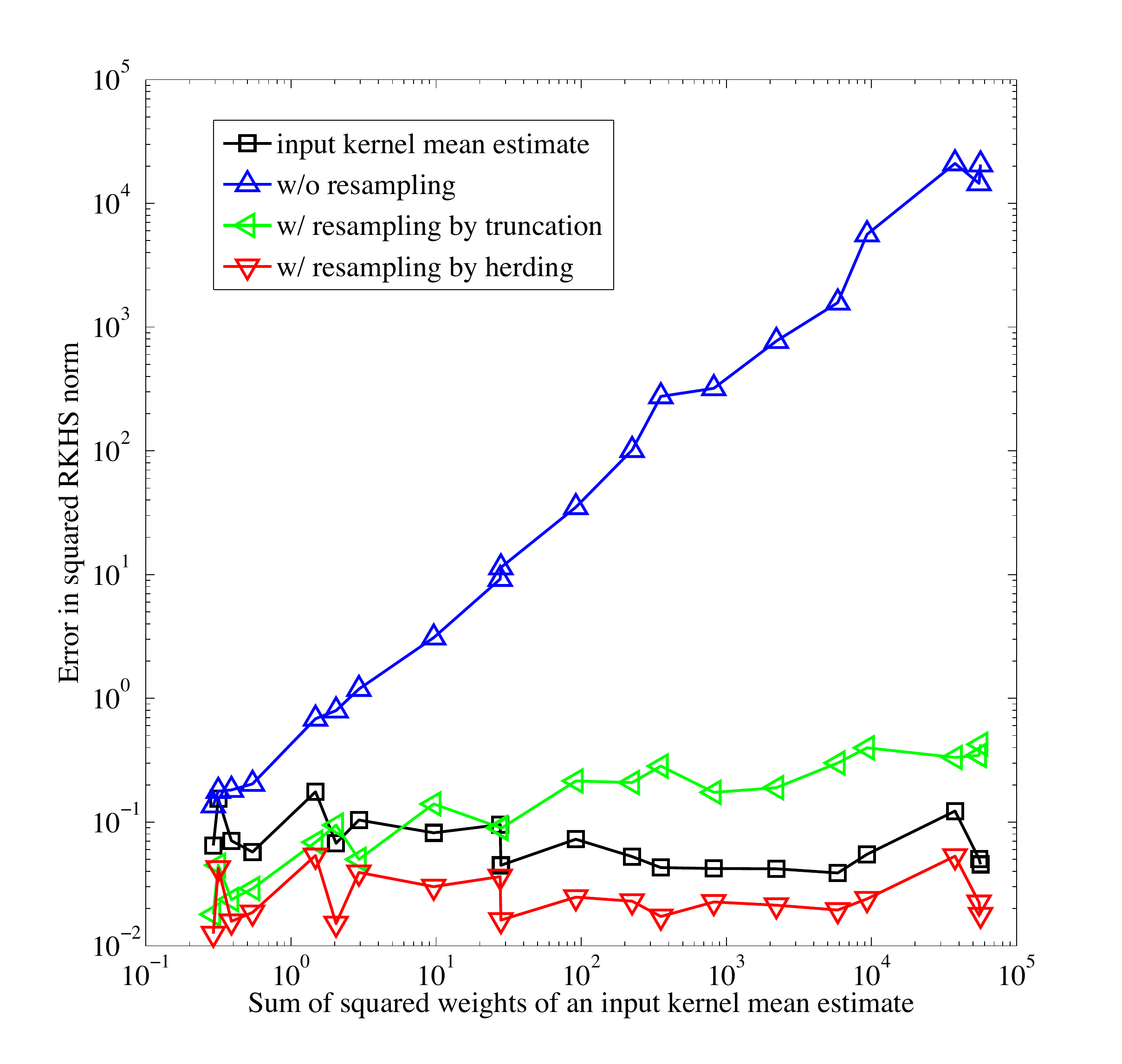}}
\caption{Results of synthetic experiments for the sampling and resampling procedure in Section \ref{sec:exp_resampling}. Vertical axis: errors in the squared RKHS norm. Horizontal axis: values of $\sum_{i=1}^n w_i^2$ for different $\hm_P$. 
Black: the error of $\hm_P$ ($\| \hm_P -m _P \|_{\H_\X}^2)$.
Blue, Green and Red: the errors on $m_Q$ by woRes, Res-KH and Res-Trunc, respectively.}
\label{fig:varyingSSW}
\end{center}
\end{figure}

\subsection{Filtering with synthetic state-space models} \label{sec:exp_synthetic}
Here we apply KMCF to synthetic state-space models. Comparisons were made with the following methods:

\paragraph{kNN-PF \citep{Vlassis2001}} This method uses $k$-NN-based conditional density estimation \citep{Sto77} for learning the observation model. First, it estimates the conditional density of the inverse direction $p(x|y)$ from the training sample $\{ (X_i,Y_i) \}$. 
The learned conditional density is then used as an alternative for the likelihood $p(y_t|x_t)$; this is a heuristic to deal with high-dimensional $y_t$. Then it applies Particle Filter (PF), based on the approximated observation model and the given transition model $p(x_t | x_{t-1})$.

\paragraph{GP-PF \citep{Ferris2006}} This method learns $p(y_t|x_t)$ from $\{ (X_i,Y_i) \}$ with Gaussian Process (GP) regression. Then Particle Filter is applied based on the learned observation model and the transition model.
We used the open-source code\footnote{\small{http://www.gaussianprocess.org/gpml/code/matlab/doc/}} for GP-regression in this experiment, so comparison in computational time is omitted for this method.

\paragraph{KBR filter \citep{FukSonGre11,FukSonGre13}} This method is also based on kernel mean embeddings, as is KMCF. It applies Kernel Bayes' Rule (KBR) in posterior estimation using the joint sample  $\{ (X_i,Y_i) \}$. This method assumes that there also exist training samples for the transition model. Thus in the following experiments, we additionally drew training samples for the transition model.
It was shown \citep{FukSonGre11,FukSonGre13} that this method outperforms Extended and Unscented Kalman Filters, when a state-space model has strong nonlinearity (in that experiment, these Kalman filters were given the full-knowledge of a state-space model). 
We use this method as a baseline.

We used state-space models defined in Table \ref{tb:SSM}, where SSM stands for State Space Model.
In Table \ref{tb:SSM}, $u_t$ denotes a control input at time $t$; $v_t$ and $w_t$ denote independent Gaussian noise: $v_t, w_t \sim \mathbb{N}(0,1)$; $W_t$ denotes 10 dimensional Gaussian noise: $ W_t \sim \mathbb{N}(0,I_{10})$.
We generated each control $u_t$ randomly from the Gaussian distribution $\mathbb{N}(0,1)$.

The state and observation spaces for SSMs \{1a, 1b, 2a, 2b, 4a, 4b\} are defined as $\X = \Y = \R$;  for SSMs \{3a, 3b\}, $\X = \R, \Y = \R^{10}$.
The models in SSMs \{1a, 2a, 3a, 4a\} and SSMs \{1b, 2b, 3b, 4b\} with the same number (e.g.,\ 1a and 1b) are almost the same; the difference is whether $u_t$ exists in the transition model. 
Prior distributions for the initial state $x_1$ for SSMs \{1a, 1b, 2a, 2b, 3a, 3b\} are defined as $p_{\rm init} = \mathbb{N}(0, 1/(1-0.9^2))$, and those for \{4a, 4b\} are defined as a uniform distribution on $[-3, 3]$.

\begin{table}[htdp]
\caption{State-space models (SSM) for synthetic experiments}
\label{tb:SSM}
\begin{center}
\begin{tabular}{| c |c | c|}
\hline 
SSM & transition model & observation model  \\ \hline \hline
1a & $x_t = 0.9 x_{t-1} + v_t$ & $y_t = x_t + w_t$ \\ \hline
1b & $x_t = 0.9 x_{t-1} + \frac{1}{ \sqrt{2}} ( u_t  + v_t )$ & $y_t = x_t + w_t$ \\ \hline
2a & $x_t = 0.9 x_{t-1} + v_t$ & $y_t = 0.5 \exp(x_t / 2) w_t$ \\ \hline
2b &$x_t = 0.9 x_{t-1} + \frac{1}{ \sqrt{2}} ( u_t  + v_t )$ & $y_t = 0.5 \exp(x_t / 2) w_t$ \\ \hline
3a & $x_t = 0.9 x_{t-1} + v_t$ & $y_t = 0.5 \exp( x_t / 2) W_t$ \\ \hline
3b &$x_t = 0.9 x_{t-1} + \frac{1}{ \sqrt{2}} ( u_t  + v_t )$ &$y_t = 0.5 \exp( x_t / 2) W_t$ \\ \hline
4a & $ a_t =  x_{t-1} +  \sqrt{2} v_t$ & $b_t = x_t + w_t$ \\ 
  & $ x_t = \begin{cases}  a_t\ \  ({\rm if}\ |a_t| \leq 3) \\ -3 \  ( {\rm otherwise})  \end{cases}$ & $ y_t = \begin{cases}  b_t\ \ ( {\rm if}\ |b_t| \leq 3) \\ b_t - 6 b_t/|b_t|\ \  ({\rm otherwise}) \end{cases} $  \\  \hline
4b & $ a_t =  x_{t-1} + u_t +  v_t$ & $b_t = x_t + w_t$ \\ 
  & $ x_t = \begin{cases}  a_t\ \  ({\rm if}\ |a_t| \leq 3) \\ -3 \  ( {\rm otherwise})  \end{cases}$ & $ y_t = \begin{cases}  b_t\ \ ( {\rm if}\ |b_t| \leq 3) \\ b_t - 6 b_t/|b_t|\ \  ({\rm otherwise}) \end{cases} $  \\  \hline \hline
\end{tabular}
\end{center}
\end{table}%

SSM 1a and 1b are linear Gaussian models.
SSM 2a and 2b are the so-called stochastic volatility models.
Their transition models are the same as those of SSM 1a and 1b.
On the other hand, the observation model has strong nonlinearity and the noise $w_t$ is multiplicative.
SSM 3a and 3b are almost the same as SSM 2a and 2b. 
The difference is that the observation $y_t$ is 10 dimensional, as $W_t$ is 10 dimensional Gaussian noise.
SSM 4a and 4b are more complex than the other models.
Both the transition and observation models have strong nonlinearities: states and observations located around the edges of the interval $[-3,3]$ may abruptly jump to distant places.

For each model, we generated the training samples $\{ (X_i, Y_i) \}_{i=1}^n$ by simulating the model.
Test data $\{ (x_t,y_t) \}_{t=1}^T$ was also generated by independent simulation (recall that $x_t$ is hidden for each method).
The length of the test sequence was set as $T=100$.
We fixed the number of particles in kNN-PF and GP-PF to $5000$; in primary experiments, we did not observe any improvements even when more particles were used.
For the same reason, we fixed the size of transition examples for KBR filter to $1000$.
Each method estimated the ground truth states $x_1,\dots,x_T$ by estimating the posterior means $\int x_t p(x_t | y_{1:t}) dx_t$ $(t=1,\dots,T)$.
The performance was evaluated with RMSE (Root Mean Squared Errors) of the point estimates, defined as $RMSE = \sqrt{ \frac{1}{T} \sum_{t=1}^T (\hat{x}_t - x_t)^2 }$, where $\hat{x}_t$ is the point estimate.

For KMCF and KBR filter, we used Gaussian kernels for each of $\X$ and $\Y$ (and also for controls in KBR filter).
We determined the hyper-parameters of each method by two-fold cross validation, by dividing the training data into two sequences. 
The hyper-parameters in the GP-regressor for PF-GP were optimized by maximizing the marginal likelihood of the training data. 
To reduce the costs of the resampling step of KMCF, we used the method discussed in Section \ref{sec:theory_resampling} with $\ell = 50$.
We also used the low rank approximation method (Algorithm \ref{al:KBR_lowrank}) and the subsampling method (Algorithm \ref{al:subsampling}) in Appendix \ref{sec:speed_up} to reduce the computational costs of KMCF.
Specifically, we used $r = 10, 20$ (rank of low rank matrices) for Algorithm \ref{al:KBR_lowrank} (described as KMCF-low10 and KMCF-low20 in the results below); $r = 50,100$ (number of subsamples) for Algorithm \ref{al:subsampling} (described as KMCF-sub50 and KMCF-sub100).
We repeated experiments $20$ times for each of different training sample size $n$.

Figure \ref{fig:artificial_woCont} shows the results in RMSE for SSMs \{1a, 2a, 3a, 4a\}, and Figure \ref{fig:artificial_Cont} shows those for SSMs \{1b, 2b, 3b, 4b\}.
Figure \ref{fig:synthetic_time} describes the results in computational time for SSM 1a and 1b; the results for the other models are similar, so we omit them.
We do not show the results of KMCF-low10 in Figure \ref{fig:artificial_woCont} and \ref{fig:artificial_Cont}, since they were numerically unstable and gave very large RMSEs.

\begin{figure}[t]
\vskip -0.05in
\begin{center}
	\subfigure[RMSE (SSM 1a)]{
			\includegraphics[width=0.43\columnwidth]{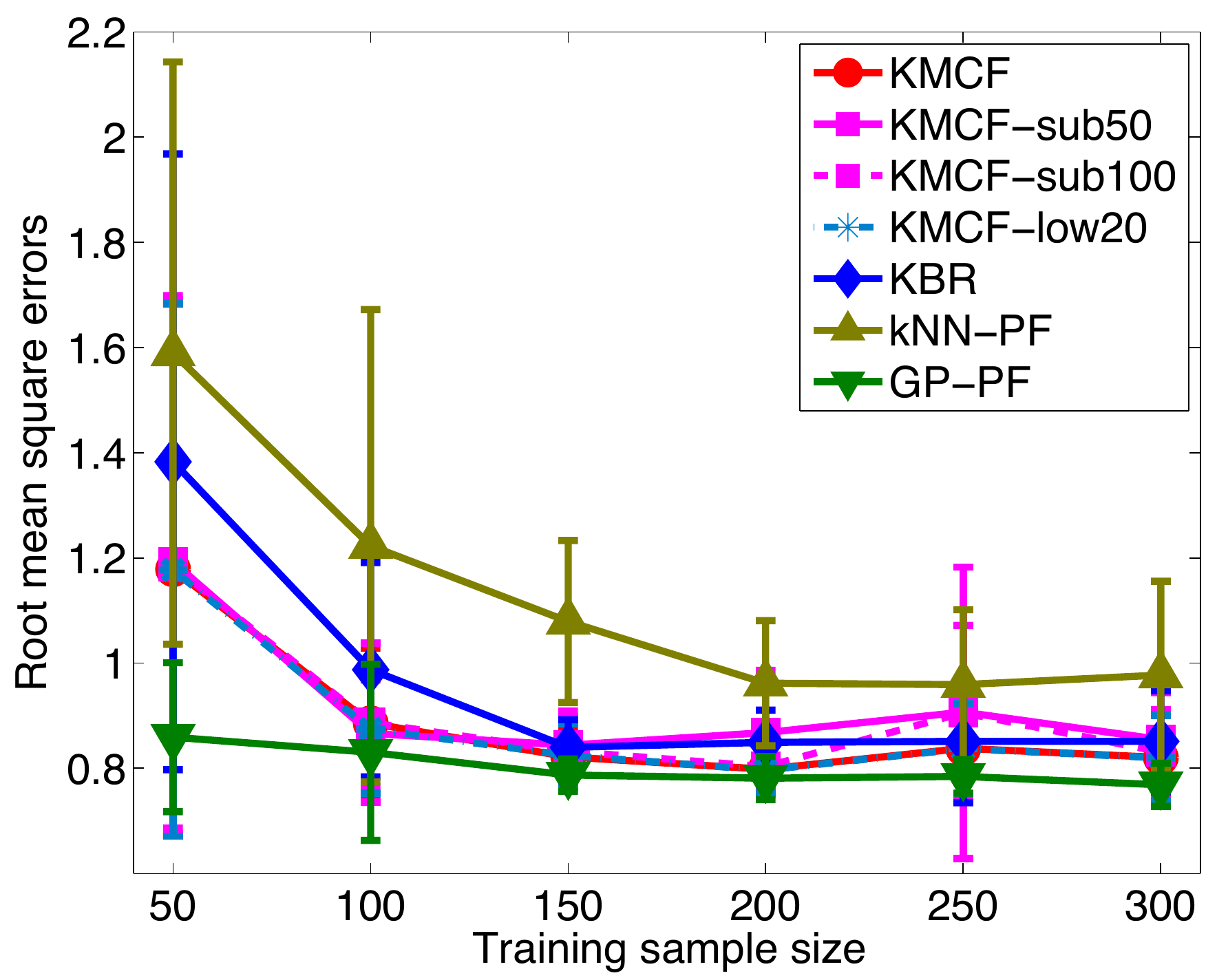}
	}%
	\subfigure[RMSE (SSM 2a)]{
			\includegraphics[width=0.43\columnwidth]{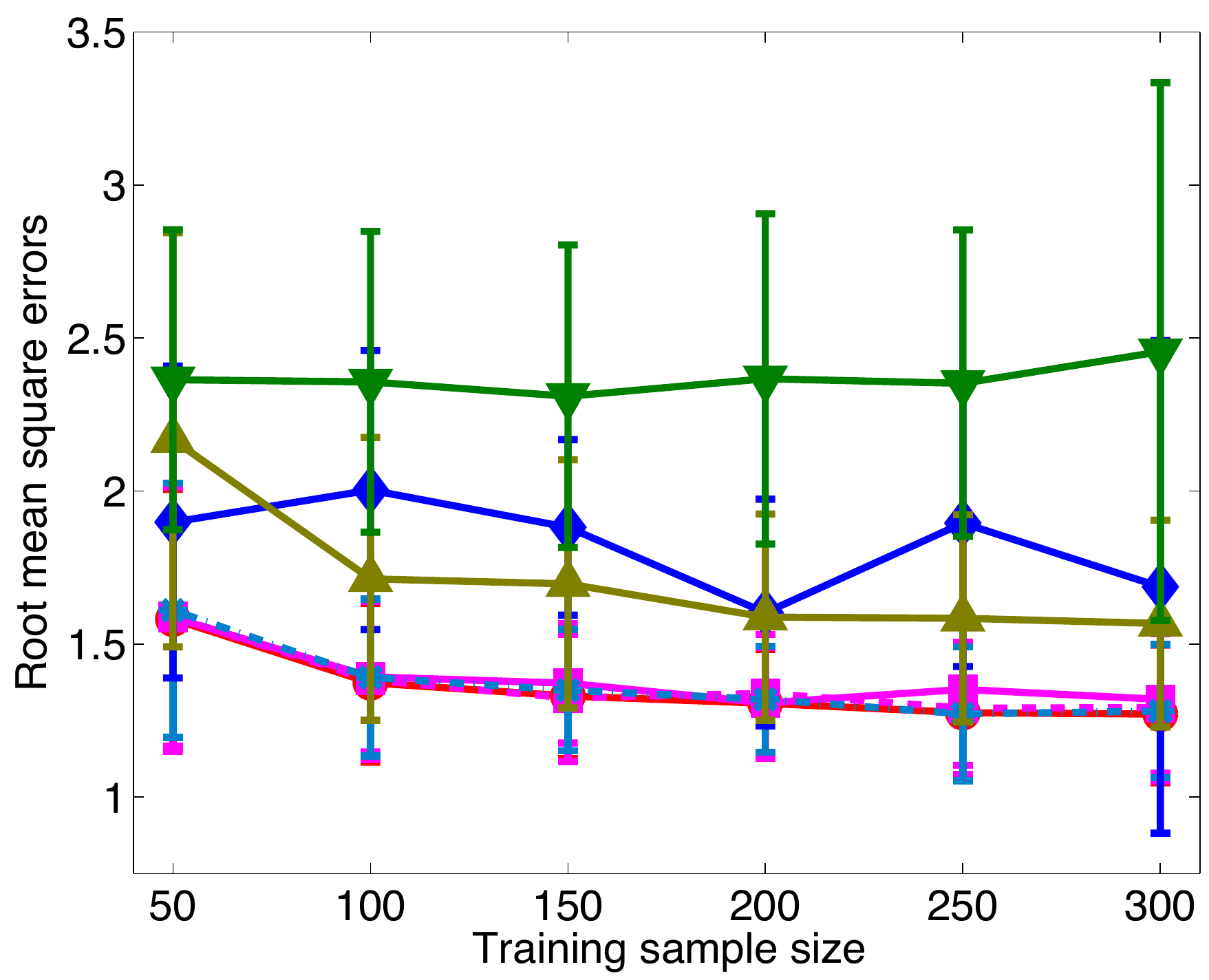}
	}%
	\\ \subfigure[RMSE (SSM 3a)]{
			\includegraphics[width=0.43\columnwidth]{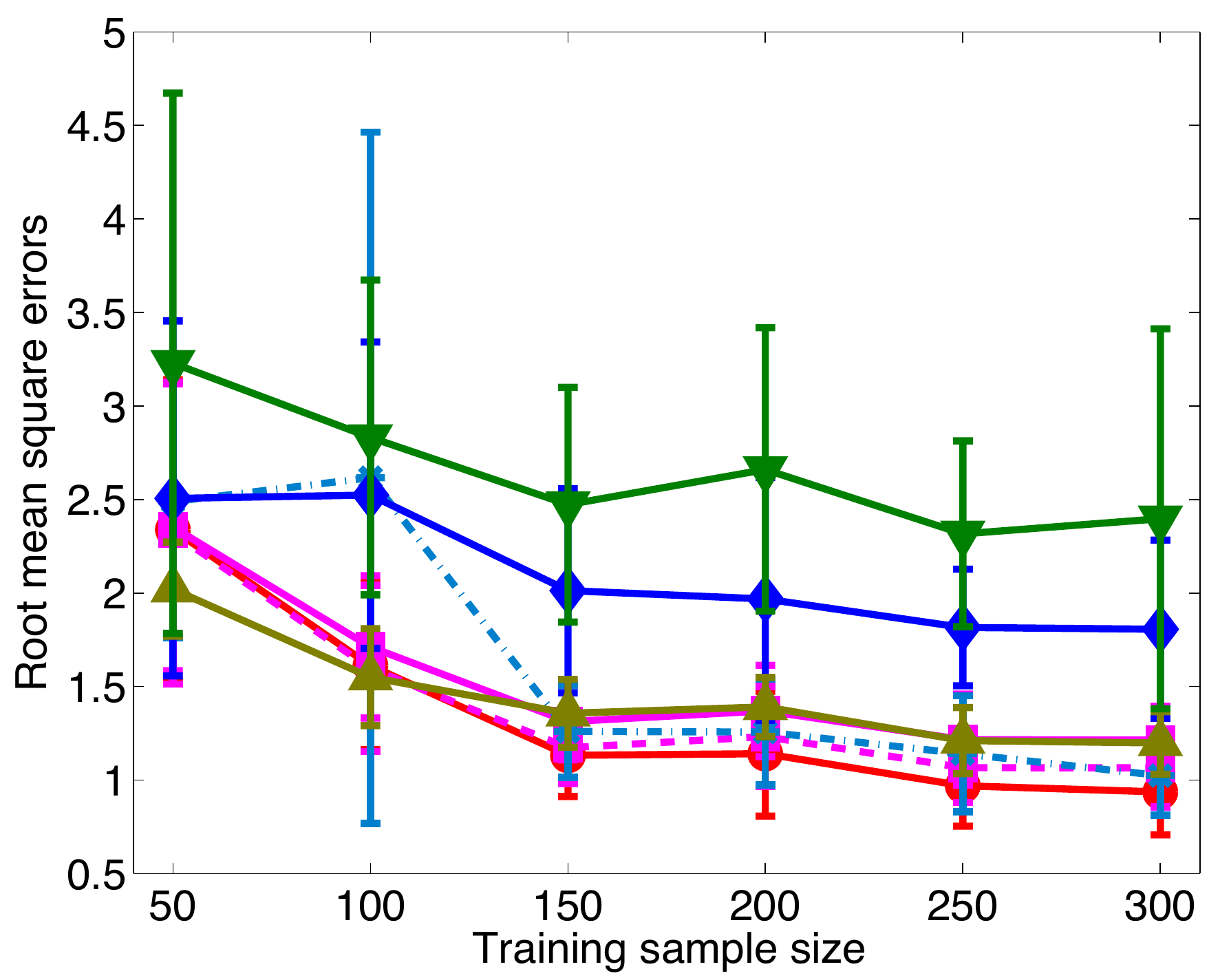}
	}%
	 \subfigure[RMSE (SSM 4a)]{
			\includegraphics[width=0.43\columnwidth]{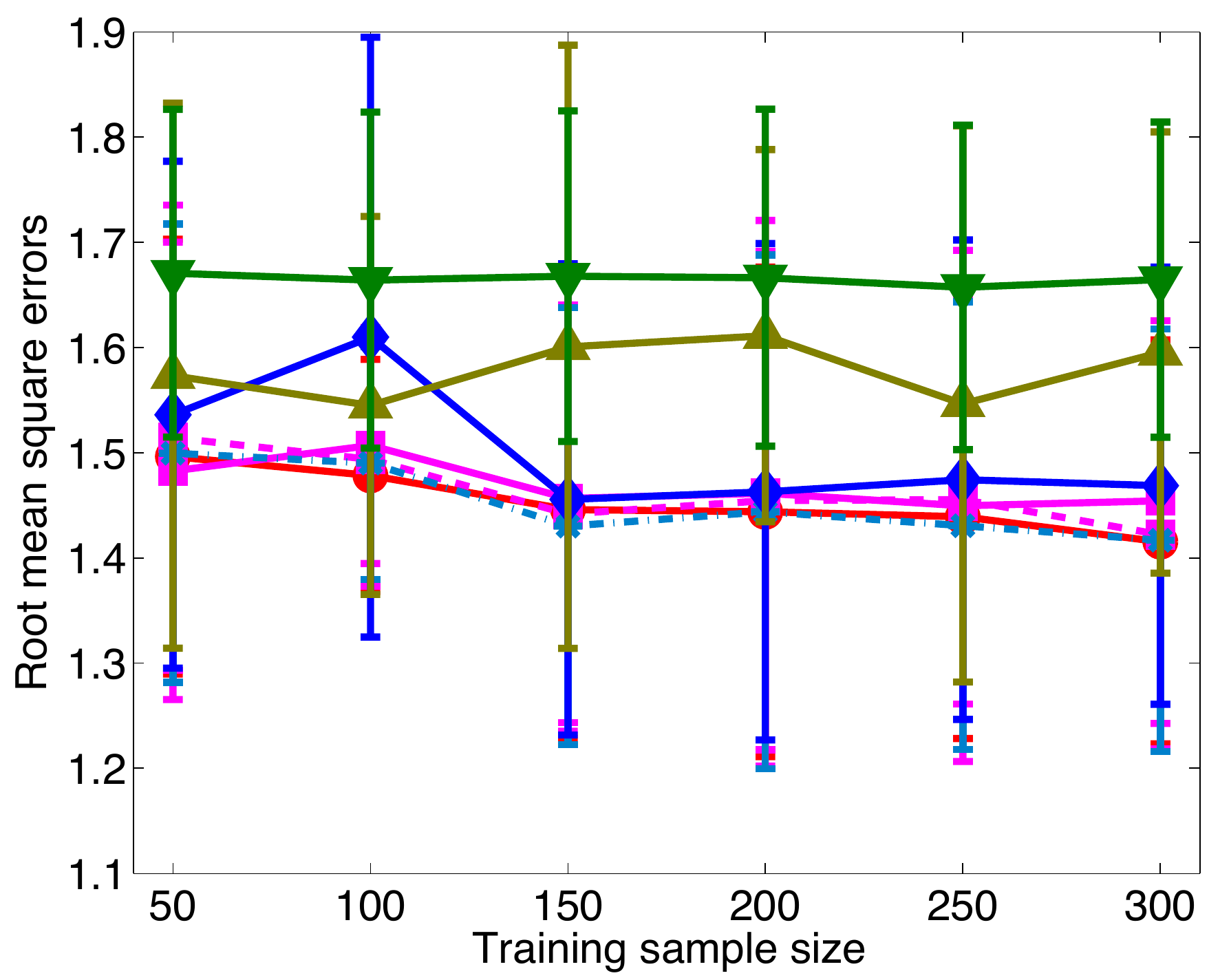}
	}%
\caption{RMSE of the synthetic experiments in Section \ref{sec:exp_synthetic}. The state-space models of these figures have no control in their transition models.}
\label{fig:artificial_woCont}
\end{center}
\end{figure}

\begin{figure}[t]
\vskip -0.05in
\begin{center}
	 \subfigure[RMSE (SSM 1b)]{
			\includegraphics[width=0.43\columnwidth]{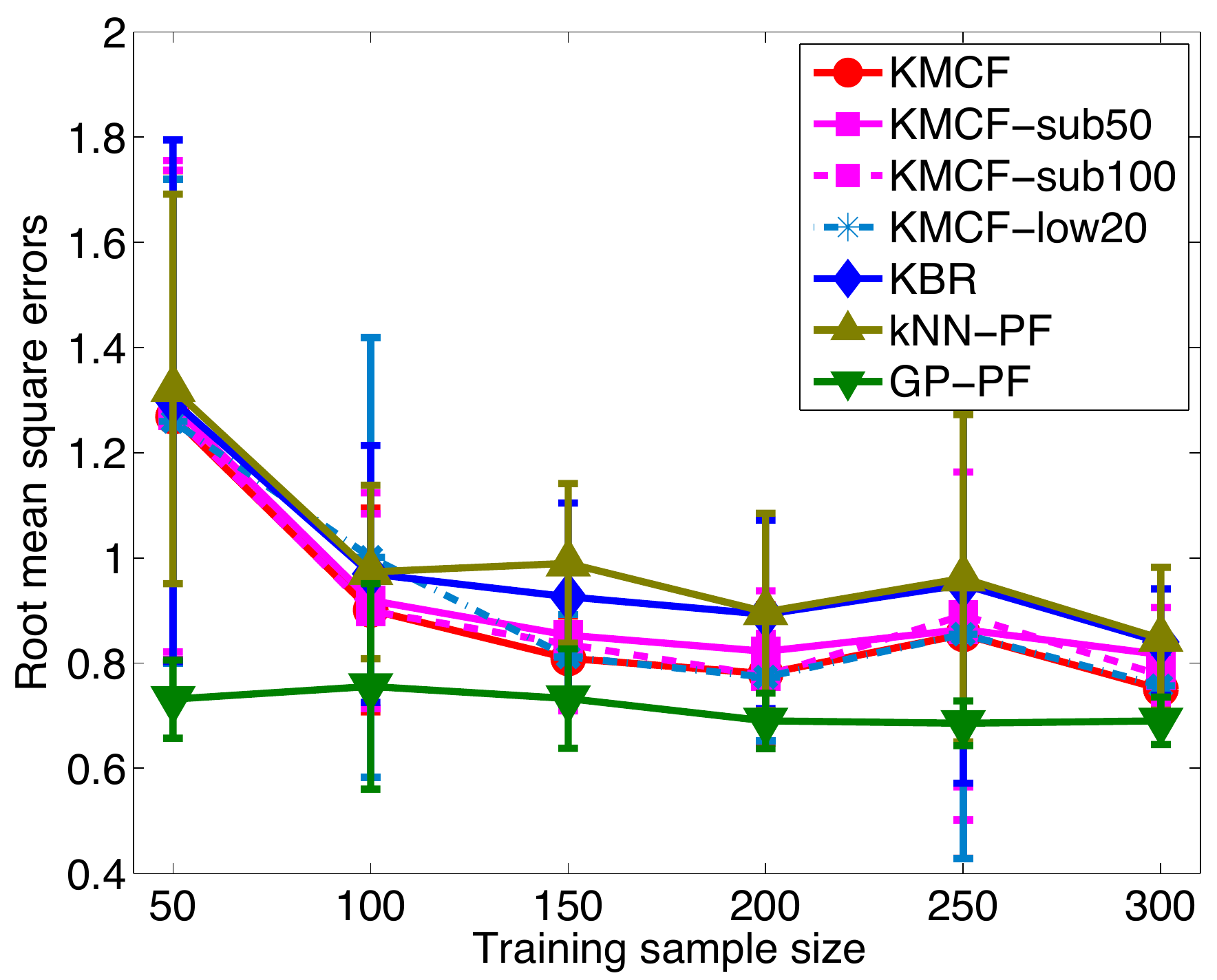}
	}%
	 \subfigure[RMSE (SSM 2b)]{
			\includegraphics[width=0.43\columnwidth]{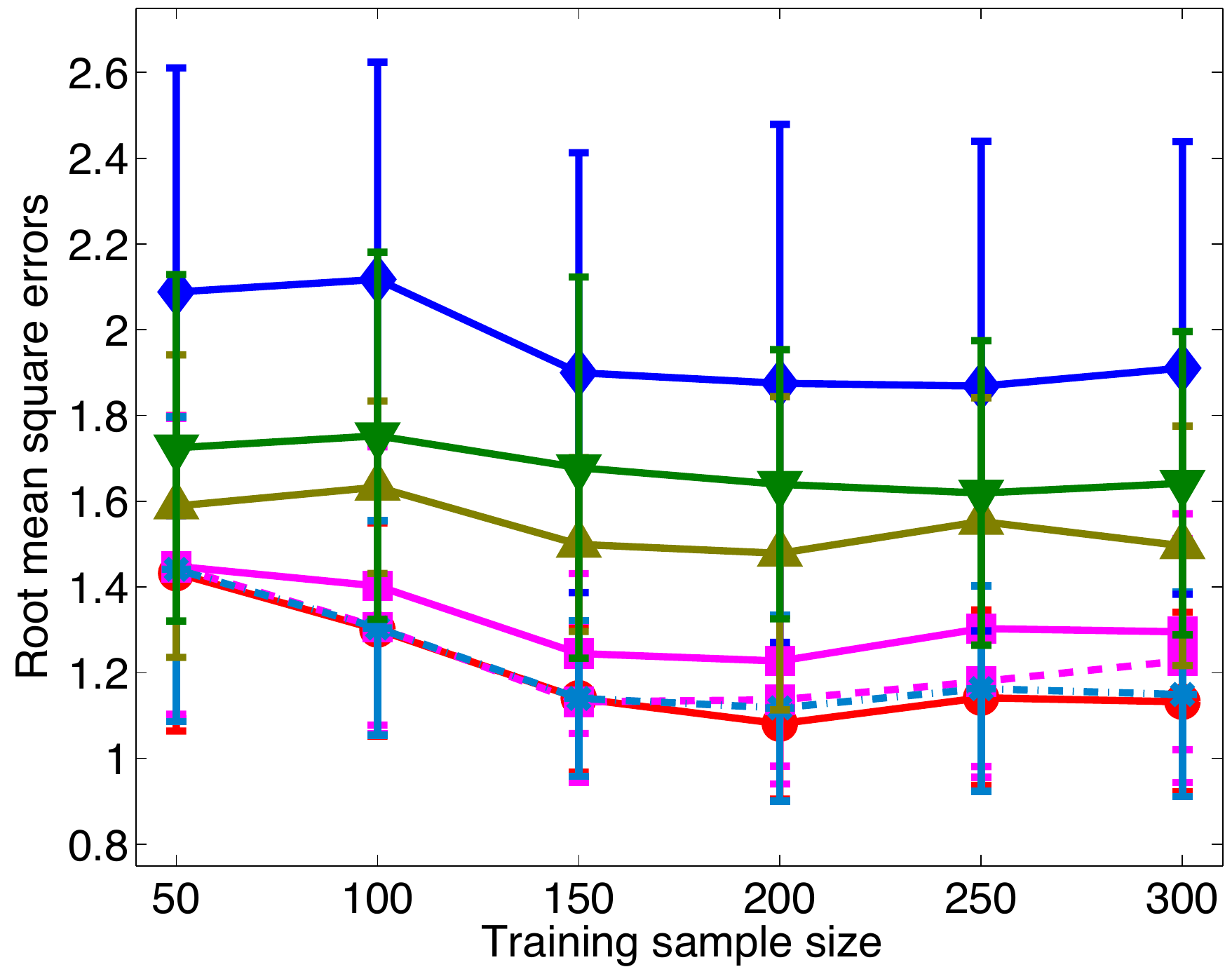}
	}%
	 \\\subfigure[RMSE (SSM 3b)]{
			\includegraphics[width=0.43\columnwidth]{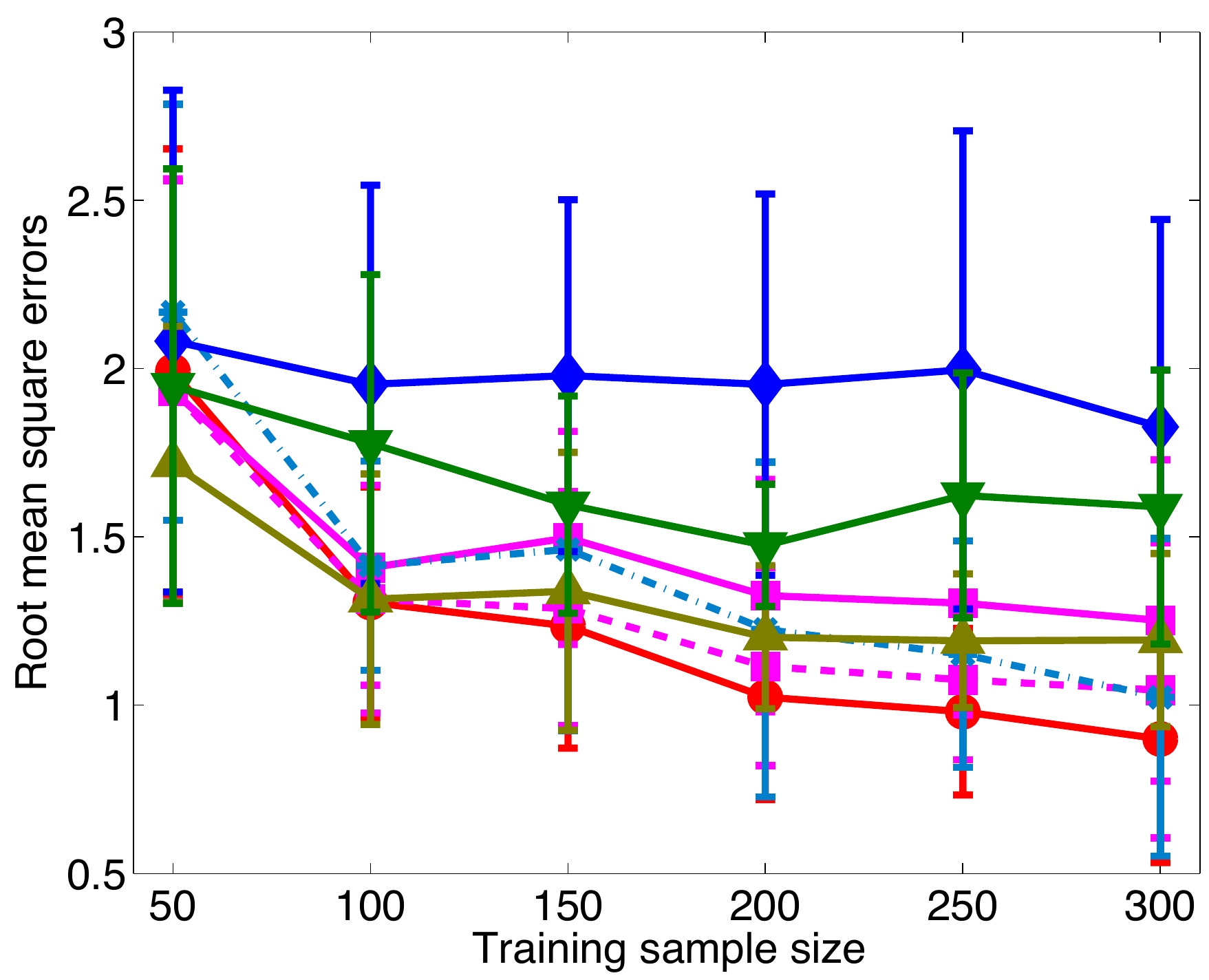}
	}%
	 \subfigure[RMSE (SSM 4b)]{
			\includegraphics[width=0.43\columnwidth]{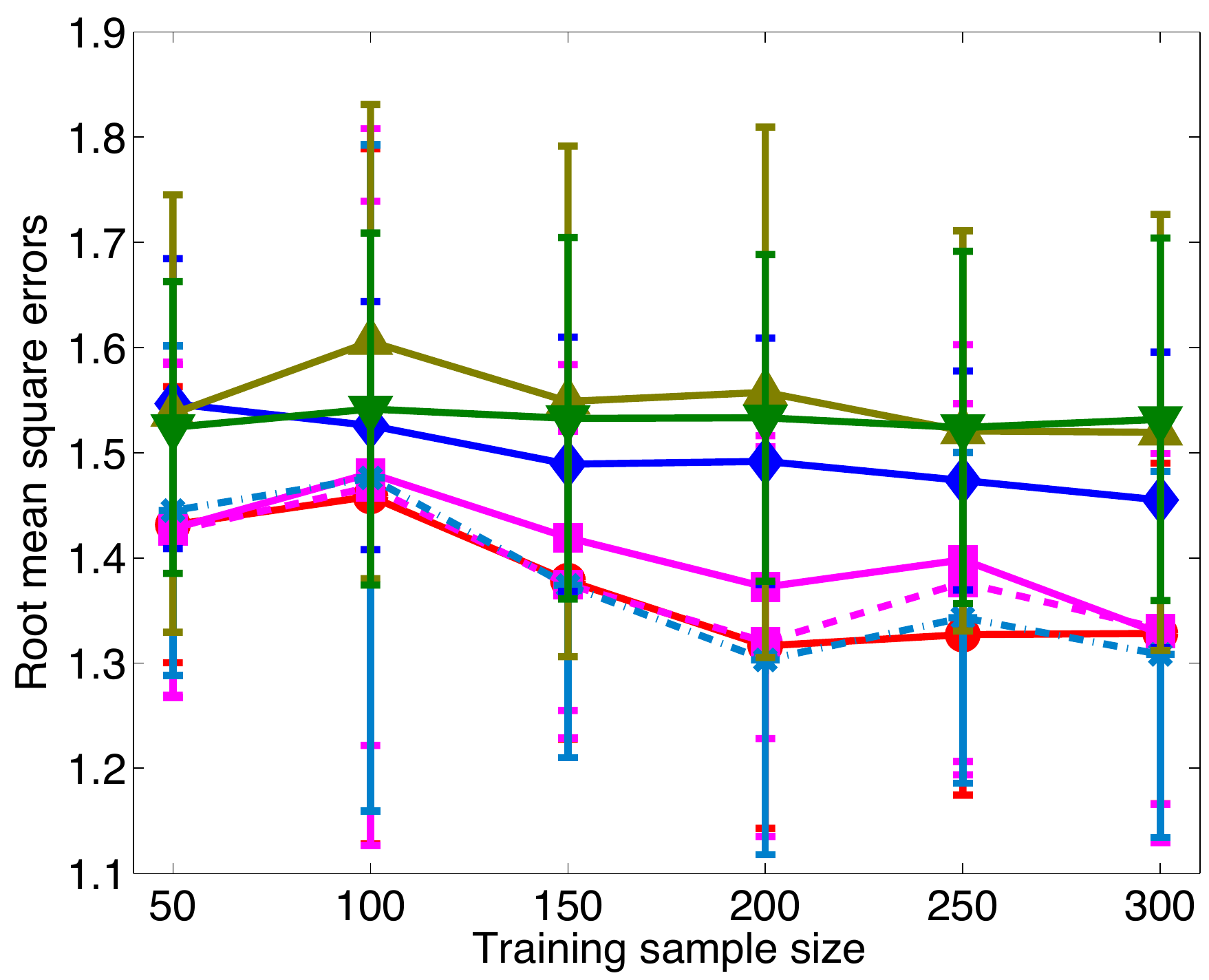}
	}%
\caption{RMSE of synthetic experiments in Section \ref{sec:exp_synthetic}. The state-space models of these figures include control $u_t$ in their transition models.}
\label{fig:artificial_Cont}
\end{center}
\end{figure}

\begin{figure}[t]
\begin{center}
\includegraphics[width=0.43\columnwidth]{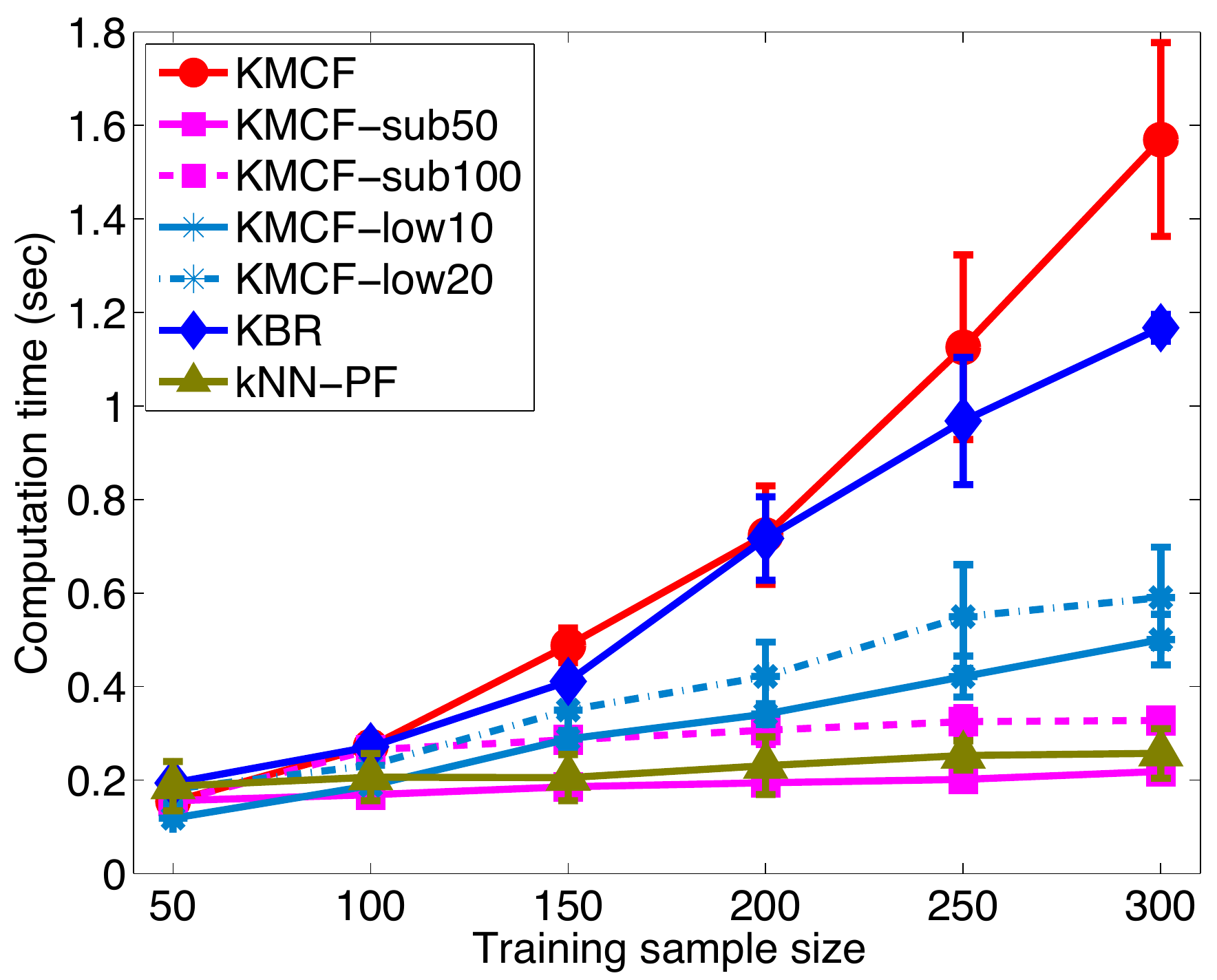}
\includegraphics[width=0.43\columnwidth]{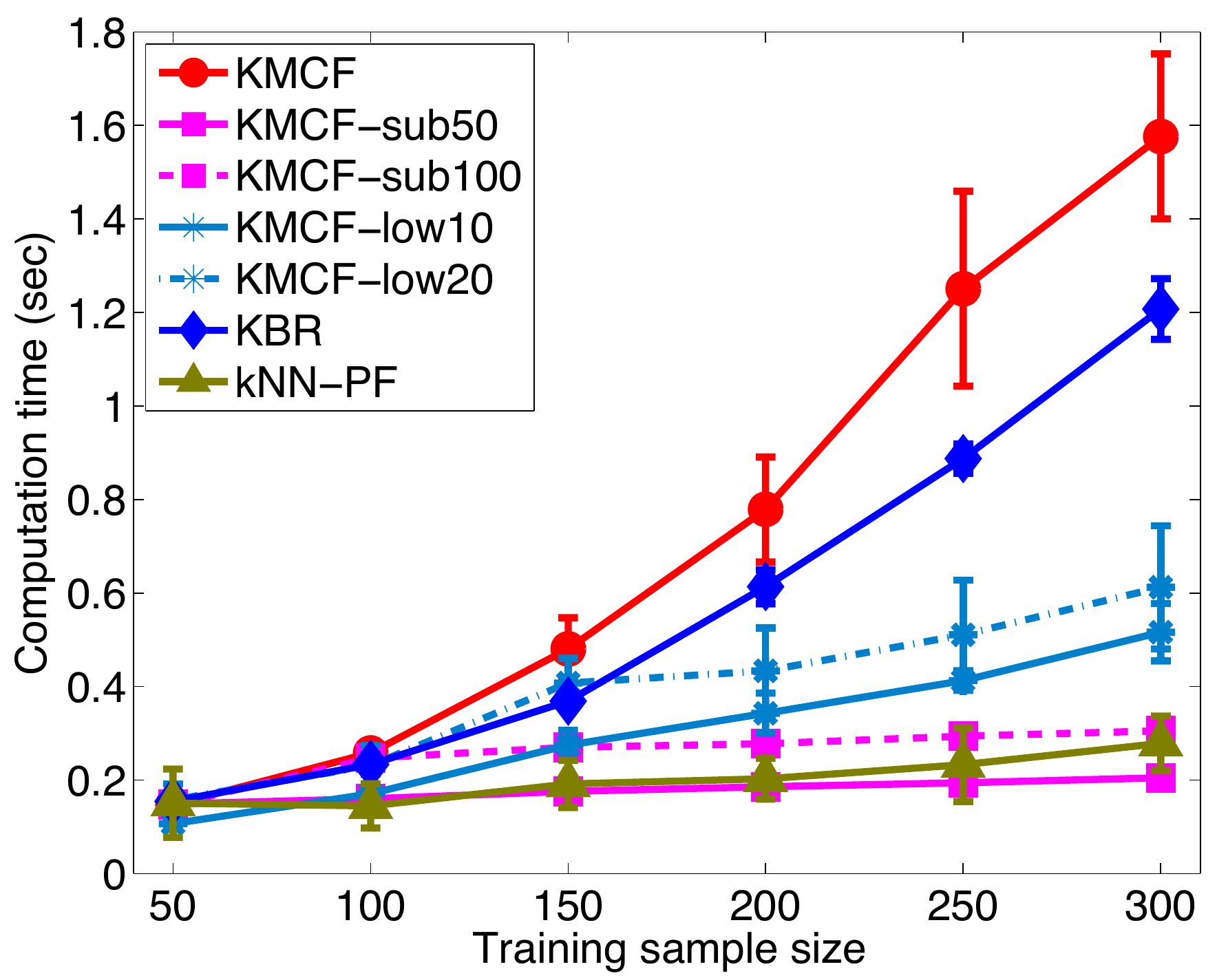}
\caption{Computation time of synthetic experiments in Section \ref{sec:exp_synthetic}. Left: SSM 1a. Right: SSM 1b.}
\label{fig:synthetic_time}
\end{center}
\end{figure}

GP-PF performed the best for SSM 1a and 1b. This may be because these models fit the assumption of GP-regression, as their noise are additive Gaussian.
For the other models, however, GP-PF performed poorly; the observation models of these models have strong nonlinearities and the noise are not additive Gaussian.
For these models, KMCF performed the best or competitively with the other methods.
This indicates that KMCF successfully exploits the state-observation examples $\{ (X_i,Y_i) \}_{i=1}^n$ in dealing with the complicated observation models.
Recall that our focus has been on situations where the relation between states and observations are so complicated that the observation model is not known; the results indicate that KMCF is promising for such situations.
On the other hand, KBR filter performed worse than KMCF for the most of the models.
KBF filter also uses Kernel Bayes' Rule as KMCF.
The difference is that KMCF makes use of the transition models directly by sampling, while KBR filter must learn the transition models from training data for state transitions.
This indicates that the incorporation of the knowledge expressed in the transition model is very important for the filtering performance.
This can also be seen by comparing Figure \ref{fig:artificial_woCont} and Figure \ref{fig:artificial_Cont}.
The performance of the methods other than KBR filter improved for SSMs \{1b, 2b, 3b, 4b\}, compared to the performance for the corresponding models in SSMs \{1a, 2a, 3a, 4a\}.
Recall that SSMs \{1b, 2b, 3b, 4b\} include control $u_t$ in their transition models. The information of control input is helpful for filtering in general.
Thus the improvements suggest that KMCF, kNN-PF and GP-PF successfully incorporate the information of controls: they achieve this simply by sampling with $p(x_t | x_{t-1},u_t)$.
On the other hand, KBF filter must learn the transition model $p(x_t | x_{t-1}, u_t)$; this can be harder than learning the transition model $p(x_t | x_{t-1})$ that has no control input.

We next compare computation time (Figure \ref{fig:synthetic_time}).
KMCF was competitive or even slower than the KBR filter.
This is due to the resampling step in KMCF.
The speeding up methods (KMCF-low10, KMCF-low20, KMCF-sub50 and KMCF-sub100) successfully reduced the costs of KMCF.
KMCF-low10 and KMCF-low20 scaled linearly to the sample size $n$; this matches the fact that Algorithm \ref{al:KBR_lowrank} reduces the costs of Kernel Bayes' Rule to $O(n r^2)$.
On the other hand, the costs of KMCF-sub50 and KMCF-sub100 remained almost the same amounts over the difference sample sizes.
This is because they reduce the sample size itself from $n$ to $r$, so the costs are reduced to $O(r^3)$ (see Algorithm \ref{al:subsampling}).
KMCF-sub50 and KMCF-sub100 are competitive to kNN-PF, which is fast as it only needs kNN searches to deal with the training sample $\{ (X_i,Y_i) \}_{i=1}^n$.
In Figure \ref{fig:artificial_woCont} and \ref{fig:artificial_Cont}, KMCF-low20 and KMCF-sub100 produced the results competitive to KMCF for SSMs \{1a, 2a, 4a, 1b, 2b, 4b\}.
Thus for these models, such methods reduce the computational costs of KMCF without loosing much accuracy.
KMCF-sub50 was slightly worse than KMCF-100. This indicates that the number of subsamples cannot be reduced to this extent if we wish to maintain the accuracy.
For SSM 3a and 3b, the performance of KMCF-low20 and KMCF-sub100 were worse than KMCF, in contrast to the performance for the other models.
The difference of SSM 3a and 3b from the other models is that the observation space is 10-dimensional: $\Y = \R^{10}$.
This suggests that if the dimension is high, $r$ needs to be large to maintain the accuracy (recall that $r$ is the rank of low rank matrices in Algorithm \ref{al:KBR_lowrank}, and the number of subsamples in Algorithm \ref{al:subsampling}).
This is also implied by the experiments in the next subsection.

\subsection{Vision-based mobile robot localization} \label{sec:exp_robot}
We applied KMCF to the problem of vision-based mobile robot localization \citep{Vlassis2001,WolBurBur05,QuiStaCoaThr10}.
We consider a robot moving in a building. 
The robot takes images with its vision camera as it moves.
Thus the vision images form a sequence of observations $y_1,\dots,y_T$ in time series; each $y_t$ is an image.
On the other hand, the robot does not know its positions in the building;
we define state $x_t$ as the robot's position at time $t$.
The robot wishes to estimate its position $x_t$ from the sequence of its vision images $y_1,\dots,y_t$.
This can be done by filtering, i.e., by estimating the posteriors $p(x_t | y_1,\dots,y_t)$ $(t=1,\dots,T)$.
This is the robot localization problem.
It is fundamental in robotics, as a basis for more involved applications such as navigation and reinforcement learning \citep{thurun2002}.

The state-space model is defined as follows: 
the observation model $p(y_t | x_t)$ is the conditional distribution of images given position, which is very complicated and considered unknown. 
We need to assume position-image examples $\{ (X_i,Y_i) \}_{i=1}^n$; these samples are given in the dataset described below.
The transition model $p(x_t | x_{t-1}) := p(x_t | x_{t-1}, u_t)$ is the conditional distribution of the current position given the previous one.
This involves a control input $u_t$ that specifies the movement of the robot. 
In the dataset we use, the control is given as odometry measurements.
Thus we define $p(x_t | x_{t-1}, u_t)$ as the {\em odometry motion model}, which is fairly standard in robotics \citep{thurun2002}.
Specifically, we used the algorithm described in Table 5.6 of  \cite{thurun2002}, with all of its parameters fixed to $0.1$.
The prior $p_{\rm init}$ of the initial position $x_1$ is defined as a uniform distribution over the samples $X_1,\dots,X_n$ in $\{ (X_i,Y_i) \}_{i=1}^n$.

As a kernel $k_\Y$ for observations (images), we used the Spatial Pyramid Matching Kernel of \cite{Lazebnik2006}.
This is a positive definite kernel developed in the computer vision community, and is also fairly standard.
Specifically, we set the parameters of this kernel as suggested in \cite{Lazebnik2006}: this gives a 4200 dimensional histogram for each image.
We defined the kernel $k_\X$ for states (positions) as Gaussian.
Here the state space is the $4$-dimensional space: $\X = \R^4$: two dimensions for location, and the rest for the orientation of the robot.\footnote{We projected the robot's orientation in $[0,2\pi]$ onto the unit circle in $\R^2$.}

The dataset we used is the COLD database \citep{cold}, which is publicly available. 
Specifically, we used the dataset {\em Freiburg, Part A, Path 1, cloudy}.
This dataset consists of three similar trajectories of a robot moving in a building, each of which provides position-image pairs $\{ (x_t, y_t) \}_{t=1}^T$.
We used two trajectories for training and validation, and the rest for test.
We made state-observation examples $\{ (X_i,Y_i) \}_{i=1}^n$ by randomly subsampling the pairs in the trajectory for training.
Note that the difficulty of localization may depend on the time interval (i.e.,\ the interval between $t$ and $t-1$ in sec.) 
Therefore we made three test sets (and training samples for state transitions in KBR filter) with different time intervals: $2.27$ sec.\  ($T=168$), $4.54$ sec.\  ($T=84$) and $6.81$ sec.\  ($T=56$).

In these experiments, we compared KMCF with three methods: kNN-PF, KBR filter, and the naive method (NAI) defined below. 
For KBR filter, we also defined the Gaussian kernel on the control $u_t$, i.e.,\ on the difference of odometry measurements at time $t-1$ and $t$.
The naive method (NAI) estimates the state $x_t$ as a point $X_j$ in the training set $\{ (X_i,Y_i) \}$ such that the corresponding observation $Y_j$ is closest to the observation $y_t$.
We performed this as a baseline.
We also used the Spatial Pyramid Matching Kernel for these methods (for kNN-PF and NAI, as a similarity measure of the nearest neighbors search).
We did not compare with GP-PF, since it assumes that observations are real vectors and thus cannot be applied to this problem straightforwardly.
We determined the hyper-parameters in each method by cross validation.
To reduced the cost of the resampling step in KMCF, we used the method discussed in Section  \ref{sec:theory_resampling} with $\ell = 100$.
The low rank approximation method (Algorithm \ref{al:KBR_lowrank}) and the subsampling method (Algorithm \ref{al:subsampling}) were also applied to reduce the computational costs of KMCF.
Specifically, we set $r= 50, 100$ for Algorithm \ref{al:KBR_lowrank} (described as KMCF-low50 and KMCF-low100 in the results below), and $r = 150, 300$ for Algorithm \ref{al:subsampling} (KMCF-sub150 and KMCF-sub300).

Note that in this problem, the posteriors $p(x_t | y_{1:t})$ can be highly multimodal. 
This is because similar images appear in distant locations.
Therefore the posterior mean $\int x_t p(x_t | y_{1:t}) dx_t$ is not appropriate for point estimation of the ground-truth position $x_t$.
Thus for KMCF and KBR filter, we employed the heuristic for mode estimation explained in Section \ref{sec:decode}.
For kNN-PF, we used a particle with maximum weight for the point estimation.
We evaluated the performance of each method by RMSE of location estimates.
We ran each experiment 20 times for each training set of different size.

\paragraph{Results.}
First, we demonstrate the behaviors of KMCF with this localization problem. Figures \ref{fig:robot_demo1} and  \ref{fig:robot_demo2} show iterations of KMCF with $n=400$, applied to the test data with time interval $6.81$ sec.
Figure \ref{fig:robot_demo1} illustrates iterations that produced accurate estimates, while Figure \ref{fig:robot_demo2} describes situations where location estimation is difficult.

\begin{figure}[t]
\vskip -0.05in
\begin{center}
	\subfigure[$t=29$. $\| \hat{x}_t - x_t \| = 0.26378$.]{\includegraphics[width=0.49\columnwidth]{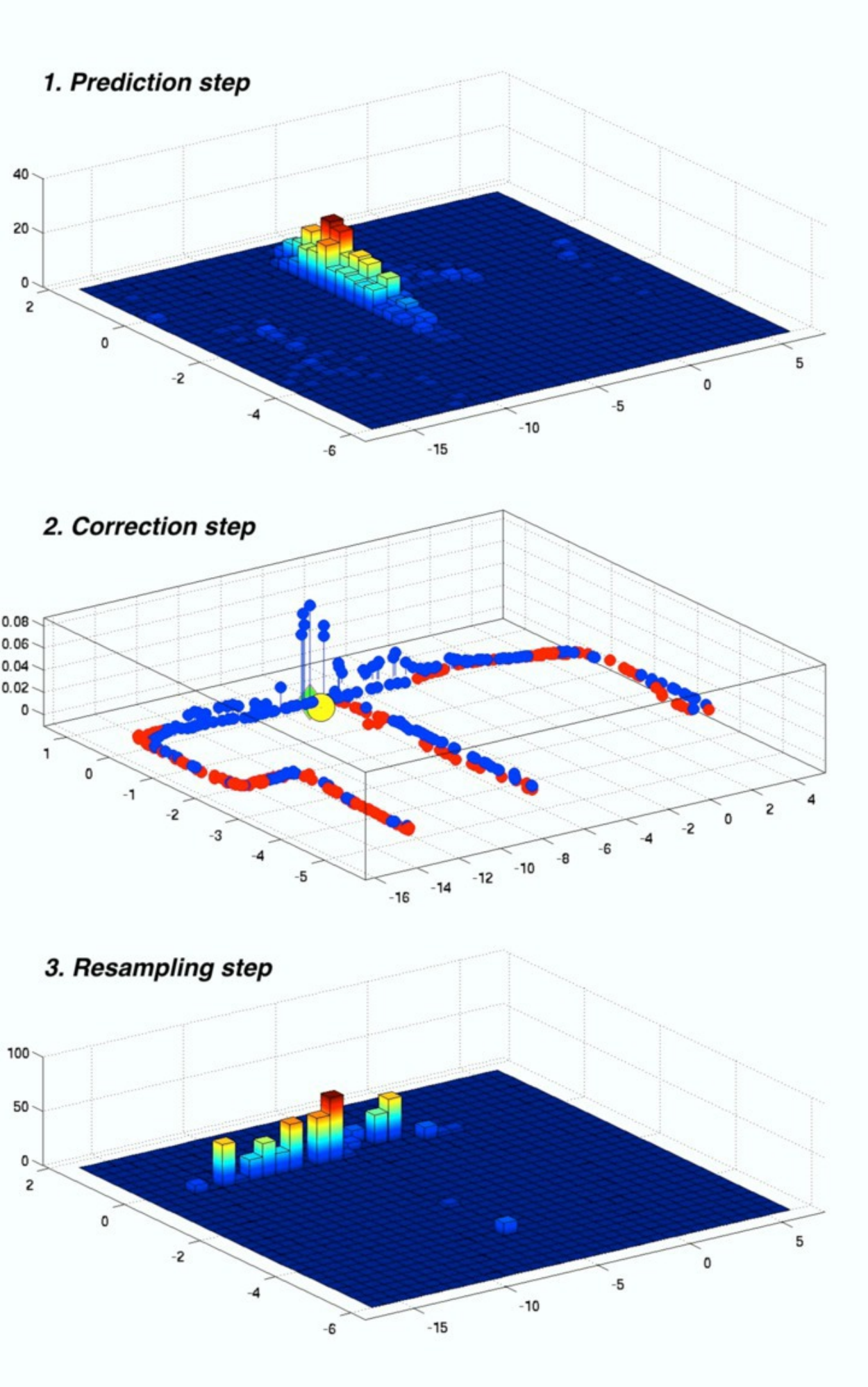}}%
	\subfigure[$t=43$. $\| \hat{x}_t - x_t \| = 0.26315$. ]{\includegraphics[width=0.49\columnwidth]{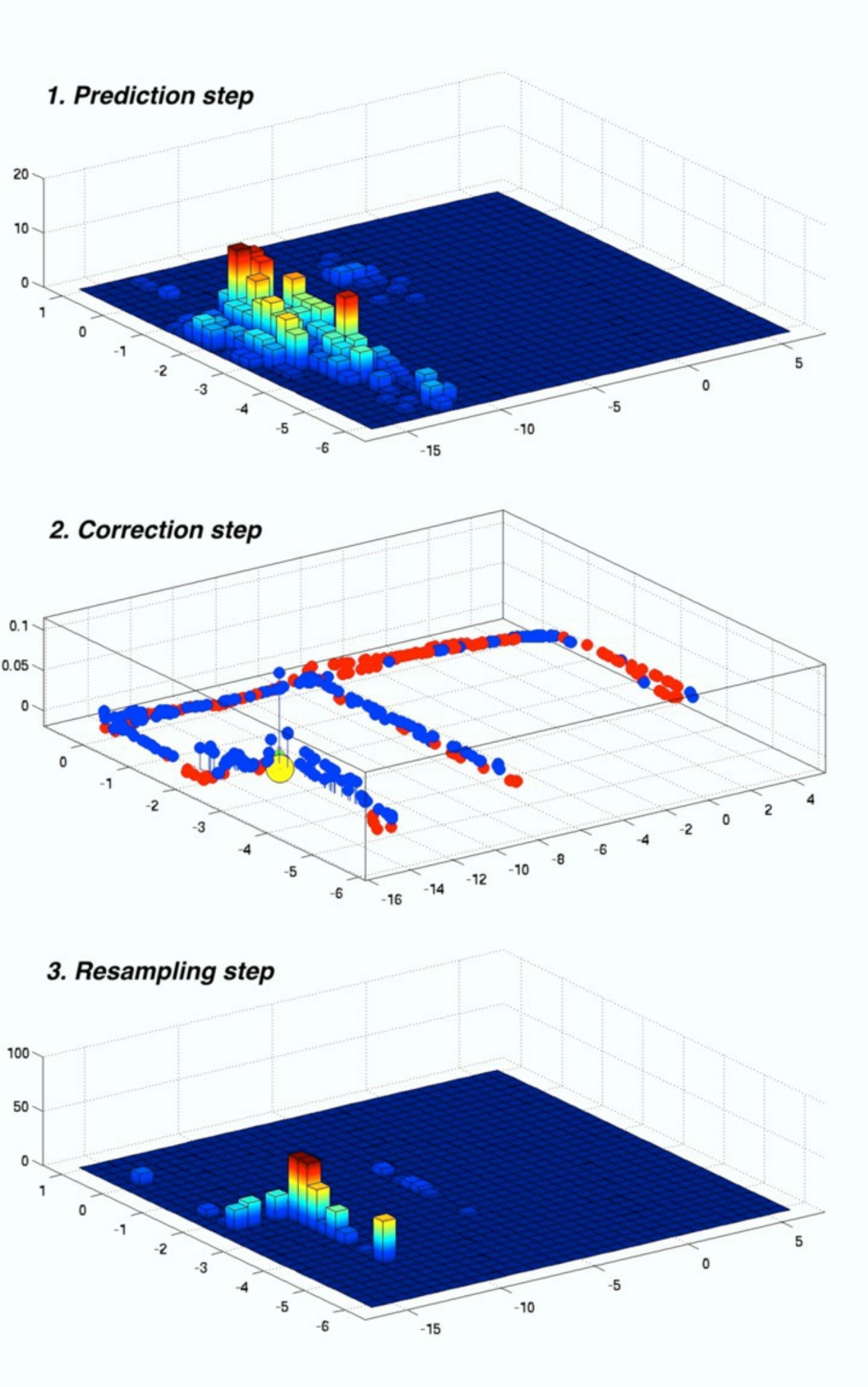}}
\caption{Demonstration results. Each column corresponds to one iteration of KMCF. Top (prediction step): histogram of samples for prior. Middle (correction step): weighted samples for posterior. The blue and red stems indicate positive and negative weights, respectively. The yellow ball represents the ground-truth location $x_t$, and the green diamond the estimated one $\hat{x}_t$. Bottom (resampling step): histogram of samples given by the resampling step. }
\label{fig:robot_demo1}
\end{center}
\end{figure}

\begin{figure}[t]
\vskip -0.05in
\begin{center}
	\subfigure[$t=11$. $\| \hat{x}_t - x_t \| = 2.3443$.]{\includegraphics[width=0.49\columnwidth]{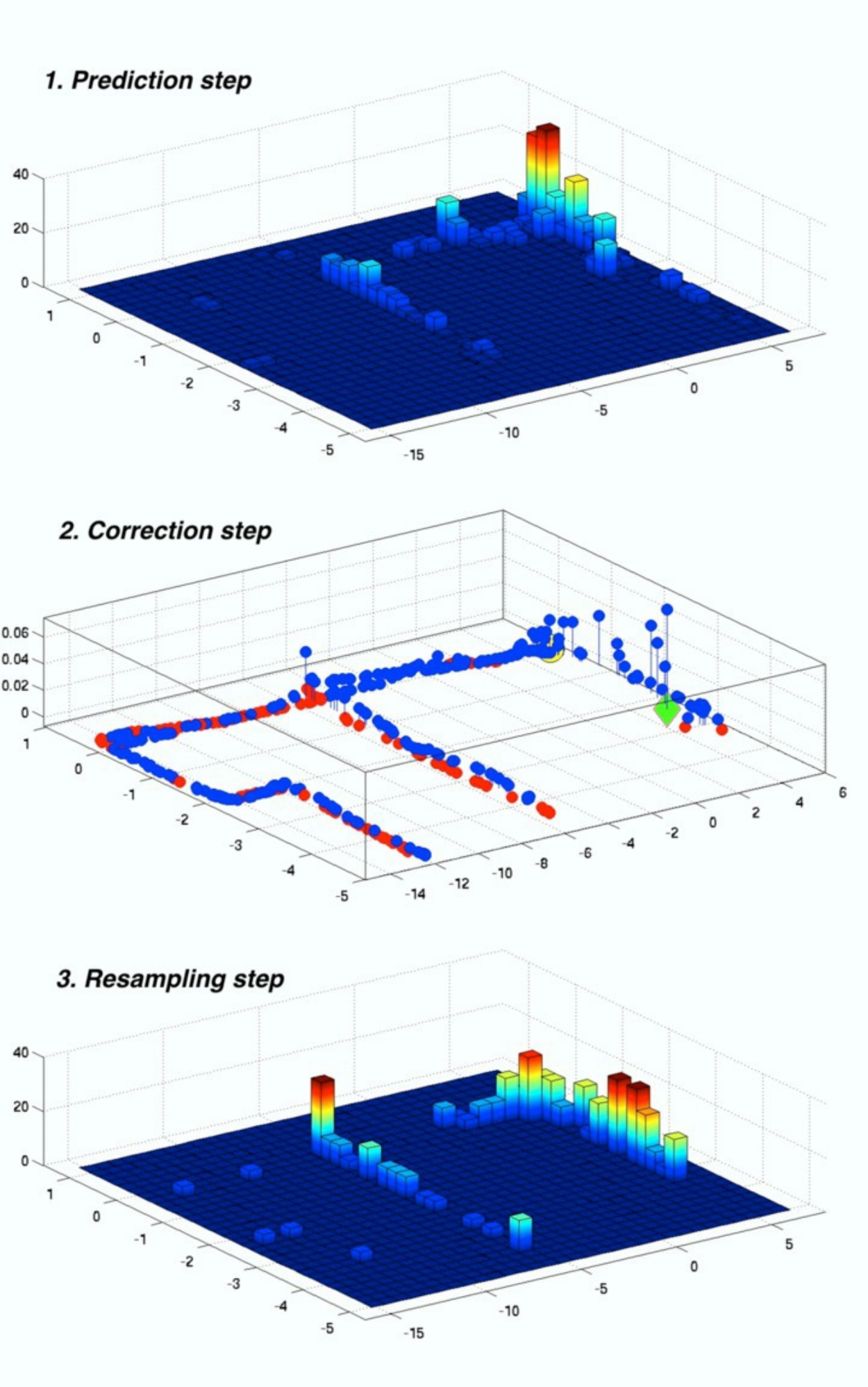}} %
	\subfigure[$t=40$. $\| \hat{x}_t - x_t \| = 0.3273$. ]{\includegraphics[width=0.49\columnwidth]{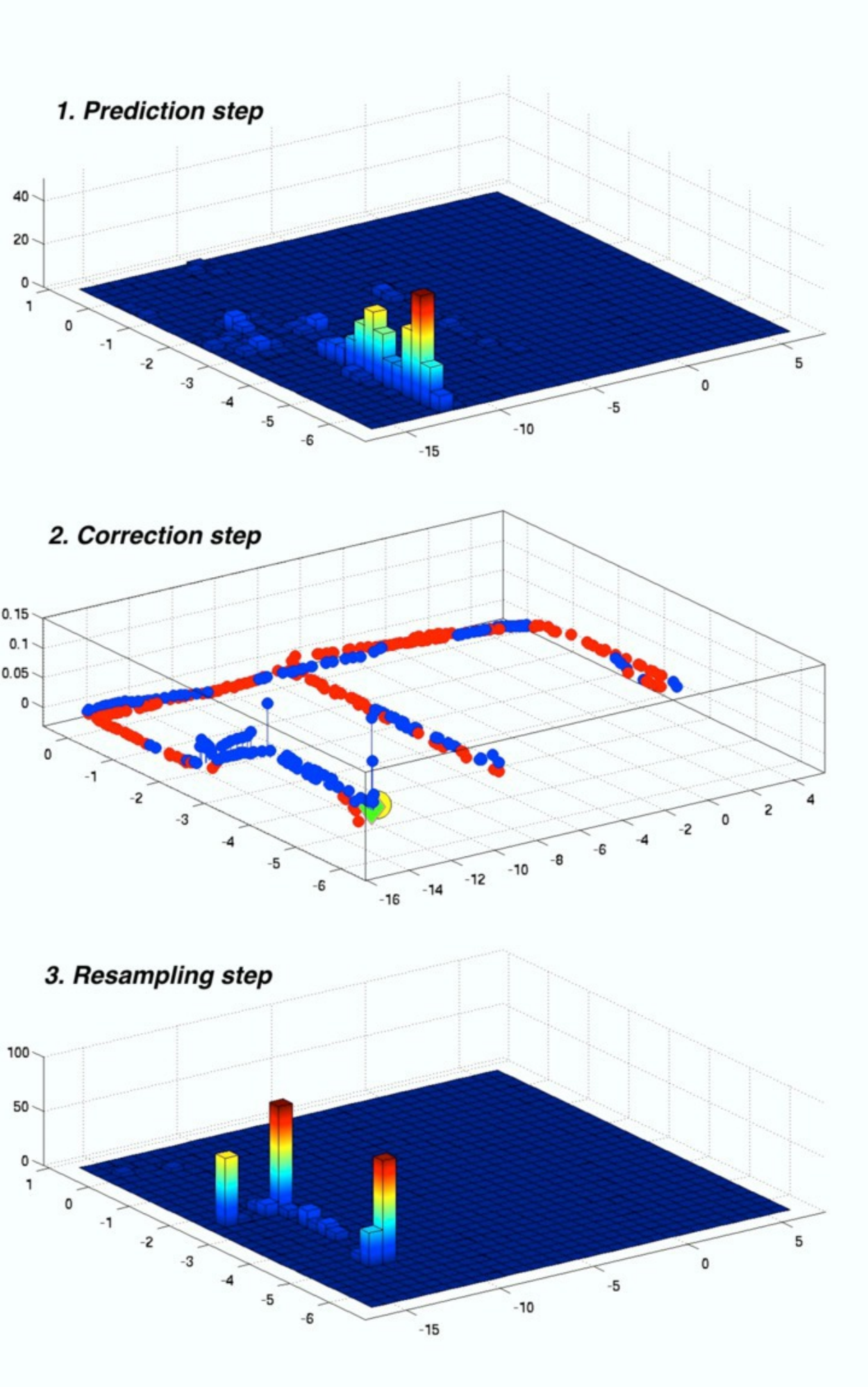}}
\caption{Demonstration results (see also the caption of Figure \ref{fig:demo1}). Here we show time points where observed images are similar to those in distant places. Such a situation often occurs at corners, and makes location estimation difficult.   (a) The prior estimate is reasonable, but the resulting posterior has modes in distant places. This makes the location estimate (green diamond) far from the true location (yellow ball). (b) While the location estimate is very accurate, modes also appear at distant locations.}
\label{fig:robot_demo2}
\end{center}
\end{figure}

Figures \ref{fig:robot_RMSE}  and \ref{fig:robot_Time} show the results in RMSE and computational time, respectively.
For all the results KMCF and that with the computational reduction methods (KMCF-low50, KMCF-low100, KMCF-sub150 and KMCF-sub300) performed better than KBR filter.
These results show the benefit of directly manipulating the transition models with sampling.
KMCF was competitive with kNN-PF for the interval 2.27 sec.; note that kNN-PF was originally proposed for the robot localization problem.
For the results with the longer time intervals (4.54 sec.\  and 6.81 sec.), KMCF outperformed kNN-PF.

\begin{figure}[t]
\vskip -0.05in
\begin{center}
	\subfigure[RMSE (time interval: 2.27 sec; $T = 168$)]{
			\includegraphics[width=0.43\columnwidth]{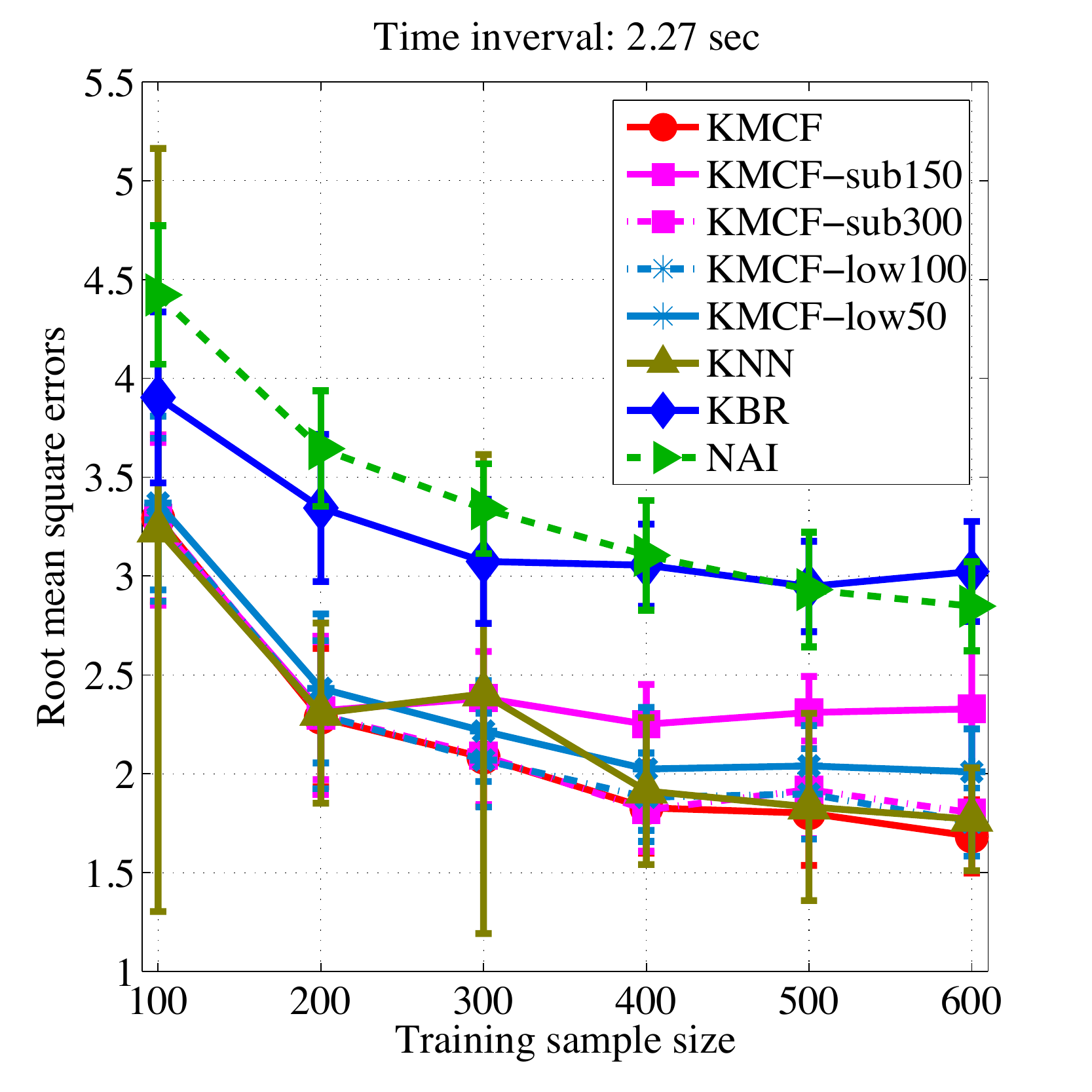}
	}%
	\\ \subfigure[RMSE (time interval 4.54 sec; $T=84$)]{
			\includegraphics[width=0.43\columnwidth]{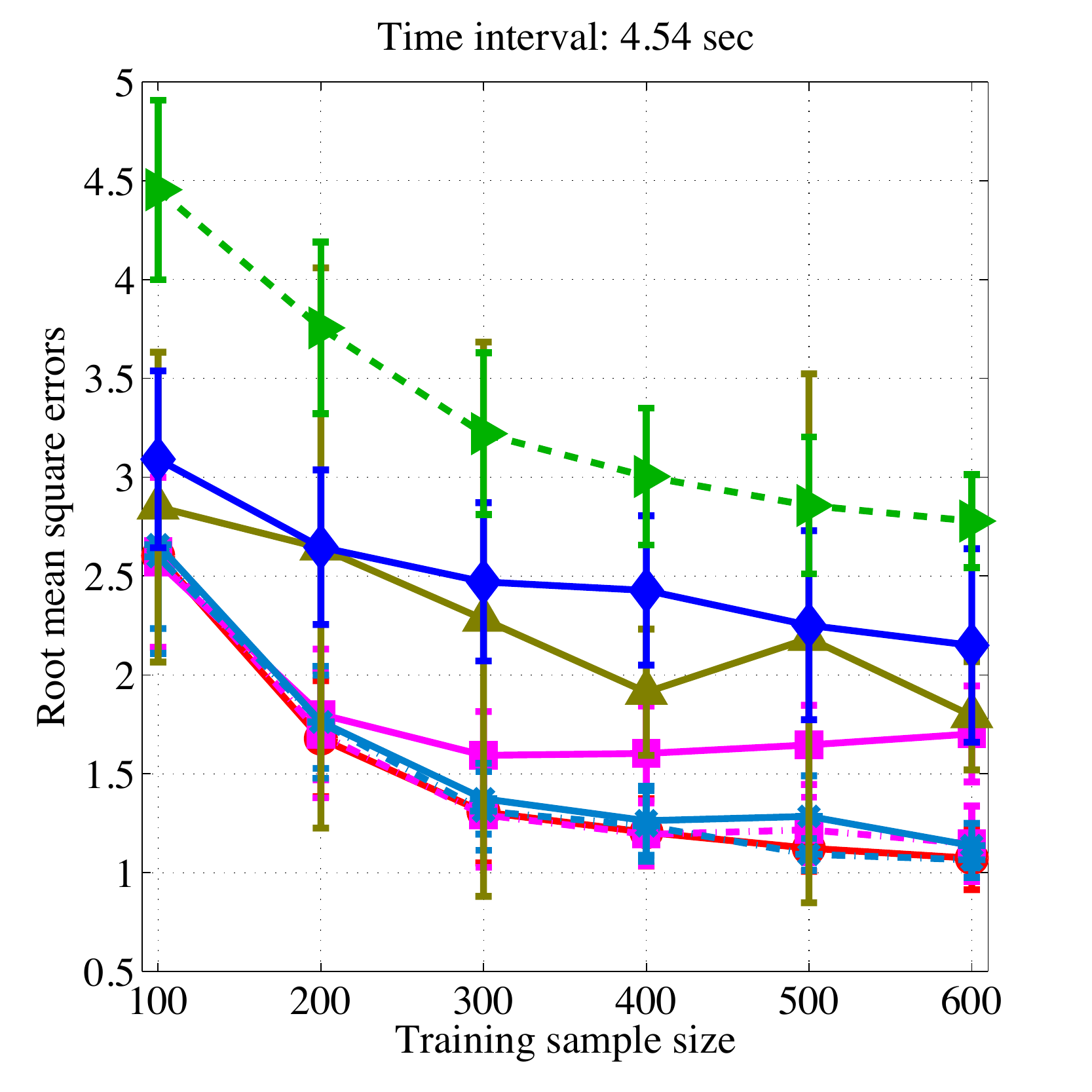}
	}%
	 \subfigure[RMSE (time interval 6.81 sec; $T=56$)]{
			\includegraphics[width=0.43\columnwidth]{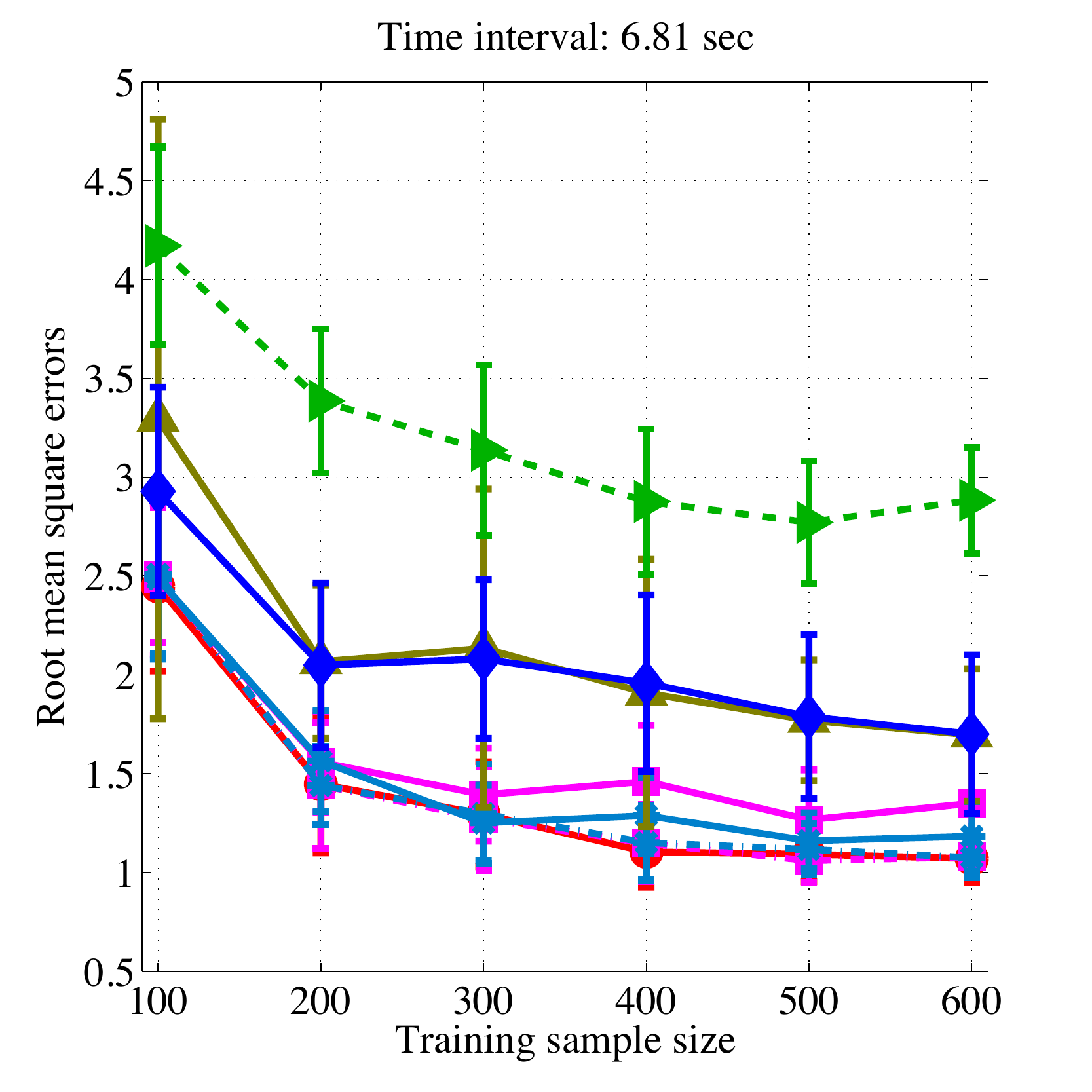}
	}%
\caption{RMSE of the robot localization experiments in Section \ref{sec:exp_robot}. (a), (b) and (c) show the cases for time interval 2.27 sec.\ , 4.54 sec.\  and 6.81 sec., respectively.  }
\label{fig:robot_RMSE}
\end{center}
\end{figure}

\begin{figure}[t]
\vskip -0.05in
\begin{center}
	\subfigure[Computation time (sec.) ($T = 168$)]{
			\includegraphics[width=0.43\columnwidth]{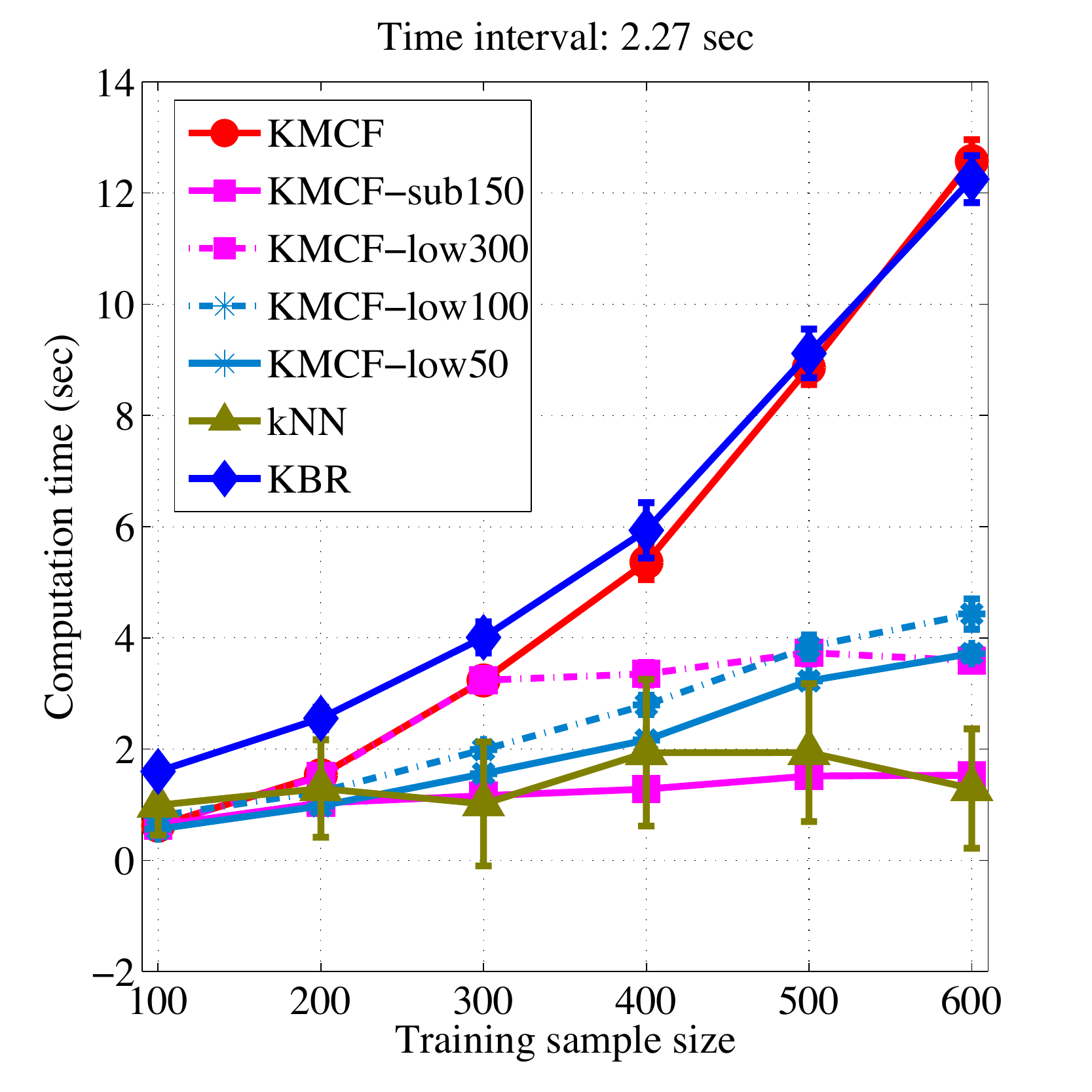}
	}%
	\\ \subfigure[Computation time (sec.) ($T = 84$)]{
			\includegraphics[width=0.43\columnwidth]{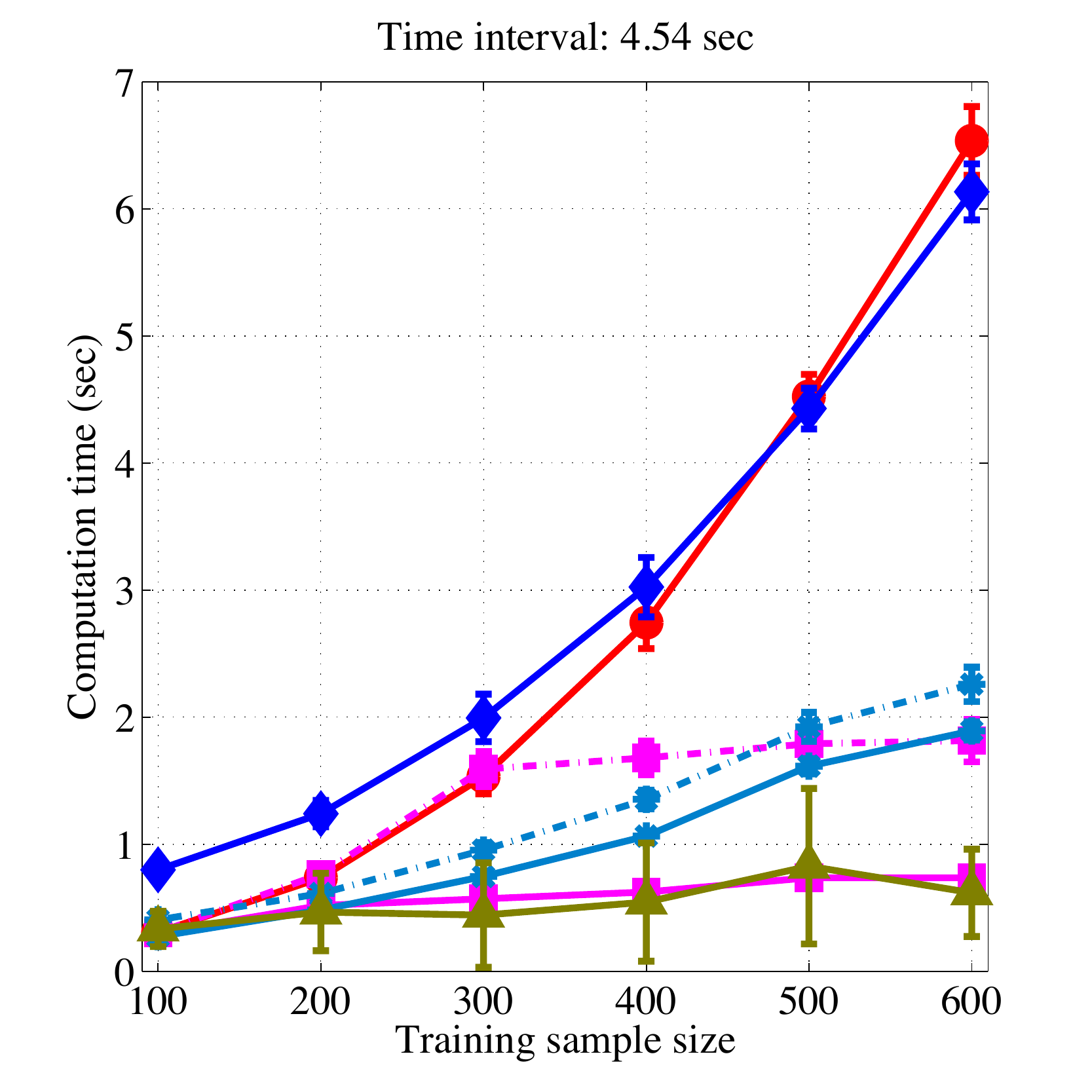}
	}%
	\subfigure[Computation time (sec.) ($T=56$)]{
			\includegraphics[width=0.43\columnwidth]{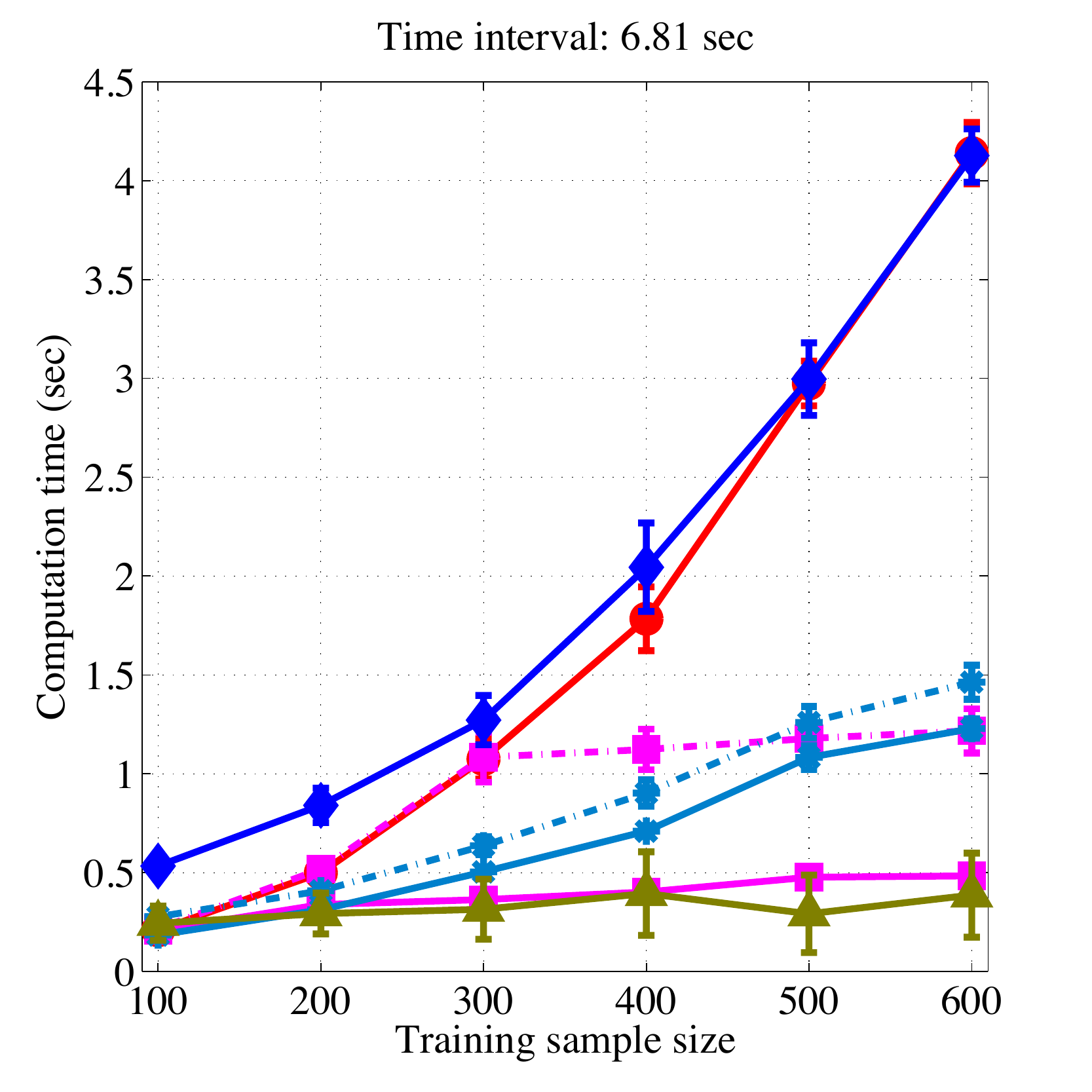}
	}%
\caption{Computation time of the localization experiments in Section \ref{sec:exp_robot}. (a), (b) and (c) show the cases for time interval 2.27 sec.\ , 4.54 sec.\  and 6.81 sec., respectively.  Note that the results show the run time of each method.}
\label{fig:robot_Time}
\end{center}
\end{figure}

We next investigate the effect on KMCF of the methods to reduce computational cost.
The performance of KMCF-low100 and KMCF-sub300 are competitive with KMCF; those of KMCF-low50 and KMCF-sub150 degrade as the sample size increases. 
Note that $r=50,100$ for Algorithm \ref{al:KBR_lowrank} are larger than those in Section \ref{sec:exp_synthetic}, though the values of the sample size $n$ are larger than those in Section \ref{sec:exp_synthetic}.
Also note that the performance of KMCF-sub150 is much worse than KMCF-sub300.
These results indicate that we may need large values for $r$ to maintain the accuracy for this localization problem.
Recall that the Spatial Pyramid Matching Kernel gives essentially a high-dimensional feature vector (histogram) for each observation.
Thus the observation space $\Y$ may be considered high-dimensional. 
This supports the hypothesis in Section \ref{sec:exp_synthetic} that if the dimension is high, the computational cost reduction methods may require larger $r$ to maintain accuracy.

Finally, let us look at the results in computation time (Figure \ref{fig:robot_Time}).
The results are similar to those in Section \ref{sec:exp_synthetic}.
Even though the values for $r$ are relatively large, Algorithm \ref{al:KBR_lowrank} and Algorithm \ref{al:subsampling} successfully reduced the computational costs of KMCF.

\section{Conclusions and future work}
This paper proposed Kernel Monte Carlo Filter, a novel filtering method for state-space models.
We have considered the situation where the observation model is not known explicitly or even parametrically, and where examples of the state-observation relation are given instead of the observation model.
Our approach was based on the framework of kernel mean embeddings, which enables us to deal with the observation model in a data-driven manner.
The resulting filtering method consists of the prediction, correction and resampling steps, all of which were realized in terms of kernel mean embeddings.
Methodological novelties lie in the prediction and resampling steps.
Thus we analyzed their behaviors, by deriving error bounds for the estimator of the prediction step.
The analysis revealed that the effective sample size of a weighted sample plays an important role, as in particle methods.
This analysis also explained how our resampling algorithm works.
We applied the proposed method to synthetic and real problems, confirming the effectiveness of our approach.

One interesting topic for future research would be parameter estimation for the transition model. In this paper we did not discuss this, and assumed that parameters are given and fixed, if exist. If the state observation examples $\{ (X_i,Y_i) \}_{i=1}^n$ are given as a sequence from the state-space model, then we can use the state samples $X_1,\dots,X_n$ for estimating those parameters. Otherwise, we need to estimate the parameters based on test data.
This might be possible by exploiting approaches for parameter estimation in particle methods (e.g.,\ Section IV in \cite{CapCodMou07}).

Another important topic is on the situation where the observation model in the test and training phases are different. As discussed in Section \ref{sec:overview_complexity}, this might be addressed by exploiting the framework of transfer learning \citep{PanYan10}. This would require extension of kernel mean embeddings to the setting of transfer learning, since there has been no work in this direction. We consider that such extension is interesting in its own right.

\subsection*{Acknowledgments}
We would like to express our gratitude to the associate editor and the anonymous reviewer for their time and helpful suggestions.
We also thank Masashi Shimbo, Momoko Hayamizu, Yoshimasa Uematsu and Katsuhiro Omae for their helpful comments.
This work has been supported in part by MEXT Grant-in-Aid for Scientific Research on Innovative Areas 25120012.
MK has been supported by JSPS Grant-in-Aid for JSPS Fellows 15J04406.

\section*{Appendix}
\appendix
\setcounter{lemma}{0}
\renewcommand{\thelemma}{\Alph{section}\arabic{lemma}}
\setcounter{prop}{0}
\renewcommand{\theprop}{\Alph{section}\arabic{prop}}
\renewcommand{\theequation}{\Alph{section}\arabic{equation}}
\setcounter{equation}{0}

\def\thesection{\Alph{section}}
\section{Proof of Theorem \ref{theo:finite_sample_bound}} \label{sec:appendix_proof}
Before going to the proof, we review some basic facts that will be needed.
Let $m_P = \int k_\X(\cd, x) dP(x)$ and $\hm_P = \sum_{i=1}^n w_i k_\X(\cd,X_i)$. By the reproducing property of the kernel $k_\X$, the following hold for any $f \in \H_\X$:
\begin{eqnarray}
\left< m_P, f \right>_{\H_\X} &=& \left< \int k_\X(\cd,x) dP(x), f\right>_{\H_\X} = \int \left< k_\X(\cd,x), f \right>_{\H_\X} dP(x) \nonumber \\
 &=&  \int f(x) dP(x) = \E_{X \sim P}[f(X)]. \label{eq:mP_repducing} \\
\left< \hm_P, f  \right>_{\H_\X} &=& \left< \sum_{i=1}^n w_i k_\X(\cd,X_i), f   \right>_{\H_\X} = \sum_{i=1}^n w_i f(X_i).\label{eq:hmP_repducing} 
\end{eqnarray}

For any $f, g \in \H_\X$, we denote by $f \otimes g \in \H_\X \otimes \H_\X$ the tensor product of $f$ and $g$ defined as 
\begin{equation}
f \otimes g (x_1,x_2) := f(x_1)g(x_2)\quad \forall x_1,x_2 \in \X \label{eq:tensor_def}.
\end{equation}
The inner product of the tensor RKHS $\H_\X \otimes \H_\X$ satisfies
\begin{equation} \label{eq:tensor_inner}
\left< f_1 \otimes g_1, f_2 \otimes g_2 \right>_{\H_\X \otimes \H_\X} = \left< f_1,f_2\right>_{\H_\X}  \left< g_1,g_2\right>_{\H_\X}\quad \forall f_1, f_2, g_1, g_2 \in \H_\X.
\end{equation}
Let $\{ \phi_i \}_{s=1}^I \subset \H_\X$ be complete orthonormal bases of $\H_\X$, where $I \in \mathbb{N} \cup \{ \infty \}$.
Assume $\theta \in \H_\X \otimes \H_\X$ (recall that this is an assumption of Theorem \ref{theo:finite_sample_bound}). Then $\theta$ is expressed as
\begin{equation} 
\theta = \sum_{s,t=1}^I \alpha_{s,t} \phi_s \otimes \phi_t \label{eq:theta_expansion}
\end{equation}
with $\sum_{s,t} | \alpha_{s,t} |^2 < \infty$ (see, e.g.,\ \cite{Aronszajn1950}). 

\begin{proof} [Proof of Theorem \ref{theo:finite_sample_bound}]
Recall that $\hm_{Q} = \sum_{i=1}^n w_i k_\X(\cd,X'_i)$, where $X'_i \sim p( \cd | X_i)\ (i=1,\dots,n)$. Then
\begin{eqnarray}
&& \E_{X'_1,\dots,X'_n}[\| \hm_{Q} - m_{Q} \|_{\H_\X}^2]  \nonumber  \\
&=&  \E_{X'_1,\dots,X'_n}[ \left<\hm_{Q}, \hm_{Q} \right>_{\H_\X} -  2 \left< \hm_{Q}, m_Q \right>_{\H_\X}  + \left< m_Q, m_Q \right>_{\H_\X} ] \nonumber \\
&=&	\sum_{i,j=1}^n w_i w_j \E_{X'_i,X'_j }[k_\X(X'_i,X'_j)] \nonumber \\
&&-2 \sum_{i=1}^n w_i  \E_{X' \sim {Q}, X'_i}[k_\X(X',X'_i)]  + \E_{X',\tX' \sim Q}[k_\X(X',\tX')] \nonumber \\
&=& \sum_{i \neq j} w_i w_j \E_{X'_i,X'_j}[k_\X(X'_i,X'_j)] + \sum_{i=1}^n w_i^2  \E_{X'_i}[k_\X(X'_i,X'_i)] \nonumber \label{eq:expression1} \\
&&-2 \sum_{i=1}^n w_i  \E_{X' \sim {Q}, X'_i}[k_\X(X',X'_i)]  + \E_{X',\tX' \sim Q}[k_\X(X',\tX')]   \label{eq:expression2},
\end{eqnarray} 
where $\tX'$ denotes an independent copy of $X'$.

Recall that $Q = \int p(\cd|x) dP(x)$ and $\theta(x,\tx) := \int \int k_\X(x',\tx') dp(x'|x) dp(\tx' | \tx)$. We can then rewrite terms in (\ref{eq:expression2}) as
\begin{eqnarray*}
&& \E_{X' \sim {Q}, X'_i} [k_\X(X',X'_i)]  \\
&=& \int \left( \int \int k_\X(x',x'_i) dp(x'|x)  dp(x'_i|X_i) \right) dP(x) \\
&=& \int  \theta(x,X_i) dP(x) = \E_{X \sim P} [\theta(X,X_i)]. \\
&& \E_{X', \tX' \sim Q} [k_\X(X',\tX')] \\
&=& \int \int  \left( \int \int k_\X(x',\tx') dp(x'|x)p(\tx'|\tx)  \right)  dP(x) dP(\tx) \\
&=& \int \int \theta(x,\tx)   dP(x) dP(\tx)   = \E_{X,\tX \sim P} [\theta(X,\tX)] .
\end{eqnarray*}
Thus  (\ref{eq:expression2}) is equal to
\begin{eqnarray}
&& \sum_{i=1}^n w_i^2  \left( \E_{X'_i}[k_\X(X'_i,X'_i)] - \E_{X'_i, \tX'_i}[k_\X(X'_i,\tX'_i)] \right) \nonumber \\
&+& \sum_{i,j=1}^n w_i w_j \theta(X_i,X_j) -2 \sum_{i=1}^n w_i \E_{X\sim P}[\theta(X,X_i)] + \E_{X,\tX \sim {P}}[\theta(X,\tX)]  \label{eq:terms_theta}
\end{eqnarray}
We can rewrite terms in (\ref{eq:terms_theta}) as follows, using the facts (\ref{eq:mP_repducing}) (\ref{eq:hmP_repducing}) (\ref{eq:tensor_def}) (\ref{eq:tensor_inner}) (\ref{eq:theta_expansion}):
\begin{eqnarray*}
&&\sum_{i,j} w_i w_j \theta(X_i,X_j) = \sum_{i,j} w_i w_j  \sum_{s,t} \alpha_{s,t} \phi_s (X_i) \phi_t(X_j) \\
&&\quad = \sum_{s,t} \alpha_{s,t} \sum_i w_i \phi_s(X_i) \sum_j w_j \phi_t(X_j) 
= \sum_{s,t} \alpha_{s,t} \left< \hm_P, \phi_s \right>_{\H_\X} \left< \hm_P, \phi_t \right>_{\H_\X} \\
&&\quad  = \sum_{s,t} \alpha_{s,t} \left< \hm_P \otimes \hm_P, \phi_s \otimes \phi_t \right>_{\H_\X \otimes \H_\X} 
= \left< \hm_P \otimes \hm_P, \theta \right>_{\H_\X \otimes \H_\X}. \\
&& \sum_i w_i \E_{X\sim P}[\theta(X,X_i)] =  \sum_i w_i \E_{X\sim P}[ \sum_{s,t} \alpha_{s,t} \phi_s(X) \phi_t(X_i)] \\
&&\quad =  \sum_{s,t} \alpha_{s,t} \E_{X\sim P}[ \phi_s(X)]  \sum_i w_i  \phi_t(X_i) =  \sum_{s,t} \alpha_{s,t} \left< m_P, \phi_s \right>_{\H_\X} \left< \hm_P, \phi_t \right>_{\H_\X} \\
&&\quad =  \sum_{s,t} \alpha_{s,t} \left< m_P \otimes \hm_P, \phi_s \otimes \phi_t \right>_{\H_\X \otimes \H_\X} = \left< m_P \otimes \hm_P,  \theta \right>_{\H_\X \otimes \H_\X}. \\
&& \E_{X,\tX \sim {P}}[\theta(X,\tX)] =  \E_{X,\tX \sim {P}}[\sum_{s,t} \alpha_{s,t} \phi_s(X) \phi_t(\tX)] \\
&&\quad = \sum_{s,t} \alpha_{s,t} \left< m_P, \phi_s \right>_{\H_\X} \left< m_P, \phi_t \right>_{\H_\X} =   \sum_{s,t} \alpha_{s,t} \left< m_P \otimes m_P, \phi_s \otimes \phi_t \right>_{\H_\X \otimes \H_\X} \\
&&\quad = \left< m_P \otimes m_P, \theta \right>_{\H_\X \otimes \H_\X}.
\end{eqnarray*}
Thus (\ref{eq:terms_theta}) is equal to
\begin{eqnarray*}
&& \sum_{i=1}^n w_i^2  \left( \E_{X'_i}[k_\X(X'_i,X'_i)] - \E_{X'_i, \tX'_i}[k_\X(X'_i,\tX'_i)] \right) \\
&& + \left< \hm_P \otimes \hm_P, \theta \right>_{\H_\X \otimes \H_\X} -2 \left< \hm_P \otimes m_P, \theta \right>_{\H_\X \otimes \H_\X} + \left< m_P \otimes m_P, \theta \right>_{\H_\X \otimes \H_\X} \\ 
&=&  \sum_{i=1}^n w_i^2  \left( \E_{X'_i}[k_\X(X'_i,X'_i)] - \E_{X'_i, \tX'_i}[k_\X(X'_i,\tX'_i)] \right) \\
&& + \left< (\hm_P - m_P)\otimes(\hm_P - m_P), \theta \right>_{\H_\X \otimes \H_\X}.
\end{eqnarray*}

Finally, the Cauchy-Schwartz inequality gives
\begin{eqnarray*}
 \left< (\hm_P - m_P)\otimes(\hm_P - m_P), \theta \right>_{\H_\X \otimes \H_\X} \leq   \| \hm_P - m_P \|_{\H_\X}^2  \| \theta \|_{\H_\X \otimes \H_\X}.
\end{eqnarray*}
This completes the proof.
\end{proof}

\section{Proof of Theorem \ref{theo:resampling}} \label{sec:proof_resampling}
Theorem \ref{theo:resampling} provides convergence rates for the resampling algorithm (Algorithm \ref{al:resampling_gen}). 
This theorem assumes that the candidate samples $Z_1,\dots,Z_N$ for resampling are i.i.d.\ with a density $q$. Here we prove Theorem \ref{theo:resampling} by showing that the same statement holds under weaker assumptions (Theorem \ref{theo:resampling_app} below). 

We first describe assumptions. Let $P$ be the distribution of the kernel mean $m_P$, and $L_2(P)$ be the Hilbert space of square-integrable functions on $\X$ with respect to $P$. For any $f \in L_2(P)$, we write its norm by $\| f \|_{L_2(P)} := \int f^2(x) dP(x)$.

\begin{assumption} \label{ass:resampling1}
The candidate samples $Z_1,\dots,Z_N$ are independent. There are probability distributions $Q_1,\dots, Q_N$ on $\X$, such that for any bounded measurable function $g: \X \to \R$, we have 
\begin{equation} \label{eq:resampling1}
 \E \left[ \frac{1}{N-1} \sum_{j \neq i} g(Z_j)  \right] = \E_{X \sim Q_i} [g(X)] \quad (i=1,\dots,N).
\end{equation}
\end{assumption}

\begin{assumption} \label{ass:resampling2}
The distributions $Q_1,\dots, Q_N$ have density functions $q_1, \dots, q_N$, respectively. Define $Q := \frac{1}{N} \sum_{i=1}^N Q_i$ and $q := \frac{1}{N} \sum_{i=1}^N q_i$. There is a constant $A > 0$ that does not depend on $N$, such that  
\begin{equation} \label{eq:resampling2}
 \left\| \frac{q_i}{q} - 1 \right\|_{L_2(P)}^2 \leq \frac{A}{\sqrt{N}} \quad (i=1,\dots,N).
\end{equation}
\end{assumption}

\begin{assumption} \label{ass:resampling3}
The distribution $P$ has a density function $p$ such that $\sup_{x \in \X} \frac{p(x)}{q(x)} < \infty$. 
There is a constant $\sigma > 0$ such that 
\begin{equation} \label{eq:resampling3}
 \sqrt{N}\left( \frac{1}{N} \sum_{i=1}^N \frac{p(Z_i)}{q(Z_i)} - 1 \right) \xrightarrow{D} \mathcal{N}(0,\sigma^2), 
\end{equation}
where $\xrightarrow{D}$ denotes convergence in distribution and $\mathcal{N}(0,\sigma^2)$ the normal distribution with mean $0$ and variance $\sigma^2$.
\end{assumption}

These assumptions are weaker than those in Theorem \ref{theo:resampling}, which require $Z_1,\dots,Z_N$ be i.i.d. For example, Assumption \ref{ass:resampling1} is clearly satisfied for the i.i.d.\ case, since in this case we have $Q = Q_1, = \cdots = Q_N$.
The inequality (\ref{eq:resampling2}) in Assumption \ref{ass:resampling2} requires that the distributions $Q_1,\dots,Q_N$ get similar, as the sample size increases. This is also satisfied under the i.i.d.\ assumption. 
Likewise, the convergence (\ref{eq:resampling3}) in Assumption \ref{ass:resampling3} is satisfied from the central limit theorem if $Z_1,\dots,Z_N$ are i.i.d.

We will need the following lemma.
\begin{lemma} \label{lemma:app_resampling}
Let $Z_1,\dots,Z_N$ be samples satisfying Assumption \ref{ass:resampling1}. Then the following holds for any bounded measurable function $g:\X \to \R$:
\[ \E \left[ \frac{1}{N} \sum_{i=1}^N g(Z_i) \right] =   \int g(x) dQ(x). \]
\begin{proof}
\begin{eqnarray*}
&&\E \left[ \frac{1}{N} \sum_{i=1}^N g(Z_i) \right] = \E\left[ \frac{1}{N (N-1)} \sum_{i=1}^N \sum_{j \neq i} g(Z_j) \right] \\
&=& \frac{1}{N} \sum_{i=1}^N  \E\left[ \frac{1}{N-1} \sum_{j \neq i} g(Z_j) \right] = \frac{1}{N} \sum_{i=1}^N  \int g(x) Q_i(x) = \int g(x) dQ(x).
\end{eqnarray*}
\end{proof}
\end{lemma}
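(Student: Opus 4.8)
The plan is to exploit Assumption \ref{ass:resampling1} directly, since it is stated precisely in the form of a \emph{leave-one-out} expectation. The only nontrivial idea is that the hypothesis controls the average $\frac{1}{N-1}\sum_{j\neq i} g(Z_j)$ taken over the $N-1$ samples \emph{other than} $Z_i$, whereas the quantity of interest is the plain average $\frac{1}{N}\sum_{i=1}^N g(Z_i)$. So the first and decisive step is a purely combinatorial rewriting that expresses the plain average as an average of these leave-one-out sums, after which the assumption applies termwise.

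Concretely, I would begin from the counting identity
\[ \sum_{i=1}^N \sum_{j \neq i} g(Z_j) = (N-1)\sum_{j=1}^N g(Z_j), \]
which holds because each summand $g(Z_j)$ is counted once for every index $i \neq j$, i.e.\ exactly $N-1$ times. Dividing both sides by $N(N-1)$ yields the key reformulation
\[ \frac{1}{N}\sum_{i=1}^N g(Z_i) = \frac{1}{N}\sum_{i=1}^N \left( \frac{1}{N-1}\sum_{j\neq i} g(Z_j) \right). \]

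The remaining steps are routine bookkeeping. Taking expectations on both sides and using linearity of expectation, I would interchange $\E$ with the outer finite sum to obtain $\frac{1}{N}\sum_{i=1}^N \E\bigl[\frac{1}{N-1}\sum_{j\neq i} g(Z_j)\bigr]$. Assumption \ref{ass:resampling1} then replaces each inner expectation by $\E_{X\sim Q_i}[g(X)] = \int g\,dQ_i$, which is legitimate because $g$ is bounded and measurable. Finally, invoking the definition $Q := \frac{1}{N}\sum_{i=1}^N Q_i$ together with linearity of the integral gives $\frac{1}{N}\sum_{i=1}^N \int g\,dQ_i = \int g\,dQ$, which is exactly the claim.

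The argument carries no genuine analytic obstacle: boundedness of $g$ ensures that every expectation and integral in sight is finite, so each interchange of expectation with a \emph{finite} summation is justified, and full independence of the $Z_i$ is not needed beyond what Assumption \ref{ass:resampling1} already packages. The one point meriting care—and the sole conceptual crux—is the symmetrization step itself: since the hypothesis is informative \emph{only} about leave-one-out averages, one must first massage the target quantity into that shape before the assumption becomes usable. Everything downstream is mechanical.
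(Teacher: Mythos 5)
Your proof is correct and is essentially identical to the paper's own argument: the same counting identity rewriting $\frac{1}{N}\sum_{i=1}^N g(Z_i)$ as $\frac{1}{N(N-1)}\sum_{i=1}^N\sum_{j\neq i} g(Z_j)$, followed by linearity of expectation, termwise application of Assumption \ref{ass:resampling1}, and the definition $Q = \frac{1}{N}\sum_{i=1}^N Q_i$. No differences worth noting.
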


The following theorem shows the convergence rates of our resampling algorithm. 
Note that it does not assume that the candidate samples $Z_1,\dots,Z_N$ are identical to those expressing the estimator $\hm_P$. 
\begin{theo} \label{theo:resampling_app}
Let $k$ be a bounded positive definite kernel, and $\H$ be the associated RKHS.
Let $Z_1,\dots,Z_N$ be candidate samples satisfying Assumptions \ref{ass:resampling1}, \ref{ass:resampling2} and \ref{ass:resampling3}. 
Let $P$ be a probability distribution satisfying Assumption 3, and let $m_P = \int k(\cd,x) dP(x)$ be the kernel mean. Let $\hm_P \in \H$ be any element in $\H$.
Suppose we apply Algorithm \ref{al:resampling_gen} to $\hm_P \in \H$ with candidate samples  $Z_1,\dots,Z_N$, and let  $\bar{X}_1,...,\bar{X}_\ell \in \{ Z_1, \dots, Z_N \}$ be the resulting samples. 
Then the following holds:
\begin{eqnarray*}
 \left\| m_P - \frac{1}{\ell} \sum_{i=1}^\ell k(\cd,\bar{X}_i) \right\|_\H^2 = \left( \| \hm_P - m_P \|_{\H_\X} + O_p(N^{-1/2}) \right)^2 +  O\left( \frac{\ln \ell}{\ell} \right).
\end{eqnarray*}
\end{theo}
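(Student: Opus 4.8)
The plan is to exploit the identification of Kernel Herding with the Frank--Wolfe algorithm due to \cite{BacJulObo12}, and to split the total error into an optimization error governed by the number of herding steps $\ell$ and an approximation error coming from restricting the search to the finite candidate set $\{Z_1,\dots,Z_N\}$, governed by $N$. Concretely, I would set $J(g) := \frac{1}{2}\| g - \hm_P \|_{\H}^2$ on the marginal polytope $\mathcal{M}_N := \mathrm{conv}\{ k(\cd,Z_1),\dots,k(\cd,Z_N)\} \subset \H$. Writing $g(x) = \langle g, k(\cd,x)\rangle_{\H}$ and $\hm_P(x) = \langle \hm_P, k(\cd,x)\rangle_{\H}$, the maximization in Algorithm \ref{al:resampling_gen} is exactly the Frank--Wolfe linear-minimization step for $J$ with the predetermined step size $1/p$, and the iterate it produces is the uniform average $\check{m}_P = \frac{1}{\ell}\sum_{i=1}^{\ell} k(\cd,\bar{X}_i)\in\mathcal{M}_N$. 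Since $J$ is $1$-smooth and $\mathcal{M}_N$ has diameter at most $2\sqrt{C}$ (the kernel is bounded by $C$), Bound 3.2 of \cite{FreGri14} gives the objective-gap estimate $J(\check{m}_P) - \min_{g\in\mathcal{M}_N} J(g) = O(\ln\ell/\ell)$. Equivalently, the Frank--Wolfe duality gap at $\check{m}_P$ is $O(\ln\ell/\ell)$, so that the approximate optimality condition $\langle \check{m}_P - \hm_P,\ g - \check{m}_P\rangle_{\H} \ge -O(\ln\ell/\ell)$ holds for every $g\in\mathcal{M}_N$. This is the source of the $O(\ln\ell/\ell)$ term.

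Next I would construct a single point of $\mathcal{M}_N$ lying $O_p(N^{-1/2})$-close to $m_P$. The natural choice is the self-normalized importance-weighted mean $r_N := \sum_{i=1}^N \tilde{w}_i\, k(\cd,Z_i)$ with $\tilde{w}_i \propto p(Z_i)/q(Z_i)$, which lies in $\mathcal{M}_N$ because its weights are nonnegative and sum to one. Writing $r_N = \tilde{m}_N / S_N$ with $\tilde{m}_N := \frac{1}{N}\sum_i \frac{p(Z_i)}{q(Z_i)} k(\cd,Z_i)$ and $S_N := \frac{1}{N}\sum_i \frac{p(Z_i)}{q(Z_i)}$, I would apply Lemma \ref{lemma:app_resampling} with the bounded test functions $z\mapsto \frac{p(z)}{q(z)}h(z)$, $h\in\H$, to obtain $\E[\tilde{m}_N] = \int \frac{p}{q}\,k(\cd,x)\,dQ(x) = m_P$; Assumption \ref{ass:resampling2} then controls the variance of $\tilde{m}_N$ despite the candidates being only independent and not identically distributed, yielding $\|\tilde{m}_N - m_P\|_{\H} = O_p(N^{-1/2})$, while Assumption \ref{ass:resampling3} gives $S_N = 1 + O_p(N^{-1/2})$. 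Combining these, $\|r_N - m_P\|_{\H} = O_p(N^{-1/2})$, and consequently $\mathrm{dist}(\hm_P,\mathcal{M}_N) \le \|\hm_P - r_N\|_{\H} \le \|\hm_P - m_P\|_{\H} + O_p(N^{-1/2})$.

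It then remains to assemble the two estimates into a bound on $\|\check{m}_P - m_P\|_{\H}^2$, and this is the step I expect to be the crux. Writing $D := \|\check{m}_P - m_P\|_{\H}$ and $E := \|\hm_P - m_P\|_{\H}$, I would expand $D^2 = \langle \check{m}_P - m_P, \check{m}_P - \hm_P\rangle_{\H} + \langle \check{m}_P - m_P, \hm_P - m_P\rangle_{\H}$, insert $r_N$ into the first inner product, and bound $\langle \check{m}_P - r_N, \check{m}_P - \hm_P\rangle_{\H}$ by the optimality gap $O(\ln\ell/\ell)$ and $\langle r_N - m_P, \check{m}_P - \hm_P\rangle_{\H}$ by $O_p(N^{-1/2})\,\|\check{m}_P - \hm_P\|_{\H}$ via Cauchy--Schwarz, leaving the cross term $\langle \check{m}_P - m_P, \hm_P - m_P\rangle_{\H} \le D E$. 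This produces the quadratic inequality $D^2 - DE \le O(\ln\ell/\ell) + O_p(N^{-1/2})$, whose solution yields $D \le E + O_p(N^{-1/2}) + O\!\big(\tfrac{\ln\ell}{\ell E}\big)$, and squaring gives precisely $D^2 = (E + O_p(N^{-1/2}))^2 + O(\ln\ell/\ell)$. The delicate point here is that a naive triangle inequality $\|\check{m}_P - m_P\| \le \|\check{m}_P - \hm_P\| + \|\hm_P - m_P\|$ would lose a constant factor in the leading $E^2$ term; the reason the optimization error re-enters only additively as $O(\ln\ell/\ell)$, rather than as the $O(\sqrt{\ln\ell/\ell})$ one would get by passing through iterate distances, is exactly the cancellation of $E$ when the term $O(\ln\ell/(\ell E))$ is multiplied back by $E$ upon squaring. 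Keeping every optimization remainder in squared form through this quadratic argument, rather than invoking strong convexity to bound $\|\check{m}_P - \hm_P\|$ directly, is what produces the sharp stated expression.
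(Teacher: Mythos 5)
Your first two components coincide with the paper's own proof: the identification of Algorithm \ref{al:resampling_gen} with Frank--Wolfe applied to $J(g)=\frac{1}{2}\|g-\hm_P\|_\H^2$ over $\mathcal{M}_N$, the $O(\ln\ell/\ell)$ objective-gap bound from Bound 3.2 of Freund--Grigas, and the self-normalized importance-weighted point $r_N\in\mathcal{M}_N$ with $\|r_N-m_P\|_\H=O_p(N^{-1/2})$, proved from Lemma \ref{lemma:app_resampling} together with Assumptions \ref{ass:resampling1}--\ref{ass:resampling3} (the paper splits this into the unnormalized term and the $|1-N/S_N|$ term handled by the delta method, exactly as you sketch). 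Where you diverge is the assembly. The paper never performs your quadratic-inequality step: it keeps $\hm_P$ as the anchor of the Frank--Wolfe objective, obtaining $\|\hm_P-\check{m}_P\|_\H^2\le\inf_{g\in\mathcal{M}_N}\|\hm_P-g\|_\H^2+O(\ln\ell/\ell)$ (its (\ref{eq:resample_inter_proof})), bounds the infimum by $\|\hm_P-r_N\|_\H\le\|\hm_P-m_P\|_\H+O_p(N^{-1/2})$, and concludes; its final display (\ref{eq:proof_upper}) therefore controls the distance from the herding output to $\hm_P$, not to $m_P$, and no duality gap is ever needed. You have in fact put your finger on a real subtlety -- the passage from (\ref{eq:proof_upper}) to a statement with $m_P$ on the left is not spelled out in the paper, and a naive triangle inequality would double the leading term, as you note -- but your repair of that step does not work as written.

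The genuine gap is the sentence ``Equivalently, the Frank--Wolfe duality gap at $\check{m}_P$ is $O(\ln\ell/\ell)$.'' This is not an equivalence: by convexity the duality (Wolfe) gap always \emph{dominates} the objective gap, so a small objective gap does not imply a small duality gap; and the result invoked (Bound 3.2 of Freund--Grigas) controls only the objective gap, while the FW-gap guarantees in that reference are of the form ``minimum gap over a window of iterations,'' not a bound at the final iterate, which is what your decomposition requires since you evaluate the gap at $\check{m}_P$ itself. What the objective gap does yield, via the $1$-strong convexity of $J$, is only $\|\check{m}_P-g^*\|_\H=O(\sqrt{\ln\ell/\ell})$ for the exact minimizer $g^*$ of $J$ over $\mathcal{M}_N$, hence a last-iterate duality gap of order $O(\sqrt{\ln\ell/\ell})$; this square-root relation is genuinely attained for quadratic objectives (Frank--Wolfe zig-zagging on a segment with the anchor off the segment already gives a last-iterate gap comparable to the square root of the objective gap). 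Feeding $O(\sqrt{\ln\ell/\ell})$ into your inequality $D^2-DE\le c$ leaves a cross term of order $E\sqrt{\ln\ell/\ell}$, which is not absorbed into $O(\ln\ell/\ell)$, so the sharp additive form you are aiming at does not follow. To close the argument you would need either a last-iterate FW-gap bound of order $\ln\ell/\ell$ (not available from the cited results), or to do what the paper does: state and prove the bound with $\hm_P$ on the left-hand side, where Bound 3.2 applies directly.
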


\begin{proof}
Our proof is based on the fact \citep{BacJulObo12} that  Kernel Herding can be seen as the Frank-Wolfe optimization method with step size $1/(\ell+1)$ for the $\ell$-th iteration.  For details of the Frank-Wolfe method, we refer to \cite{Jag13,FreGri14} and references therein.

Fix the samples $Z_1, \dots, Z_N$. 
Let $\mathcal{M}_N$ be the convex hull of the set $\{ k(\cd,Z_1), \dots, k(\cd,Z_N) \} \subset \H$. Define a loss function $J: \H \to \R$ by
\begin{equation} \label{eq:def_loss}
J (g) = \frac{1}{2} \| g - \hm_P \|_\H^2,\quad g \in \H
\end{equation}
Then Algorithm \ref{al:resampling_gen} can be seen as the Frank-Wolfe method that iteratively minimizes this loss function over the convex hull $\mathcal{M}_N$:
\[ \inf_{g \in \mathcal{M}_N} J(g). \]
More precisely, the Frank-Wolfe method solves this problem by the following iterations:
\begin{eqnarray*}
s &:=& \argmin_{g \in \mathcal{M}_N} \left< g, \nabla J(g_{\ell - 1})  \right>_{\H} \\
g_{\ell} &:=& (1-\gamma) g_{\ell - 1} + \gamma s \quad (\ell \geq 1),
\end{eqnarray*}
where $\gamma$ is a step size defined as $\gamma = 1/\ell$, and $\nabla J(g_{\ell - 1})$ is the gradient of $J$ at $g_{\ell - 1}$: $\nabla J(g_{\ell - 1}) = g_{\ell - 1} - \hm_P$.  Here the initial point is defined as $g_0 := 0$. It can be easily shown that $g_\ell = \frac{1}{\ell} \sum_{i=1}^\ell k(\cd,\bar{X}_i)$, where $\bX_1,\dots,\bX_\ell$ are the samples given by Algorithm \ref{al:resampling_gen}. For details, see \cite{BacJulObo12}.

Let  $L_{J,\mathcal{M}_N} > 0$ be the Lipschitz constant of the gradient $\nabla J$ over $\mathcal{M}_N$, and ${\rm Diam}\ \mathcal{M}_N > 0$ be the diameter of $\mathcal{M}_N$:
\begin{eqnarray}
L_{J,\mathcal{M}_N} &:=& \sup_{g_1,g_2 \in \mathcal{M}_N} \frac{ \| \nabla J(g_1) - \nabla J(g_2) \|_{\H} }{\|g_1 - g_2 \|_{\H}} \nonumber \\
&=& \sup_{g_1,g_2 \in \mathcal{M}_N} \frac{ \| g_1 - g_2 \|_\H}{ \| g_1 - g_2 \|_\H } = 1, \label{eq:Lipschitz_grad} \\
{\rm Diam}\ \mathcal{M}_N &:=& \sup_{g_1,g_2 \in \mathcal{M}_N} \| g_1 - g_2 \|_{\H} \nonumber \\
&\leq& \sup_{g_1,g_2 \in \mathcal{M}_N} \|g_1\|_\H + \| g_2 \|_\H \leq 2C \label{eq:diam_bound},
\end{eqnarray}
where $C :=  \sup_{x \in \X} \| k(\cd,x) \|_\H = \sup_{x \in \X} \sqrt{k(x,x)} < \infty$.

From Bound 3.2 and Eq. (8) of \cite{FreGri14}, we then have
\begin{eqnarray}
J(g_\ell) - \inf_{g \in \mathcal{M}_N} J(g) 
&\leq& \frac{L_{J,\mathcal{M}_N} ({\rm Diam}\ \mathcal{M}_N)^2 (1+\ln \ell)}{2\ell} \\
&\leq& \frac{ 2 C^2 (1+\ln \ell)}{\ell} \label{eq:upper_FW},
\end{eqnarray}
where the last inequality follows from (\ref{eq:Lipschitz_grad}) and (\ref{eq:diam_bound}).

Note that the upper-bound of (\ref{eq:upper_FW}) does not depend on the candidate samples $Z_1,\dots, Z_N$.
Hence, combined with (\ref{eq:def_loss}), the following holds for any choice of $Z_1,\dots, Z_N$:
\begin{equation} \label{eq:resample_inter_proof}
 \left\| \hm_P - \frac{1}{\ell} \sum_{i=1}^\ell k(\cd,\bar{X}_i) \right\|_\H^2 \leq \inf_{g \in \mathcal{M}_N} \|\hm_P - g \|_\H^2 + \frac{ 4 C^2 (1+\ln \ell)}{\ell}.
\end{equation}

Below we will focus on bounding the first term of (\ref{eq:resample_inter_proof}). Recall here that $Z_1,\dots,Z_N$ are random samples. 
Define a random variable $S_N := \sum_{i=1}^N \frac{p(Z_i)}{q(Z_i)}$.
Since $\mathcal{M}_N$ is the convex hull of the $\{ k(\cd,Z_1), \dots, k(\cd,Z_N) \}$, we have
\begin{eqnarray*}
 && \inf_{g \in \mathcal{M}_N} \|\hm_P - g \|_\H \nonumber \\
 &=& \inf_{\alpha \in \R^N,\ \alpha \geq 0,\ \sum_i \alpha_i \leq 1} \| \hm_P - \sum_i \alpha_i k(\cd,Z_i) \|_\H \nonumber \\
&\leq&  \| \hm_P -  \frac{1}{S_N} \sum_i \frac{p(Z_i)}{q(Z_i)} k(\cd,Z_i) \|_\H \\
&\leq&   \| \hm_P - m_P \|_\H + \| m_P - \frac{1}{N} \sum_i \frac{p(Z_i)}{q(Z_i)} k(\cd,Z_i) \|_\H \nonumber \\
&&+   \|   \frac{1}{N} \sum_i \frac{p(Z_i)}{q(Z_i)} k(\cd,Z_i) -  \frac{1}{S_N} \sum_i \frac{p(Z_i)}{q(Z_i)} k(\cd,Z_i) \|_\H \nonumber.
\end{eqnarray*}
Therefore we have
\begin{eqnarray}
&& \| \hm_P - \frac{1}{\ell} \sum_{i=1}^\ell k(\cd,\bar{X}_i) \|_\H^2 \nonumber \\
&\leq& (  \| \hm_P - m_P \|_\H +  \| m_P - \frac{1}{N} \sum_i \frac{p(Z_i)}{q(Z_i)} k(\cd,Z_i) \|_\H \nonumber \\
&&+  \|   \frac{1}{N} \sum_i \frac{p(Z_i)}{q(Z_i)} k(\cd,Z_i) -  \frac{1}{S_N} \sum_i \frac{p(Z_i)}{q(Z_i)} k(\cd,Z_i) \|_\H )^2  +  O\left( \frac{ \ln \ell}{\ell} \right)\label{eq:proof_upper}.
\end{eqnarray}
Below we derive rates of convergence for the second and third terms.

\paragraph{Second term.}
We derive a rate of convergence in expectation, which implies a rate of convergence in probability.
To this end, we use the following fact: Let $f \in \H$ be any function in the RKHS.
By the assumption $\sup_{x \in \X} \frac{p(x)}{q(x)} < \infty$ and the boundedness of $k$, functions $x \to \frac{p(x)}{q(x)} f(x)$ and  $x \to \left( \frac{p(x)}{q(x)}\right)^2 f(x)$ are bounded.
\begin{eqnarray*}
&& \E[ \| m_P - \frac{1}{N} \sum_i \frac{p(Z_i)}{q(Z_i)} k(\cd,Z_i) \|_\H^2] \nonumber \\
&=& \| m_P \|_\H^2 - 2 \E [ \frac{1}{N} \sum_i \frac{p(Z_i)}{q(Z_i)} m_P(Z_i)  ] + \E[\frac{1}{N^2} \sum_i \sum_j \frac{p(Z_i)}{q(Z_i)} \frac{p(Z_j)}{q(Z_j)} k(Z_i,Z_j) ] \nonumber \\
&=& \| m_P \|_\H^2 - 2 \int \frac{p(x)}{q(x)} m_P(x) q(x) dx + \E [\frac{1}{N^2} \sum_i \sum_{j \neq i} \frac{p(Z_i)}{q(Z_i)} \frac{p(Z_j)}{q(Z_j)} k(Z_i,Z_j) ] \nonumber \\
&& + \E [ \frac{1}{N^2} \sum_i \left(\frac{p(Z_i)}{q(Z_i)}  \right)^2 k(Z_i,Z_i) ] \nonumber \\
&=& \| m_P \|_\H^2 - 2\| m_P \|_\H^2 + \E [ \frac{N-1}{N^2} \sum_i \frac{p(Z_i)}{q(Z_i)} \int \frac{p(x)}{q(x)} k(Z_i,x) q_i(x) dx ] \nonumber  \\
&&+ \frac{1}{N} \int \left(\frac{p(x)}{q(x)} \right)^2 k(x,x) q(x) dx \nonumber \\
&=& - \| m_P \|_\H^2 + \E [ \frac{N-1}{N^2} \sum_i \frac{p(Z_i)}{q(Z_i)} \int \frac{p(x)}{q(x)} k(Z_i,x) q_i(x) dx ]  + \frac{1}{N} \int \frac{p(x)}{q(x)} k(x,x) dP(x). \label{eq:second_bound}
\end{eqnarray*}

We further rewrite the second term of the last equality as follows:
\begin{eqnarray*}
&& \E [ \frac{N-1}{N^2} \sum_i \frac{p(Z_i)}{q(Z_i)} \int \frac{p(x)}{q(x)} k(Z_i,x) q_i(x) dx ] \\
&=& \E [ \frac{N-1}{N^2} \sum_i \frac{p(Z_i)}{q(Z_i)} \int \frac{p(x)}{q(x)} k(Z_i,x) ( q_i(x) - q(x) )dx ] \\
&&+ \E [ \frac{N-1}{N^2} \sum_i \frac{p(Z_i)}{q(Z_i)} \int \frac{p(x)}{q(x)} k(Z_i,x) q(x) dx ] \\
&=&  \E [ \frac{N-1}{N^2} \sum_i \frac{p(Z_i)}{q(Z_i)} \int \sqrt{p(x)} k(Z_i,x) \sqrt{p(x)} ( \frac{q_i(x)}{q(x)} - 1 )dx ] + \frac{N-1}{N} \| m_P \|_\H^2 \\
&\leq&   \E [ \frac{N-1}{N^2} \sum_i \frac{p(Z_i)}{q(Z_i)} \| k(Z_i, \cd) \|_{L_2(P)} \| \frac{q_i(x)}{q(x)} - 1  \|_{L_2(P)} ] + \frac{N-1}{N} \| m_P \|_\H^2 \\
&\leq&   \E [ \frac{N-1}{N^3} \sum_i \frac{p(Z_i)}{q(Z_i)} C^2 A ] + \frac{N-1}{N} \| m_P \|_\H^2 \\
&=&  \frac{C^2 A (N-1)}{N^2} + \frac{N-1}{N} \| m_P \|_\H^2,
\end{eqnarray*}
where the first inequality follows from Cauchy-Schwartz. Using this, we obtain
\begin{eqnarray*}
&& \E[ \| m_P - \frac{1}{N} \sum_i \frac{p(Z_i)}{q(Z_i)} k(\cd,Z_i) \|_\H^2 \\
&\leq&  \frac{1}{N} \left( \int \frac{p(x)}{q(x)} k(x,x) dP(x) - \| m_P \|_\H^2 \right) + \frac{C^2(N-1) A}{N^2} \\
&=& O(N^{-1}).
\end{eqnarray*}
Therefore we have 
\begin{equation} \label{eq:resampling_second}
 \| m_P - \frac{1}{N} \sum_i \frac{p(Z_i)}{q(Z_i)} k(\cd,Z_i) \|_\H = O_p(N^{-1/2}) \quad (N \to \infty).
\end{equation}

\paragraph{Third term.}
We can bound the third term as follows:
\begin{eqnarray*}
&& \left\|   \frac{1}{N} \sum_i \frac{p(Z_i)}{q(Z_i)} k(\cd,Z_i) -  \frac{1}{S_N} \sum_i \frac{p(Z_i)}{q(Z_i)} k(\cd,Z_i) \right\|_\H \\
&=&  \left\|  \frac{1}{N} \sum_i \frac{p(Z_i)}{q(Z_i)} k(\cd,Z_i) \left( 1 - \frac{N}{S_N} \right) \right\|_\H  \\
&=& \left| 1- \frac{N}{S_N} \right|  \left\|  \frac{1}{N} \sum_i \frac{p(Z_i)}{q(Z_i)} k(\cd,Z_i) \right\|_\H  \\
&\leq& \left| 1- \frac{N}{S_N}  \right|  C\ \| p/q \|_\infty  \\
&=& \left| 1 - \frac{1}{\frac{1}{N} \sum_{i=1}^N p(Z_i) / q(Z_i)} \right| C\ \| p/q \|_\infty,
\end{eqnarray*}
where $\| p/q \|_\infty := \sup_{x \in \X} \frac{p(x)}{q(x)} < \infty$.
Therefore the following holds by Assumption 3 and the Delta method:
\begin{equation} \label{eq:resampling_third}
 \left\|   \frac{1}{N} \sum_i \frac{p(Z_i)}{q(Z_i)} k(\cd,Z_i) -  \frac{1}{S_N} \sum_i \frac{p(Z_i)}{q(Z_i)} k(\cd,Z_i) \right\|_\H = O_p( N^{-1/2} ).
\end{equation}

The assertion of the theorem follows from (\ref{eq:proof_upper}) (\ref{eq:resampling_second}) (\ref{eq:resampling_third}).
\end{proof}


\section{Reduction of computational cost} \label{sec:speed_up}
We have seen in Section \ref{sec:overview_complexity} that the time complexity of KMCF in one time step is $O(n^3)$, where $n$ is the number of the state-observation examples $\{ (X_i, Y_i) \}_{i=1}^n$. 
This can be costly if one wishes to use KMCF in real-time applications with a large number of samples.
Here we show two methods for reducing the costs: one based on low rank approximation of kernel matrices, and one based on Kernel Herding. 
Note that Kernel Herding is also used in the resampling step.
The purpose here is different, however: we make use of Kernel Herding for finding a reduced representation of the data $\{ (X_i, Y_i) \}_{i=1}^n$.

\subsection{Low rank approximation of kernel matrices} \label{sec:low_rank}
Our goal is to reduce the costs of Algorithm \ref{al:KBR_simple} of Kernel Bayes' Rule.
Algorithm \ref{al:KBR_simple} involves two matrix inversions: $(G_X + n \varepsilon I_n)^{-1}$ in Line 3 and $( (\Lambda G_Y)^2 + \delta I_n)^{-1}$ in Line 4.
Note that $(G_X + n \varepsilon I_n)^{-1}$ does not involve the test data, so can be computed before the test phase. 
On the other hand, $( (\Lambda G_Y)^2 + \delta I_n)^{-1}$ depends on matrix $\Lambda$.
This matrix involves the vector ${\bf m}_\pi$, which essentially represents the prior of the current state (see Line 13 of Algorithm \ref{al:KBRPF}).
Therefore $( (\Lambda G_Y)^2 + \delta I_n)^{-1}$ needs to be computed for each iteration in the test phase.
This has complexity of $O(n^3)$.
Note that even if $(G_X + n \varepsilon I_n)^{-1}$ can be computed in the training phase, the multiplication $(G_X + n \varepsilon I_n)^{-1} {\bf m}_\pi$ in Line 3 requires $O(n^2)$.
Thus it can also be costly.
Here we consider methods to reduce both costs in Line 3 and 4.

Suppose that there exist low rank matrices $U, V \in \R^{n \times r}$, where $r < n$, that approximate the kernel matrices: $G_X \approx U U^T$, $G_Y \approx V V^T$. 
Such low rank matrices can be obtained by, for example, incomplete Cholesky decomposition with time complexity $O(n r^2)$ \citep{FinSch01,BacJor02}. 
Note that the computation of these matrices are only required once before the test phase. 
Therefore their time complexities are not the problem here.

\paragraph{Derivation.}
First, we approximate $(G_X + n \varepsilon I_n)^{-1} {\bf m}_\pi$ in Line 3 using $G_X \approx U U^T$.
By the Woodbury identity, we have
\begin{eqnarray*}
  (G_X + n \varepsilon I_n)^{-1} {\bf m}_\pi
  &\approx& (U U^T + n \varepsilon I_n)^{-1}  {\bf m}_\pi \\
  &=& \frac{1}{n \varepsilon} (I_n - U (n \varepsilon I_r + U^T U)^{-1} U^T)  {\bf m}_\pi,
\end{eqnarray*}
where $I_r \in \R^{r \times r}$ denotes the identity.
Note that $(n \varepsilon I_r + U^T U)^{-1}$ does not involve the test data, so can be computed in the training phase. 
Thus the above approximation of $\mu$ can be computed with complexity $O(n r^2)$.

Next, we approximate $w =  \Lambda G_Y ( (\Lambda G_Y)^2 + \delta I)^{-1} \Lambda {\bf k}_Y$ in Line 4 using $G_Y \approx V V^T$.
Define $B= \Lambda V \in \R^{n \times r}$, $C=V^T \Lambda V \in \R^{r \times r}$, and $D = V^T \in \R^{r \times n}$. Then $(\Lambda G_Y)^2 \approx (\Lambda V V^T)^2 = BCD$.
By the Woodbury identity, we obtain
\begin{eqnarray*} 
(\delta I_n + (\Lambda G_Y)^2 )^{-1} &\approx& (\delta I_n +  BCD )^{-1} \\
&=& \frac{1}{\delta} (I_n - B(\delta C^{-1} + DB)^{-1}D).
\end{eqnarray*}
Thus $w$ can be approximated as
\begin{eqnarray*}
w &=& \Lambda G_Y ( (\Lambda G_Y)^2 + \delta I)^{-1} \Lambda {\bf k}_Y \\
&\approx & \frac{1}{\delta} \Lambda V V^T (I_n - B(\delta C^{-1} + DB)^{-1} D) \Lambda {\bf k}_Y.
\end{eqnarray*}
The computation of this approximation requires $O(n r^2 + r^3) = O(n r^2)$. 
Thus in total, the complexity of Algorithm \ref{al:KBR_simple} can be reduced to $O(nr^2)$.
We summarize the above approximations in Algorithm \ref{al:KBR_lowrank}.

\paragraph{How to use.}
Algorithm \ref{al:KBR_lowrank} can be used with Algorithm \ref{al:KBRPF} of KMCF, by modifying Algorithm \ref{al:KBRPF} in the following manner:
(i) Compute the low rank matrices $U, V$ right after Line 4-5. This can be done by using, for example, incomplete Cholesky decomposition \citep{FinSch01,BacJor02}; (ii) Replace Algorithm \ref{al:KBR_simple} in Line 15 by Algorithm \ref{al:KBR_lowrank}.

\paragraph{How to select the rank.}
As discussed in Section \ref{sec:overview_complexity}, one way of selecting the rank $r$ is to use cross validation, by regarding $r$ as a hyper parameter of KMCF. 
Another way is to measure the approximation errors $\| G_X - UU^T \|$ and $\| G_Y -V V^T \|$ with some matrix norm, such as the Frobenius norm.
Indeed, we can compute the smallest rank $r$ such that these errors are below a prespecified threshold, and this can be done efficiently with time complexity $O(n r^2)$ \citep{BacJor02}.

\begin{algorithm}[t]
\caption{Low Rank Approximation of Kernel Bayes' Rule}
\label{al:KBR_lowrank}
\begin{algorithmic}[1]
\STATE
{\bf Input:} ${\bf k}_Y, {\bf m}_\pi \in \R^n$, $U, V \in \R^{n \times r}$, $\varepsilon,\delta > 0$.
\STATE {\bf Output:}  $w := (w_{1},\dots,w_{n})^T \in \R^n$.
\\ \hrulefill
\STATE  $\Lambda \leftarrow  {\rm diag} (  \frac{1}{n \varepsilon} (I_n - U (n \varepsilon I_r + U^T U)^{-1} U^T)  {\bf m} ) \in \R^{n \times n}$.
\STATE $B \leftarrow \Lambda V \in \R^{n \times r}$,\quad $C \leftarrow V^T \Lambda V \in \R^{r \times r}$,\quad $D \leftarrow V^T \in \R^{r \times n}$.
\STATE $w \leftarrow \frac{1}{\delta} \Lambda V V^T (I_n - B(\delta C^{-1} + DB)^{-1} D) \Lambda {\bf k}_Y \in \R^n$.

\end{algorithmic}
\end{algorithm}

\subsection{Data reduction with Kernel Herding} \label{sec:subsampling}
Here we describe an approach to reduce the size of the representation of the state-observation examples $\{ (X_i,Y_i) \}_{i=1}^n$ in an efficient way. 
By ``efficient", we mean that the information contained in $\{ (X_i,Y_i) \}_{i=1}^n$ will be preserved even after the reduction.
Recall that $\{ (X_i,Y_i) \}_{i=1}^n$ contains the information of the observation model $p(y_t | x_t)$ (recall also that $p(y_t | x_t)$ is assumed time-homogeneous; see Section \ref{sec:notation}).
This information is only used in Algorithm \ref{al:KBR_simple} of Kernel Bayes' Rule (Line 15, Algorithm \ref{al:KBRPF}). 
Therefore it suffices to consider how Kernel Bayes' Rule accesses the information contained in the joint sample $\{ (X_i,Y_i) \}_{i=1}^n$.

\paragraph{Representation of the joint sample.}
To this end, we need to show how the joint sample $\{ (X_i,Y_i) \}_{i=1}^n$ can be represented with a kernel mean embedding.
Recall that $(k_\X,\H_\X)$ and $(k_\Y,\H_\Y)$ are kernels and the associated RKHSs on the state space $\X$ and the observation space $\Y$, respectively.
Let $\X \times \Y$ be the product space of $\X$ and $\Y$.
Then we can define a kernel $k_{\X \times \Y}$ on $\X \times \Y$ as the product of $k_\X$ and $k_\Y$:
 $k_{\X \times \Y} ((x,y), (x',y')) = k_\X(x,x')k_\Y(y,y')$ for all $(x,y), (x',y') \in \X \times \Y$.
 This product kernel $k_{\X \times \Y}$ defines an RKHS of $\X \times \Y$: let $\H_{\X \times \Y}$ denote this RKHS.
As in Section \ref{sec:background}, we can use $k_{\X \times \Y}$ and $\H_{\X \times \Y}$ for a kernel mean embedding.
In particular, the empirical distribution $\frac{1}{n} \sum_{i=1}^n \delta_{(X_i,Y_i)}$ of the joint sample $\{ (X_i,Y_i) \}_{i=1}^n \subset \X \times \Y$ can be represented as an empirical kernel mean in $\H_{\X \times \Y}$:
\begin{equation} \label{eq:joint_embedding_estimate}
 \hm_{XY} := \frac{1}{n} \sum_{i=1}^n k_{\X \times \Y} ( (\cd,\cd), (X_i,Y_i)) \in \H_{\X \times \Y}.
\end{equation}
This is the representation of the joint sample $\{ (X_i,Y_i) \}_{i=1}^n$.

The information of $\{ (X_i,Y_i) \}_{i=1}^n$ is provided for Kernel Bayes' Rule essentially through this form (\ref{eq:joint_embedding_estimate}) \citep{FukSonGre11,FukSonGre13}.
Recall that (\ref{eq:joint_embedding_estimate}) is a point in the RKHS $\H_{\X \times \Y}$.
Any point close to (\ref{eq:joint_embedding_estimate}) in $\H_{\X \times \Y}$ would also contain information close to that contained in (\ref{eq:joint_embedding_estimate}).
Therefore, we propose to find a subset $\{ (\bX_1,\bY_1),\dots (\bX_r, \bX_r) \} \subset \{ (X_i,Y_i) \}_{i=1}^n$, where $r < n$, such that its representation in $\H_{\X \times \Y}$
\begin{equation} \label{eq:kernelmean_subsamples}
 \bar{m}_{XY} := \frac{1}{r} \sum_{i=1}^r  k_{\X \times \Y} ( (\cd,\cd), (\bX_i,\bY_i)) \in \H_{\X \times \Y}
\end{equation}
is close to (\ref{eq:joint_embedding_estimate}).
Namely, we wish to find subsamples such that $\| \bar{m}_{XY} - \hm_{XY} \|_{\H_{\X \times \Y}}$ is small.
If the error $\| \bar{m}_{XY} - \hm_{XY} \|_{\H_{\X \times \Y}}$ is small enough, (\ref{eq:kernelmean_subsamples}) would provide information close to that given by (\ref{eq:joint_embedding_estimate}) for Kernel Bayes' Rule.
Thus Kernel Bayes' Rule based on such subsamples $\{ (\bX_i,\bY_i) \}_{i=1}^r$ would not perform much worse than the one based on the entire set of samples $ \{ (X_i,Y_i) \}_{i=1}^n$.

\paragraph{Subsampling method.}
To find such subsamples, we make use of Kernel Herding in Section \ref{sec:kernel_herding}. 
Namely, we apply the update equations (\ref{eq:herding_update1}) (\ref{eq:herding_update2}) to approximate (\ref{eq:joint_embedding_estimate}), with kernel $k_{\X \times \Y}$ and RKHS $\H_{\X \times \Y}$.
We greedily find subsamples $\bar{\bf D}_r := \{ (\bX_1, \bY_1), \dots, (\bX_r,\bY_r) \}$ as
\begin{eqnarray*}
 (\bX_{r}, \bY_{r}) &=& \arg \max_{(x,y) \in {\bf D} / \bar{\bf D}_{r-1}  } \frac{1}{n} \sum_{i=1}^n k_{\X \times \Y} \left( (x,y), (X_i,Y_i) \right) - \frac{1}{r} \sum_{j=1}^{r-1} k_{\X \times \Y} \left( (x,r), (\bX_i,\bY_i) \right) \\
 &=&  \arg \max_{(x,y) \in {\bf D} / \bar{\bf D}_{r-1}  } \frac{1}{n} \sum_{i=1}^n k_\X(x,X_i) k_\Y(y,Y_i) - \frac{1}{r} \sum_{j=1}^{r-1} k_\X(x, \bX_j) k_\Y(y,\bY_j).
\end{eqnarray*}
The resulting algorithm is shown in Algorithm \ref{al:subsampling}. 
The time complexity is $O(n^2 r)$ for selecting $r$ subsamples.

\paragraph{How to use.}
By using Algorithm \ref{al:subsampling}, we can reduce the the time complexity of KMCF (Algorithm \ref{al:KBRPF}) in each iteration from $O(n^3)$ to $O(r^3)$. This can be done as follows: (i) Obtain subsamples $\{ (\bX_i, \bY_i) \}_{i=1}^r$ by applying Algorithm \ref{al:subsampling} to $\{ (X_i,Y_i) \}_{i=1}^n$; (ii) Replace $\{ (X_i,Y_i) \}_{i=1}^n$ in Requirement of Algorithm \ref{al:KBRPF} by $\{ (\bX_i, \bY_i) \}_{i=1}^r$, and use the number $r$ instead of $n$.

\paragraph{How to select the number of subsamples.}
The number $r$ of subsamples determine the tradeoff between the accuracy and computational time of KMCF.
It may be selected by cross validation, or by measuring the approximation error $\| \bar{m}_{XY} - \hm_{XY} \|_{\H_{\X \times \Y}}$, as for the case of selecting the rank of low rank approximation in Appendix \ref{sec:low_rank}.

\paragraph{Discussion.}
Recall that Kernel Herding generates samples such that they approximate a given kernel mean (see  Section \ref{sec:kernel_herding}).
Under certain assumptions, the error of this approximation is of $O(r^{-1})$ with $r$ samples, which is faster than that of i.i.d.\ samples $O(r^{-1/2})$.
This indicates that subsamples $\{ (\bX_i, \bY_i) \}_{i=1}^r$ selected with Kernel Herding may approximate (\ref{eq:joint_embedding_estimate}) well. 
Here, however, we find the solutions of the optimization problems (\ref{eq:herding_update1}) (\ref{eq:herding_update2}) from the finite set $\{ (X_i, Y_i) \}_{i=1}^n$, rather than the entire joint space $\X \times \Y$.
The convergence guarantee is only provided for the case of the entire joint space $\X \times \Y$.
Thus for our case the convergence guarantee is no longer provided. 
Moreover, the fast rate $O(r^{-1})$ is only guaranteed for finite dimensional RKHSs. 
Gaussian kernels, which we often use in practice, define infinite dimensional RKHSs. 
Therefore the fast rate is not guaranteed if we use Gaussian kernels.
Nevertheless, we can use Algorithm  \ref{al:subsampling} as a heuristic for data reduction.

\begin{algorithm}[t]
\caption{Subsampling with Kernel Herding}
\label{al:subsampling}
\begin{algorithmic}[1]
\STATE
{\bf Input:} (i)  ${\bf D} := \{(X_i,Y_i) \}_{i=1}^n$. (ii) size of subsamples $r$.
\STATE {\bf Output:} subsamples $\bar{\bf D}_r := \{ (\bX_1,\bY_1), \dots, (\bX_r, \bY_r) \} \subset {\bf D}$. 
\\ \hrulefill
	\STATE Select $(\bX_1, \bY_1)$ as follows and let $\bar{\bf D}_1 := \{ (\bX_1, \bY_1) \}$:
	\[ (\bX_1, \bY_1) =  \argmax_{(x,y) \in {\bf D}} \frac{1}{n} \sum_{i=1}^n k_\X(x,X_i) k_\Y(y,Y_i) \]
	\FOR{$N=2$ to $r$}
		\STATE Select $(\bX_N,\bY_N)$ as follows and let $\bar{\bf D}_N := \bar{\bf D}_{N-1}  \cup \{ (\bX_N, \bY_N) \}$:
		\[ (\bX_N,\bY_N) =   \argmax_{(x,y) \in {\bf D} / \bar{\bf D}_{N-1}  }  \frac{1}{n} \sum_{i=1}^n k_\X(x,X_i) k_\Y(u,Y_i) - \frac{1}{N} \sum_{j=1}^{N-1} k_\X(x, \bX_j) k_\Y(y,\bY_j) \]
	\ENDFOR \\
\end{algorithmic}
\end{algorithm}

\end{document}